\newtheorem{theorem}{Theorem}
\newtheorem{remark}{Remark}%
\newtheorem{lemma}{Lemma}
\newtheorem{definition}{Definition}%
\newtheorem{assumption}{Assumption}%
\newcommand*\circled[1]{\tikz[baseline=(char.base)]{
            \node[shape=circle,draw,inner sep=0.5pt] (char) {#1};}}
\begin{document}

\title[Article Title]{Efficient Private SCO for Heavy-Tailed Data via Averaged Clipping}

\author[1]{\fnm{Chenhan} \sur{Jin}}\email{chjin2@math.cuhk.edu.hk}

\author[1]{\fnm{Kaiwen} \sur{Zhou}}\email{kwzhou@cse.cuhk.edu.hk}

\author[2]{\fnm{Bo} \sur{Han}}\email{bhanml@comp.hkbu.edu.hk}

\author*[1]{\fnm{James} \sur{Cheng}}\email{jcheng@cse.cuhk.edu.hk}

\author*[1]{\fnm{Tieyong} \sur{Zeng}}\email{zeng@math.cuhk.edu.hk}

\affil*[1]{\orgname{The Chinese University of Hong Kong}, \city{Hong Kong}}

\affil[2]{\orgname{Hong Kong Baptist University}, \city{Hong Kong}}


\abstract{We consider stochastic convex optimization for heavy-tailed data with the guarantee of being differentially private (DP).
Most prior works on differentially private stochastic convex optimization for heavy-tailed data are either restricted to gradient descent (GD) or performed multi-times clipping on stochastic gradient descent (SGD), which is inefficient for large-scale problems. In this paper, we consider a one-time clipping strategy and provide principled analyses of its bias and private mean estimation. We establish new convergence results and improved complexity bounds for the proposed algorithm called AClipped-dpSGD for constrained and unconstrained convex problems. We also extend our convergent analysis to the strongly convex case and non-smooth case (which works for generalized smooth objectives with H$\ddot{\text{o}}$lder-continuous gradients). All the above results are guaranteed with a high probability for heavy-tailed data. Numerical experiments are conducted to justify the theoretical improvement.}

\keywords{Differential privacy $\cdot$ Convex optimization $\cdot$ Convergence analysis $\cdot$ Stochastic gradient descent $\cdot$ Gradient clipping $\cdot$ Heavy-tailed data}



\maketitle

\section{Introduction}\label{sec1}
Stochastic Convex Optimization (SCO) \citep{1995The} and its empirical form, Empirical Risk Minimization (ERM) \citep{1995The}, have been widely used in areas such as medicine, finance, genomics, and social science. Today, machine learning tasks often involve sensitive data, which leads to privacy-preserving concerns. This means that a machine learning algorithm not only needs to learn effectively from data but also provides a certain level of privacy-preserving guarantee. As a widely accepted concept for privacy preservation,
differential privacy (DP) \citep{Dwork2006CalibratingNT} provides the provable guarantee that an algorithm learns statistical characteristics of the
population, but nothing about individuals. Differentially private algorithms have been widely studied and recently deployed in industry \citep{tang2017privacy,ding2017collecting}.

In this work, we focus on Differentially Private Stochastic Convex Optimization (DP-SCO), which is started by \cite{bassily2014private}. The problem of DP-SCO aims to find a $x^{priv}\in \mathbb{R}^d$ that minimizes the population risk, i.e.,
\begin{equation} \label{OP} 
\min_{x \in \mathbb{R}^{d}}f(x), \  f(x)=\mathbb{E}_{\xi}[f(x,\xi)], 
\end{equation}
with the guarantee of being differentially private. Here, $\xi $ is a random variable on the probability space $\Omega$ with some unknown distribution $\mathcal{P}$. The function $f(x)$ is a smooth convex loss function and takes the expectation over $\xi$. The convergence metric of an algorithm is measured by the so-called \textit{excess population risk}, that is $\left(f(x^{priv})-\min_{x \in \mathbb{R}^{d}}f(x)\right).$ Besides the population risk, we introduce the differentially private empirical risk minimization (DP-ERM) over a fixed dataset $D=\{\xi_{i}\}_{i=1}^{n}$, i.e., $\min_{x}\widehat{f}(x, D),\ \widehat{f}(x, D)=\frac{1}{n}\sum_{i=1}^{n}f(x,\xi_i).$ DP-ERM appears frequently in the literature \citep{bassily2014private,talwar2015nearly,kasiviswanathan2016efficient,wang2017differentially,zhang2017efficient,wang2021estimating}. Besides DP-ERM, \cite{bassily2014private} studied DP-SCO and achieved a sub-optimal rate. Later, \cite{bassily2019private} established a tight analysis of excess population risk. After their work, some works focus on reducing the gradient complexity and running time \citep{feldman2020private} and different geometries \citep{asi2021private,bassily2021non}.

Almost all the previous results tackle the Problem (\ref{OP}) or its empirical form by estimating the mean of a random variable (such as gradient) based on the empirical mean estimator or its variants, and then adding random noise to achieve ($\epsilon$,$\delta$)-differential privacy. They all assume that either the loss function is $O(1)$ or $O(L)$-Lipschitz or each data sample is bounded $\ell_2$ or $\ell_\infty$ norm explicitly \citep{wang2020differentially}, which
restricts the sensitivity (see Definition \ref{sensitivity}) in order to establish the DP guarantee of their algorithms. However, the quality of the empirical mean is sensitive to data outliers, especially when the data are heavy-tailed\footnote{Heavy-tailed data refers to the data where outliers can be sampled with higher probability than data called ``light-tailed'' \citep{gorbunov2020stochastic}. We do not assume the data to be ``light-tails'', i.e., sub-Gaussian distribution meaning that for random vector $\eta$ and $b\geq 0$, if there exists $\mathbb{E}[\eta]$ and variance $\sigma$, $\mathbb{P}\{\|\eta-\mathbb{E}[\eta]\|_2\geq b\}\leq 2\exp{(-b^2/(2\sigma^2))}$.} \citep{lugosi2019mean}. This situation widely exists in the real world such as in biomedical engineering and finance \citep{ibragimov2015heavy}. Heavy-tailed data could lead to a very large empirical mean and thus ruin the above assumptions. This could cause big oscillations in the convergence trajectories, which may even lead to divergence.  

Recently, to tackle the above problems, several works have been proposed to privately estimate the mean of a heavy-tailed distribution via scaling or truncation. \cite{bun2019average} proposed a private mean estimator with a trimming framework. \cite{wang2020differentially} adopted it in the private gradient descent (GD) under some strong assumptions. Later,  \cite{kamath2020private} studied private mean estimation under a general setting that the mean of the distribution has bounded $k$-th ($k \geq 2$) moments. They proposed a one-round mean estimator called CDPHDME and established its $\ell_2$-error guarantees, which hold with constant probability. By taking inspiration from the private version of the Median of Means method studied in \citep{kamath2022improved}, \cite{tao2022private} extended the private analysis to the cases where the gradient is bounded $k$-th moment, $k \in (1,2].$ It is worth noting that for the specific case where $k=2,$ their results align with each other.
Recently, \cite{das2023beyond} considered the Generalized Lipschitzness to be bounded $k$-th moments, and provided the \emph{expected} excess population risk bounds for DP-SGD algorithm in (non) convex cases. They achieved a fast running time but their results are built upon taking the expectation of all the randomness of DP-SGD. 
It is more desirable to establish excess population risk bounds that hold \emph{with high probability} as the outliers may cause large variances in the gradient-based algorithms, which cannot be captured by guarantees that hold in expectation \citep{brownlees2015empirical,wang2020differentially,gorbunov2020stochastic}.  \cite{kamath2022improved} extended the CDPHDME to a multiple-step
framework (DP-GD) and provided the statistical guarantees on excess population risk bounds with high probability. By adopting the scaling and truncation mean estimator in  \cite{holland2019robust}, \cite{wang2020differentially} established the high probability guarantees for a DP-GD algorithm in (strongly) convex cases. Using this estimator, some works focus on gradient expectation maximization \citep{wang2020differentially1} and high dimensional space \citep{hu2022high}. However, due to the usage of this mean estimator, their algorithm needs to solve an additional correction function in each iteration, which leads to a high computational cost. Moreover, in view of the prevalence of large-scale datasets in the industry, their GD-based methods have a high gradient complexity (that is, the number of computations of the called first-order oracle $\nabla f(x, \xi_i)$), which usually results in a long running time. Then, it is natural to ask:
\textit{is there any private variant of SGD that can deal with the heavy-tailed data? Can we establish its statistical guarantee with high probability?}

To answer these questions, we propose new analyses for a new private SGD algorithm that is capable of handling heavy-tailed data with high probability. We list our main contributions and also summarize the theoretical results in Table \ref{tab1}.

\begin{itemize}
\item We consider a one-time clipping strategy, referred to as AClip that clips once per iteration. Compared with the per-sample clipping strategy of DP-SGD, we explain how the number of clip operations affects the bias of an estimator (see Lemma \ref{bias}). We introduce our private mean estimator and derive its mean estimation error in a novel accumulated form (see Theorem \ref{cccc}). We provide a principled clip norm and batch size tuning strategy for AClipped-dpSGD under Assumption \ref{assump1}, in which the tuning strategy of batch size is treated as a hyperparameter in our compared works.

\item 
We provide the excess risk bounds for AClipped-dpSGD under the assumption of bounded second moment of gradients. For private constrained convex and unconstrained convex objectives, we derive the risk bounds with the expected high probability forms: $\Tilde{O}\left(\frac{d^{1/7}\sqrt{\log (n /\beta d^{2})}}{(n\epsilon)^{2/7}}\right)$ and  $\Tilde{O}\left(\frac{d^{1/7}\log (n /\beta d^{2})}{(n\epsilon)^{2/7}}\right),$ respectively (see Theorem \ref{BC} and \ref{ee}). These bounds do not require the bounded mean of gradient and outperform the prior result in \cite{wang2020differentially}. Moreover, AClipped-dpSGD is much faster than DP-SGD and DP-GD methods in terms of gradient complexity (see Table \ref{tab1}). We also provide the tuning strategies of clip norm and batch size, making AClipped-dpSGD more practical.   

\item We extend AClipped-dpSGD to the $\mu$-strongly unconstrained convex setting. For this extension, we establish
an excess population risk bound $\Tilde{O}\big(\frac{d^{1/2}L^2}{\mu^3n\epsilon}\big),$ with high probability (see Theorem \ref{h}). The result improves on the bound of $\Tilde{O}\big(\frac{d^{2}L^4}{\mu^3n\epsilon^2}\big)$ in \cite{wang2020differentially}
and also nearly matches the expectation form bound $\Tilde{O}\big(\frac{d^{1/2}L}{\mu^2n\epsilon}\big)$ in \cite{kamath2022improved}. Moreover, we do not require the bounded convex set (see Assumption \ref{assump2}) as they need.

\item We also consider the gradient satisfies the generalized smoothness condition (i.e., H$\ddot{\text{o}}$lder continuous
gradient). For this case, we derive a new high probability bound for DP SCO with heavy-tailed data and propose a novel tuning strategy of step size (see Theorem \ref{NONS}). Moreover, H$\ddot{\text{o}}$lder continuity implies the problem can be non-smooth and our analysis does not require it to hold on $\mathbb{R}^n.$
\end{itemize}

\begin{table}[h]
\centering

\caption{Summary of the (expected) excess population risk bounds (with Assumption \ref{assump1}).
The \emph{high probability bound} term refers to the excess risk bounds hold with high probability $1-\beta,$ instead of taking the expectation of randomness of algorithms.   \cite{kamath2022improved} and \cite{das2023beyond} assume that the gradients are bounded $k$-th moment, we consider $k=2$ to keep the consistency of the comparison. The complexity represents the gradient complexity (w.r.t. running time). \footnotetext{Note: 
\cite{wang2020differentially} and \cite{kamath2022improved} assume that the gradient to be coordinate-wise bounded, i.e., $\mathbb{E}_{\xi}[\|\nabla_j f(x,\xi)-\mu_j\|_2^2] \leq v$ for each coordinate $j \in [d],$ while the others consider the weaker one: $\mathbb{E}_{\xi}[\|\nabla f(x,\xi)-\mu\|_2^2]$ is bounded by $\sigma^2.$ The quoted bounds are obtained with $v=\sigma^2/d,$ and the total gradient complexity also changes accordingly.}}\label{tab1}

\begin{minipage}{\textwidth}
\begin{tabular*}{\textwidth}{c|l|l|l|c}
\toprule%
Reference  & Convex Setting & \multicolumn{1}{|c|}{risk bound} & complexity & high probability? \\
\midrule
\cite{wang2020differentially}\footnote{The original rate in \cite{wang2020differentially} shall be $\Tilde{O}\left(\frac{dv^{1/3}}{(n\epsilon^2)^{1/3}}\right)$ and $\Tilde{O}\big(\frac{d^3L^4v}{\mu^3n\epsilon^2}\big)$ for the convex and strongly convex cases, as confirmed in \cite{kamath2022improved}.}  &  Constrained & $\Tilde{O}\left(\frac{d^{2/3}\sqrt{\log (n /\beta d^{2})}}{(n\epsilon^2)^{1/3}}\right)$ & $\Tilde{O}(n^{\frac{4}{3}}d^{\frac{1}{3}})$ & Yes \\
\midrule
\cite{kamath2022improved} &Constrained & $\Tilde{O}\left(\frac{d^{1/4}\log (dn/\beta)}{(n\epsilon)^{1/2}}\right)$  &   $\Tilde{O}(n^{2}d^{-\frac{3}{2}})$& Yes \\
\midrule
This work \footnote{The three works all assume explicitly or implicitly that the gradient (i.e., $\|\mathbb{E}_{\xi}[\nabla f(x,\xi)]\|_2$) has a bounded mean which we do not. Such an assumption is not always satisfied in heavy-tailed data \citep{cohen2020heavy}.} & Constrained & $\Tilde{O}\left(\frac{d^{1/7}\sqrt{\log (n /\beta d^{2})}}{(n\epsilon)^{2/7}}\right)$  &   $\Tilde{O}\left(n^{\frac{6}{7}}d^{\frac{4}{7}}\right)$& Yes \\
\midrule
\cite{das2023beyond}\footnote{The quoted complexity bound is obtained by using the minimum batch size, i.e., batch size $m=1.$ \cite{das2023beyond} do not provide any theoretical insights into how to set the batch size $m$ but we do.} & Unconstrained & $\Tilde{O}\left(\frac{d^{1/6}}{\beta^{1/2}(n\epsilon)^{1/3}}\right)$  &  $\Tilde{O}\left(n^2\right)$ & No \\
\midrule
This work & Unconstrained & $\Tilde{O}\left(\frac{d^{1/7}\log (n /\beta d^{2})}{(n\epsilon)^{2/7}}\right)$  &   $\Tilde{O}\left(n^{\frac{6}{7}}d^{\frac{4}{7}}\right)$& Yes \\
\botrule
\end{tabular*}
\end{minipage}

\end{table}



\section{Preliminaries}\label{pre}

In this section, we provide the necessary background for our analyses, including differential privacy and
basic assumptions in convex optimization.
\subsection{Settings}
\begin{definition}
	($L$-smoothness \citep{nesterov2018lectures}). A differentiable function $f$ is called $L$-smoothness on $\mathbb{R}^d$ if for all $x,y \in \mathbb{R}^d$, it holds that, $ f(y) \leq f(x) + \langle \nabla f(x), y-x \rangle + \frac{L}{2}\|y-x\|_2^2.$ 
\end{definition}
Additionally, if $f$ is convex, then for all $x,y \in \mathbb{R}^d,$ the following holds \citep{nesterov2018lectures}: 
\begin{equation*}
\|\nabla f(y) - \nabla f(x)\|_2^2 \leq 2L(f(x)-f(y)-\langle \nabla f(y), x-y \rangle).
\end{equation*}
\begin{definition}
	($\mu$-strongly convex \citep{nesterov2018lectures}). A differentiable function ff is called $\mu$-strongly convex on $\mathbb{R}^d$ if for all $x,y \in \mathbb{R}^d$, it holds that, $f(y) \geq f(x) + \langle \nabla f(x), y-x \rangle + \frac{\mu}{2}\|y-x\|_2^2.$ 
\end{definition}
\begin{definition}
    The Projection operator on a convex set $\mathcal{X}$ for any $\theta \in \mathbb{R}^{d}$ is defined as $\text{Proj}_{\mathcal{X}}(\theta) = \arg \min_{x\in \mathcal{X}}\|\theta -x\|_2 .$  
\end{definition}
\subsection*{Gradient clipping} 
We consider a clipping of the averaged gradients framework: 
\begin{equation} \label{clip}
\begin{split}
&\text{AClip}(\nabla f(x,\bm{\xi}),\lambda):\\
&\widehat{\nabla} f(x,\bm{\xi})=\begin{cases}
\nabla f(x,\bm{\xi}), &\text{if } \|\nabla f(x,\bm{\xi})\|_{2}\leq \lambda,\\    	
\frac{\lambda}{\|\nabla f(x,\bm{\xi})\|_{2}} \nabla f(x,\bm{\xi}), &\text{otherwise,}
\end{cases}
\end{split}
\end{equation}
where $\nabla f(x,\bm{\xi})$ is a mini-batched version of $\nabla f(x)$. 

We refer to this strategy as AClip (Averaged clipping). That is, in order to compute $\text{AClip}(\nabla f(x,\bm{\xi}),\lambda),$
one needs to get $m$ i.i.d. samples $\nabla f(x,\xi_{1}) \ldots,\nabla f(x,\xi_{m})$ and compute its average $\nabla f(x,\bm{\xi})=\frac{1}{m}\sum_{i=1}^{m}\nabla f(x,\xi_{i}).$

Note that this method is a strategy of averaging first. It is quite different from the one adopted in DP-SGD \citep{abadi2016deep,das2023beyond}, in which they first clip the per-sampled gradient and then average the clipped gradients, i.e., $\frac{1}{m}\sum_{i \in m_t}\min\{1,\frac{\lambda}{\|\nabla f(x,\xi)\|_{2}} \}\nabla f(x,\xi),$ where $m_t$ is a sampled mini-batch at $t$-th iteration.  The detailed discussion can be found in Section \ref{clip discuss}.
\begin{assumption} \label{assump1}
	For the loss function and the population risk, we assume the following:
	
	1. The loss function $f(x,\xi)$ is non-negative, differentiable and convex.
	
	2. The population risk $f(x)$ is $L$-smooth, and satisfies $\nabla f(x^*)= \mathbf{0}$ at the optimal solution $x^*.$

	
	3. Throughout the paper, we assume that for all $x \in \mathbb{R}^d,$ the stochastic gradient $\nabla f(x,\xi)$ of function $f(x,\xi)$  satisfies:
	\begin{equation} \label{scg}
	    \mathbb{E}_{\xi}[\nabla f(x,\xi)]=\nabla f(x),\ \mathbb{E}_{\xi}\left[\|\nabla f(x,\xi)-\nabla f(x)\|_{2}^{2}\right] \leq \sigma^{2},
\end{equation}
	where $\sigma$ is some non-negative known number.
\end{assumption}

We point out that the first two assumptions are standard in (private) SCO or ERM \citep{ghadimi2012optimal,wang2020differentially,gorbunov2020stochastic,kamath2022improved,tao2022private}. The third one assumes the stochastic gradient $\nabla f(x, \xi)$ (can be heavy-tailed) is unbiased and has a bounded second moment, which is also commonly used for heavy-tailed data. Moreover, most of the above works additionally assume that loss function $f(x, \xi)$ is convex within a bounded set ($x \in \mathcal{W}$) or 
$\mathbb{E}_{\xi}[\nabla f(x,\xi)]$ is known or bounded, while we focus on the unconstrained case, i.e., $ \mathcal{W} = \mathbb{R}^d,$ which is more practical and harder for analysis.

\subsection{Differential privacy}
\begin{definition}
	(Differential Privacy \citep{Dwork2006CalibratingNT}). We say that two datasets D and D' are neighbors if they differ by only one entry, denoted as $\sim D'.$ An algorithm $\mathcal{A}$ is $(\epsilon,\delta)$-differentially private if and only if for all neighboring datasets D, D' and for all events S in the output space of $\mathcal{A},$ we have	
	$P(\mathcal{A}(D)\in S)\leq e^{\epsilon}P(\mathcal{A}(D')\in S)+\delta,$
	where $\delta=0$ and $\mathcal{A}$ is $\epsilon$-differentially private. 	
\end{definition}

The additive term $\delta$ (preferably smaller than $1/|D|$) is called the broken probability of $\epsilon$-differential privacy \citep{Dwork2006CalibratingNT}. The core concept and fundamental tools used in DP are sensitivity and Gaussian mechanism, introduced as follows.

\begin{definition} \label{sensitivity}
	($\ell_2$-sensitivity \citep{dwork2014algorithmic}). The $\ell_2$-sensitivity of a deterministic query $q(\cdot)$ is defined as $\Delta_2 (q) = \sup_{D \sim D'}\|q(D)-q(D')\|_2.$
\end{definition}

\begin{definition}
	(Gaussian Mechanism \citep{dwork2014algorithmic}). Given any function $q: \Omega \to \mathbb{R}^d$, the Gaussian Mechanism is defined as $\mathcal{M}_G (D,q,\epsilon)=q(D)+Y,$
	where $Y$ is drawn from Gaussian Distribution $\mathcal{N}(0,\sigma'^2 I_p)$ with $\sigma' \geq \frac{\sqrt{2\ln(1.25/\delta)}\Delta_2 (q)}{\epsilon}.$ 
\end{definition}

Gaussian Mechanism preserves $(\epsilon,\delta)$-differential privacy and is popular in the study of DP-SCO and DP-ERM \citep{bassily2014private,wang2018empirical,bassily2019private,wang2020differentially}.

\section{Mean estimation oracle with clipping} \label{clip discuss}
In this section, we introduce our gradient estimator based on the AClip strategy, which privately estimates the mean of a heavy-tailed distribution with a high probability guarantee. Before presenting our result, we first discuss the bias of some simple clipped methods.
\subsection{Bias of clipped methods}
We first recall the clipping strategy \footnote{the strategies in \citep{wang2020differentially} and \citep{kamath2022improved} are complex and very different from DP-SGD. They both need to estimate $n$ times of gradient per iteration instead of a mini-batch, which usually results in a high gradient complexity.} of the famous DP-SGD algorithm \citep{abadi2016deep,das2023beyond} in updating: 

\begin{equation} \label{clip_dpsgd}
    x^{k+1}:= x^k - \gamma\bigg(\underbrace{\frac{1}{m}\sum_{i \in m^k}\text{clip}(\nabla f(x^k,\xi^k),\lambda)}_{\text{gradient estimator:} f^k}+z^k \bigg),\ \lambda > 0,
\end{equation}
where the clipping operates on each random sample, i.e., $\text{clip}(\nabla f(x,\xi),\lambda):= \min\{1,\frac{\lambda}{\|\nabla f(x,\xi)\|_{2}} \}\nabla f(x,\xi),\ z^k$ is a Gaussian noise whose variance is proportional to $\lambda^2$ and the amount the privacy required.

\cite{abadi2016deep} analyze the privacy loss of DP-SGD by the so-called "Moment Account" technique. They did not analyze the choice of clipping level $\lambda$. In fact, The clipping makes $f^k$ the biased gradient estimator of $\nabla f(x^k),$ and the amount of bias depends on the clipping level $\lambda,$ the higher we set $\lambda,$ the lower is the bias, and vice-versa. Later on, \cite{das2023beyond} analyze the clip bias of $\text{clip}(\nabla f(x,\xi),\lambda)$ thus one can get the bias bound of the gradient estimator $f^k$ to be $O(\frac{\sigma^4}{\lambda^2})$.

Note that the updating (\ref{clip_dpsgd}) performs the strategy of per-sample gradient clipping, i.e., one needs to do $m$ times clipping each iteration. However, the amount of bias also depends on the complexity of clipping operations- the more times we clip, the higher is the bias. This inspires us to consider a one-time clipping strategy of AClip (\ref{clip}), in which we clip once per iteration. The bias bound of AClip strategy can be summarized as follows:
\begin{lemma} \label{bias}
    In the setting of Assumption \ref{assump1}, let $b(\nabla f(x,\bm{\xi}),\lambda)$ denotes the non-average of AClip, i.e.,
    $$b(\nabla f(x,\bm{\xi}),\lambda):=\mathbb{E}_{\bm{\xi}}\left[\min\{1,\frac{m\lambda}{\| \sum_{i=1}^{m}\nabla f(x,\xi_{i})\|_{2}} \} \left(\sum_{i=1}^{m}\nabla f(x,\xi_{i})\right)\right].$$
    If $\|\nabla f(x)\|_2 \leq \lambda / 2$ holds, then
$$ \left\|\mathbb{E}_{\bm{\xi}}\left[b(\nabla f(x,\bm{\xi}),\lambda)\right]-m\nabla f(x)\right\|_{2}^{2} \leq \frac{4\sigma^4}{ \lambda^2}\  \text{and}\ \left\|\mathbb{E}_{\bm{\xi}}\left[\widehat{\nabla}f(x,\bm{\xi})\right]-\nabla f(x)\right\|_{2}^{2} \leq \frac{4\sigma^{4}}{m^2\lambda^2}.$$
\end{lemma}
\begin{proof}
    We derive the upper bound of bias of AClip$(\nabla f(x,\bm{\xi}),\lambda),$ denoted as $\widehat{\nabla}f(x,\bm{\xi})$: 
    \begin{equation} \label{bias bound}
        \begin{split} 
        &\left\|\mathbb{E}_{\bm{\xi}}\left[\widehat{\nabla}f(x,\bm{\xi})\right]-\nabla f(x)\right\|_{2}=\left\|\mathbb{E}_{\bm{\xi}}\left[\widehat{\nabla}f(x,\bm{\xi}) - \nabla f(x,\bm{\xi})  \right]\right\|_{2} \\
        ={}& \mathbb{E}_{\bm{\xi}}\left[\left|\|\nabla f(x,\bm{\xi})\|_2 - \lambda \right|\mathbbm{1}_{\{\|\nabla f(x,\bm{\xi})\|_2 \geq \lambda\}} \right] \\
        \leq{}& \mathbb{E}_{\bm{\xi}}\left[\left|\|\nabla f(x,\bm{\xi})\|_2 - \lambda \right|\mathbbm{1}_{\{\|\nabla f(x,\bm{\xi}) - \nabla f(x)\|_2 \geq \lambda / 2\}} \right]\\ \leq{}&\mathbb{E}_{\bm{\xi}}\left[\|\nabla f(x,\bm{\xi}) - \nabla f(x) \|_2 \mathbbm{1}_{\{\|\nabla f(x,\bm{\xi}) - \nabla f(x)\|_2 \geq \lambda / 2\}}\right] \\
        \overset{(1)}{\leq}& \sqrt{\mathbb{E}_{\bm{\xi}}\left[\|\nabla f(x,\bm{\xi}) - \nabla f(x) \|_2^2\right] \mathbb{P}\left\{\|\nabla f(x,\bm{\xi}) - \nabla f(x)\|_2 \geq \lambda / 2 \right\}}\\
        \leq{}&  \sqrt{\frac{\sigma^2}{m} \mathbb{P}\left\{\|\nabla f(x,\bm{\xi}) - \nabla f(x)\|_2 \geq \lambda / 2 \right\}}
        \leq{}  \frac{2\sigma^2}{m \lambda}
        \end{split}
    \end{equation}
    where $(1)$ uses the Cauchy–Schwarz inequality.
    The last inequality follows by the Markov's inequality:
    $$\mathbb{P}\left\{\|\nabla f(x,\bm{\xi}) - \nabla f(x)\|_2 \geq \lambda / 2 \right\} \leq \frac{\mathbb{E}_{\bm{\xi}}\|\nabla f(x,\bm{\xi}) - \nabla f(x)\|_2^2}{(\lambda / 2)^2} \leq \frac{4\sigma^2}{m\lambda^2}.$$
    Moreover, note $\widehat{\nabla}f(x,\bm{\xi}):=\min\{1,\frac{\lambda}{\|\nabla f(x,\bm{\xi})\|_{2}} \}\nabla f(x,\bm{\xi}),$ we have the following by multiplying $m$ at the both side of inequality (\ref{bias bound}):
    $$\left\|\mathbb{E}_{\bm{\xi}}\left[\min\{m,\frac{m\lambda}{\|\nabla f(x,\bm{\xi})\|_{2}} \}\nabla f(x,\bm{\xi})\right]-m\nabla f(x)\right\|_{2} \leq \frac{2\sigma^2}{ \lambda}.$$
    This is equal to 
    $$\left\|\mathbb{E}_{\bm{\xi}}\left[\min\{1,\frac{m\lambda}{\| \sum_{i=1}^{m}\nabla f(x,\xi_{i})\|_{2}} \} \left(\sum_{i=1}^{m}\nabla f(x,\xi_{i})\right)\right]-m\nabla f(x)\right\|_{2} \leq \frac{2\sigma^2}{ \lambda}.$$
    We get the desired result. Similarly, we can also bound the variance of AClip$(\nabla f(x,\bm{\xi}),\lambda):$ 
    \begin{equation} \label{variance bound} 
        \begin{split} &{}\mathbb{E}_{\bm{\xi}}\left\|\widehat{\nabla}f(x,\bm{\xi})-\mathbb{E}_{\bm{\xi}}\left[\widehat{\nabla}f(x,\bm{\xi})\right]\right\|_{2}\leq{}\mathbb{E}_{\bm{\xi}}\left\|\widehat{\nabla}f(x,\bm{\xi})-\nabla f(x)\right\|_{2}\\
        ={}&\mathbb{E}_{\bm{\xi}}\left[\left\|\frac{\lambda\nabla f(x,\bm{\xi})}{\|\nabla f(x,\bm{\xi})\|_2} - \nabla f(x)\right\|_2 \mathbbm{1}_{\{\|\nabla f(x,\bm{\xi})\|_2 \geq \lambda\}}  \right] \\
        &{}+ \mathbb{E}_{\bm{\xi}}\left[\left\|f(x,\bm{\xi}) - \nabla f(x)\right\|_2 \mathbbm{1}_{\{\|\nabla f(x,\bm{\xi})\|_2 < \lambda\}}  \right]\\
        \leq{}& \mathbb{E}_{\bm{\xi}}\left[\left(\left\|\frac{\lambda\nabla f(x,\bm{\xi})}{\|\nabla f(x,\bm{\xi})\|_2}\right\|_2 + \left\|\nabla f(x)\right\|_2\right) \mathbbm{1}_{\{\|\nabla f(x,\bm{\xi})-\nabla f(x)\|_2 \geq \lambda /2\}} \right] \\
        &{}+ \mathbb{E}_{\bm{\xi}}\left[\left\|f(x,\bm{\xi}) - \nabla f(x)\right\|_2\right] \\
        \leq{}& \sqrt{\mathbb{E}_{\bm{\xi}}\left[2\left\|\frac{\lambda\nabla f(x,\bm{\xi})}{\|\nabla f(x,\bm{\xi})\|_2}\right\|_2^2 + 2\left\|\nabla f(x)\right\|_2^2\right] \mathbb{P}\left\{\|\nabla f(x,\bm{\xi}) - \nabla f(x)\|_2 \geq \lambda / 2 \right\}} \\
        &{}+ \frac{\sigma}{\sqrt{m}}\\
        \leq{}&  \sqrt{\left(2\lambda^2 + \frac{\lambda^2}{2}\right)\frac{4\sigma^2}{m\lambda^2} }  + \frac{\sigma}{\sqrt{m}}\\
        \leq{}&  \frac{\sqrt{18}\sigma}{\sqrt{m}},
        \end{split}
    \end{equation}
    where the third inequality follows by the Cauchy–Schwarz inequality and the fact $\|a+b\|_2^2 \leq 2\|a\|_2^2 + 2\|b\|_2^2, \forall a,b \in \mathbb{R}^n.$
    It is worth mentioning here that the results similar to the above proofs have been established in Lemma F.5 of \cite{gorbunov2020stochastic}.
\end{proof}
The results show that AClip strategy has a diminishing bias with batch size. Prior works on the bias of $f^k$ in DP-SGD \citep{das2023beyond} are proportional to $\frac{m\sigma^2}{\lambda}$ and $\frac{\sigma^2}{\lambda}$ for no averaged and averaged cases, which our results improved on.
An intuitive perspective of our advantage is with respect to Gradient Descent (GD). GD achieves a fast linear convergence rate by generating a step towards the direction of the averaged gradient. The AClip strategy has a similar behavior to GD.
That is, at the $k$-th iteration, the final direction of $f^k$ is much closer to the "effect direction" made by GD. 

\subsection{Private mean estimator} \label{TD}
 We employ the AClip strategy in the updating of Algorithm \ref{algorithm}, which can privately estimate the mean of the heavy-tailed data with high probability. Our mean estimator is given as follows:

\begin{equation} \label{estimator}
\widetilde{\nabla} f(x,\bm{\xi}) = \widehat{\nabla} f(x,\bm{\xi}) + z, 
\end{equation}
where $\widehat{\nabla} f(x,\bm{\xi})$ refers to AClip (\ref{clip}) and $z$ is the injected Gaussian random  noise. 
A key observation is that the clipping technique also makes the $\ell_2$-sensitivity to be bounded by $\lambda$. The privacy loss can also satisfy the Moment Account technique \citep{abadi2016deep}. That is, the mean estimator (\ref{estimator})
will be ($\epsilon, \delta$)-differential privacy if we set $z\sim \mathcal{N}(0,\sigma'^{2} I^d), \sigma'=O\left(\frac{ \lambda\sqrt{\ln(1/\delta)}}{\epsilon}\right),$ and by the amplification and composition theorems \citep{Dwork2006CalibratingNT}, the whole algorithm is still differentially private. 

The estimator (\ref{estimator}) has a lower per iteration cost compared with GD-based methods and lower bias compared with DP-SGD method, but its estimation error under high probability is hard to analyze.  \cite{gorbunov2020stochastic} implicitly analyze the behavior of AClip in non-private SGD. Inspired by their work, we provide a novel analysis to decouple and embed the estimator for private convergence analysis, which differs from theirs in two crucial parts:

Firstly, Lemma \ref{bias} relies on the condition of $\|\nabla f(\cdot)\|_2$, which implies that the boundedness should be satisfied at each step in a multi-step algorithm. However, bounding the injected noise $z^k$ and its related terms needs extra work. Specifically, if we naively follow their framework, we will get an extra inner product of Gaussian noise and $x^k$ which needs to be bounded, thus we will obtain a loose bound (an extra $O\left(\frac{N^{5/2}d}{\lambda^2n^2\epsilon^2}\right)$ term) on the excess population risk. To fit our needs, we decouple their analysis to exclude all noisy inner product terms and enable to embedding a private mean estimator.

Secondly, one can usually improve the convergence rate with some extra assumptions, e.g., loss function $f$ is strongly convex. Deriving an improved rate is non-trivial since the clipping operation prevents the bias bound of a mean estimator from converging to zero. A practical method is to use the restarts technique, as considered in \cite{gorbunov2020stochastic}. However, their analysis fails to work in the private setting. In particular, if we naively follow their analysis, the privacy term in the excess risk, i.e., $O\left(\frac{N^{5/2}d}{\lambda^2n^2\epsilon^2}\right)$ where $\lambda=4L\|x^0-x^*\|_2^2, \gamma = \frac{1}{70L\ln\frac{4N}{\beta}}$ required by their analysis, cannot be upper bounded by $\frac{2}{\mu}(f(x^0)-f(x^*)),$ 
which ruins the condition of using the restarts technique. Thus, we reconstruct the whole analysis to handle this issue.

Before we provide our convergence analysis, we first introduce the following private mean estimation error:


\begin{theorem} \label{cccc}
	If $\|\nabla f(x^{k})\|_{2} \leq \frac{\lambda}{2}$ holds for all iteration $k \geq 0,$ and for all $\beta \in (0,1), N \geq 1,$ and the injected Gaussian random noise variance $\widehat{\sigma}^2I_d,$ we have
	for all $0\leq k \leq N-1,$ with probability  at least $1-\beta,$ if batch size $m$ is set to be $\max\left\{1,\frac{162N^2\sigma^2}{\lambda^2(\ln \frac{4}{\beta})^2}\right\},$ then
	\begin{equation}
	\begin{split}
	&{}\sum_{t=0}^{k}\|\widetilde{\nabla} f(x^{t},\bm{\xi}^{t})-\nabla f(x^{t})\|_2 \leq \\ &{}\lambda\left(4\ln\frac{4}{\beta}+\frac{\ln \frac{4}{\beta}}{3}+\frac{2\ln^2\frac{4}{\beta}}{81N
	}+\frac{k\widehat{\sigma}\sqrt{16d \ln\frac{4k}{\beta}}}{\lambda}\right).	
	\end{split}
	\end{equation}

\end{theorem}
\begin{proof}
    The proof of the above theorem is complicated and also requires an extra definition. In order to maintain logical coherence, we defer the proof of this theory to Appendix \ref{appendix for thm1}. 
\end{proof}
The constraint on $\nabla f(x^{k+1})$ comes from Lemma \ref{bias}. It can be automatically satisfied when embedding such mean estimation error in the analyses of Theorems \ref{BC} and \ref{ee}.
It is worth mentioning that the error bound is an accumulated form with respect to $k$ rounds. However, the accumulated bias from clipping prevents the above mean estimation error from converging to zero. This is why we consider the restarting technique for a strongly convex case. Note that the corresponding privacy can be guaranteed by the Advanced Composition Theorem \citep{dwork2014algorithmic}. 
\section{Convergence of algorithms for DP-SCO}
In this section, we provide our main Algorithm \ref{algorithm}, AClipped-dpSGD, and establish its convergence results (i.e., excess population risk bounds) under (strongly) convex and (non)-smooth objectives. 

\begin{algorithm}[!]
	\caption{Averaged Clipping DP-SGD (AClipped-dpSGD)}
	\label{algorithm}
	\textbf{Input}: data $\{\bm{\xi}_{i}\}_{i=1}^{n}$,  starting point $x^{0} \in \mathcal{X}$, number of iterations $N$, batch size $m$, stepsize $\gamma >0$, clipping level $\lambda >0.$
	\begin{algorithmic}[1] 
		\FOR{$k=0$ {\bfseries to} $N-1$}
		\STATE Draw a random batch $m_k$ with independently sampling probability $m/n$ and compute $\nabla f(x^{k} ,\bm{\xi}^{k})$ according to $\nabla f(x,\bm{\xi})=\frac{1}{m}\sum_{m_k}\nabla f(x,\xi_{i}).$
 		\STATE Compute $\widetilde{\nabla}\; f(x^{k},\bm{\xi}^{k})=\text{AClip}(\nabla f(x^{k},\bm{\xi}^{k}),\lambda)+z^{k}$, where $z^{k}\sim \mathcal{N}(0,\widehat{\sigma}^2 I_{d}).$
		\STATE  $x^{k+1}= \text{Proj}_{\mathcal{X}}(x^{k}-\gamma\widetilde{\nabla}
		f(x^{k},\bm{\xi}^{k})$). (Note that $\text{Proj}_{\mathbb{R}^d}(x) = x$ )
		\ENDFOR
		\STATE \textbf{return} $\bar{x}^{N}=\frac{1}{N}\sum_{k=0}^{N-1}x^{k}.$
	\end{algorithmic}
\end{algorithm}

Algorithm \ref{algorithm} is a type of gradient perturbation mechanism different from DP-SGD in clip operation. We clip the averaged gradients instead of multi-times clipping per iteration. Privacy can be guaranteed in nearly the same way: 
\begin{theorem} \label{PG}
	(Privacy guarantee). For  $\epsilon \leq c_1 \frac{Nm^2}{n^2}$ with some constant $c_1$, Algorithm $\ref{algorithm}$ is $(\epsilon, \delta)$-differentially private for any $\delta > 0$ if
	\begin{equation} \label{777}
	\widehat{\sigma} = c \frac{\lambda m \sqrt{T\ln(1/\delta)}}{n\epsilon},
	\end{equation} 
	for some constant $c$.
\end{theorem}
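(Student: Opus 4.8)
The plan is to recognize Algorithm~\ref{algorithm} as an $N$-fold composition of subsampled Gaussian mechanisms (with $T=N$ the number of iterations) and to invoke the moments-accountant guarantee of Abadi et al.~\cite{abadi2016deep}. First I would pin down the per-iteration $\ell_2$-sensitivity. Since $clip(\cdot,\lambda)$ is the projection onto the Euclidean ball of radius $\lambda$, the clipped mini-batch gradient $clip(\nabla f(x^k,\bm{\xi}^k),\lambda)$ always has norm at most $\lambda$, regardless of how large the underlying heavy-tailed gradients are. Fix two neighboring datasets $D\sim D'$ differing in a single record $i_0$, and fix the batch indices. If $i_0$ is absent from the batch, the clipped query is identical on $D$ and $D'$; if $i_0$ is present, the two clipped vectors both lie in the ball of radius $\lambda$, so their difference has norm at most $2\lambda$. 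Hence the sensitivity of the clipped gradient is $\Delta_2=2\lambda$ precisely on the (low-probability) event that the differing record is sampled.

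Next I would account for the subsampling: each iteration draws $m$ i.i.d.\ samples from the $n$-point dataset, so the effective sampling rate is $q=m/n$, and by the analysis above the differing record affects the output only on an event of probability $\Theta(q)$. Each step is therefore a subsampled Gaussian mechanism with noise level $\widehat{\sigma}$ and sensitivity $2\lambda$, i.e.\ noise multiplier $\widehat{\sigma}/(2\lambda)$, and the released iterate sequence is the composition of $N$ such steps. I would then apply the moments-accountant composition theorem of~\cite{abadi2016deep}: there exist absolute constants $c_1,c_2$ such that, in the amplification regime $\epsilon\le c_1 q^2 N = c_1\frac{Nm^2}{n^2}$, this composition is $(\epsilon,\delta)$-differentially private whenever $\widehat{\sigma}/(2\lambda)\ge c_2\, q\sqrt{N\ln(1/\delta)}/\epsilon$. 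Substituting $q=m/n$ and $T=N$ and rearranging gives exactly $\widehat{\sigma}=c\,\frac{\lambda m\sqrt{T\ln(1/\delta)}}{n\epsilon}$ with $c=2c_2$, which reproduces both the stated condition on $\epsilon$ and the claimed noise formula~\eqref{777}. Finally, I would close by post-processing: the update in OPTION I (and the projection $\text{Proj}_{\mathcal{X}}$ in OPTION II) together with the averaging $\bar{x}^N=\frac1N\sum_{k=0}^{N-1}x^k$ are deterministic functions of the privatized gradients $\{\widetilde{\nabla} f(x^k,\bm{\xi}^k)\}_{k}$, so releasing $\bar{x}^N$ inherits the $(\epsilon,\delta)$-DP guarantee.

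The main obstacle is the sensitivity computation in the heavy-tailed regime. Unlike the per-sample clipping of Abadi et al., here clipping is applied to the \emph{mini-batch average}, so one is tempted to use non-expansiveness of the clip map to bound the sensitivity by $\frac1m\|\nabla f(x,\xi_{i_0})-\nabla f(x,\xi_{i_0}')\|_2$; but this is vacuous, since under Assumption~\ref{assump1} the individual stochastic gradients have only bounded second moment and can be arbitrarily large. One must therefore settle for the crude-but-valid bound $\Delta_2=2\lambda$ coming purely from the radius of the clipping ball. This is exactly why the noise $\widehat{\sigma}$ scales \emph{linearly} in the batch size $m$ (entering through the amplification rate $q=m/n$) rather than inversely, and correctly tracking this $m$-dependence, while matching the constants to the amplification regime $\epsilon\le c_1 Nm^2/n^2$ under which the moments accountant yields the tight $\sqrt{\ln(1/\delta)}/\epsilon$ factor, is the crux of the argument.
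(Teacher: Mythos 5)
Your proposal is correct and follows essentially the same route as the paper: both reduce each iteration to a subsampled Gaussian mechanism with sampling rate $q=m/n$ and per-step $\ell_2$-sensitivity bounded by the clipping radius (a constant multiple of $\lambda$), and both invoke the moments-accountant composition and tail bounds of Abadi et al.\ to obtain the stated regime $\epsilon\le c_1 Nm^2/n^2$ and the noise level $\widehat{\sigma}=c\lambda m\sqrt{T\ln(1/\delta)}/(n\epsilon)$. The paper simply unpacks the accountant explicitly (bounding $\alpha_{\mathcal{M}_k}(\omega)$ via Lemma 3 of Abadi et al.\ and then applying their Theorem 2.2), whereas you cite the packaged statement; your observation that batch-average clipping forces the crude sensitivity bound of twice the clipping radius, rather than a per-sample non-expansiveness bound, matches the paper's treatment.
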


\begin{proof}
    Suppose $D$ and $D'$ be the neighbouring datasets drawn from a distribution $\mathcal{P}.$  Let $M$ be a sample from $[n]$ and each $i\in [n]$ is chosen independently with probability $\frac{m}{n}.$ Then the mechanism $\mathcal{M}(D)=\widehat{\nabla}f(x,D)+\mathcal{N}(0,\sigma^2\bm{I}^d),$ where  $\widehat{\nabla}f(x,D)=\frac{1}{m}\sum_{i\in M}\nabla f(x,D_i)/ \max \left(1,\frac{\|\frac{1}{m}\sum_{i\in M}\nabla f(x,D_i)\|_2}{\lambda}\right).$ 

Without loss of generality, we assume $\lambda =1$, i.e., $\|\widehat{\nabla}f(x,\cdot)\|_2 \leq 1.$
Now, consider the $k$-th query,

$$\mathcal{M}_k=\widehat{\nabla}f(x^k,D^k)+\mathcal{N}(0,\sigma^2\bm{I}^d).$$

By Theorem 2.1 in \cite{abadi2016deep}, we have $\alpha_{\mathcal{M}}(\omega) \leq \sum_{k=0}^{N-1}\alpha_{\mathcal{M}_k}(\omega),$ where the positive integer $\omega \leq \widehat{\sigma}^2\ln m/(n\widehat{\sigma}).$ 
Now we bound $\alpha_{\mathcal{M}_i}(\omega).$ Note that the $\ell_2$-sensitivity of $\widehat{\nabla}f(x,\cdot)$ is $\|\widehat{\nabla}f(x,D)-\widehat{\nabla}f(x,D')\|_2 \leq 1.$ Thus we can follow the way of Lemma 3 in \cite{abadi2016deep} with $q=\frac{m}{n},$ and  $\nabla f(x,D_n)=m\bm{e}_1$ and $\sum_{i\in M \setminus \{n\}}f(x,D_i)=\bm{0},$ by fixing $D'$ and letting $D=D'\cup \{D_n\}.$ For some constant $c_1,$ we have $$\alpha_{\mathcal{M}_k}(\omega)\leq c_1\frac{m^2\omega^2}{n^2\widehat{\sigma}^2}+O\left(\frac{m^3\omega^3}{n^3\widehat{\sigma}^3}\right).$$

After $N$ iterations, we have that for some $c_1,$
$$\alpha_{\mathcal{M}}(\omega) \leq \sum_{k=0}^{N-1}\alpha_{\mathcal{M}_k}(\omega) \leq c_1\frac{Nm^2\omega^2}{n^2\widehat{\sigma}^2}.$$ 

To be ($\epsilon,\delta$)-differentially private, using Theorem 2.2 in \cite{abadi2016deep}, it suffices that 
$$c_1\frac{Nm^2\omega^2}{n^2\widehat{\sigma}^2} \leq \frac{\omega\epsilon}{2}$$
and $$\exp{\frac{-\omega\epsilon}{2}}\leq \delta.$$

In addition, we need $$ \omega \leq \widehat{\sigma}^2\ln m/(n\widehat{\sigma}).$$

It can be verified that when $\epsilon \leq c_2\frac{Nm^2}{n^2}$ for some $c_2,$ we can set
$$\widehat{\sigma}=c\frac{m\sqrt{N\ln(1/\delta)}}{n\epsilon}$$
to satisfy all the conditions for some $c.$
Therefore, $N$-fold queries
$$\mathcal{M}_k=\widehat{\nabla}f(x^k,D^k)+\mathcal{N}(0,\sigma^2\bm{I}^d).$$
 will guarantee ($\epsilon,\delta$)-differentially private for $\epsilon \leq c_2\frac{Nm^2}{n^2}.$
\end{proof}
The constraint on $\epsilon$ remains the same as DP-SGD. Note that the variance depends on the batch size $m$ while \cite{das2023beyond} treat it as a constant hyperparameter because the analysis of $m$ related terms can be avoided by taking the expectation of randomness of DP-SGD.

\subsection{Convex case}
In this section, we consider the cases that  $f(x)$ is convex and $L$-smooth. We provide convergence results for AClipped-dpSGD with the preferred high probability. We will show that AClipped-dpSGD is faster than DP-GD and DP-SGD in terms of running time to achieve the excess population risk bounds.
We first consider the convex case within the constraint of a bounded set, i.e., $\mathcal{X}$ is bounded:
\begin{assumption} (Bounded set). $\label{assump2}$
	The convex set $\mathcal{X}$ is defined as the following $\ell_2$-ball centered at optimal solution $x^*\ \text{of}\  (\ref{OP}):\{\|x-x^{*}\|_{2}\leq \|x^{0}-x^{*}\|_{2}\}.$
\end{assumption} 

This assumption is also required in DP-SCO \citep{wang2020differentially,kamath2022improved,wang2020differentially1}. Under this assumption, we establish the excess population risk bound of Algorithm \ref{algorithm} in the following theorem.

\begin{theorem} (Constrained convex case). \label{BC}
	Under Assumption $\ref{assump1}$ and $\ref{assump2}$, if we take $N=\Tilde{O}\left(\frac{n\epsilon}{\sqrt{d\ln\frac{1}{\delta}}}\right)^{\frac{2}{7}},$ then for all $\beta \in (0,1)$, 
	we have that after $N$ iterations of AClipped-dpSGD with
	$$\lambda= 2LR,\ m=\max\left\{1,\frac{81N^2 \sigma^2}{2L^2R^2\left(\ln \frac{4}{\beta}\right)^2}\right\},$$
	where the known constant $R \propto \|x^{0}-x^{*}\|_{2}$, and stepsize $\gamma = \frac{1}{2L\ln \frac{4}{\beta}},$
	with probability at least $1-\beta$, the following holds
	\begin{equation} \label{CB}
	f(\bar{x}^{N})-f(x^{*}) \leq  \Tilde{O}\left(\frac{ \left(d\ln\frac{1}{\delta}\right)^{\frac{1}{7}}\sqrt{\ln\frac{(n\epsilon)^{2}}{\beta d }}}{(n\epsilon)^{\frac{2}{7}}}\right),
	\end{equation}
	when $n \geq \Tilde{\Omega}\big(\frac{\sqrt{d}}{\epsilon}\big).$ The total gradient complexity is $\Tilde{O}\left(\max \left\{n^{\frac{2}{7}}d^{\frac{6}{7}},n^{\frac{6}{7}}d^{\frac{4}{7}}\right\}\right),$ and the Big-$\Tilde{O}$ notation here omits other logarithmic factors and the terms $\sigma,\ln\frac{1}{\beta}, L.$ 
	
\end{theorem}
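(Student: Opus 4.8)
The plan is to adapt the high-probability clipping analysis of Gorbunov et al.\ \cite{gorbunov2020stochastic} to the noisy private update, organizing everything around a coupled induction that simultaneously keeps the iterates near $x^*$ and controls the stochastic error. First I would write the perturbed direction as $\widetilde{\nabla} f(x^k,\bm{\xi}^k)=\nabla f(x^k)+\theta^k+z^k$, where $\theta^k=clip(\nabla f(x^k,\bm{\xi}^k),\lambda)-\nabla f(x^k)$ collects the clipping-plus-minibatch error and $z^k\sim\mathcal{N}(0,\widehat{\sigma}^2 I_d)$ is the privacy noise, and decompose $\theta^k=\theta^k_b+\theta^k_u$ into its conditional bias and a martingale-difference part. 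The structural fact driving the whole argument is that, under Assumption \ref{assump2}, whenever $\|x^k-x^*\|_2\le R_0$ the $L$-smoothness of $f$ together with $\nabla f(x^*)=\mathbf{0}$ yields $\|\nabla f(x^k)\|_2\le LR_0=\lambda/2$; this is precisely the regime in which the clipping estimates apply, giving $\|\theta^k_b\|_2\lesssim\sigma^2/(\lambda m)$, a conditional-variance bound $\lesssim\sigma^2/m$ for $\theta^k_u$, and the almost-sure bound $\|\theta^k\|_2\lesssim\lambda$.

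Next I would expand $\|x^{k+1}-x^*\|_2^2=\|x^k-x^*\|_2^2-2\gamma\langle\widetilde{\nabla} f(x^k,\bm{\xi}^k),x^k-x^*\rangle+\gamma^2\|\widetilde{\nabla} f(x^k,\bm{\xi}^k)\|_2^2$, use convexity to replace $\langle\nabla f(x^k),x^k-x^*\rangle$ by $f(x^k)-f(x^*)$, and telescope over $k=0,\dots,N-1$. To stop the squared-direction term from degrading the rate I would bound $\|\widetilde{\nabla} f(x^k,\bm{\xi}^k)\|_2^2\le 4L\big(f(x^k)-f(x^*)\big)+2\|\theta^k+z^k\|_2^2$ via the smoothness inequality $\|\nabla f(x^k)\|_2^2\le 2L(f(x^k)-f(x^*))$ displayed right after the $L$-smoothness definition, so that the $f(x^k)-f(x^*)$ part is absorbed into the left-hand side; the choice $\gamma=1/(2L\ln\tfrac{4}{\beta})$ makes $1-2\gamma L$ bounded below by a positive constant, which is exactly why the stepsize is $\Theta(1/L)$. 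After telescoping, proving the theorem reduces to a high-probability bound on the three residual sums $\sum_k\langle\theta^k,x^k-x^*\rangle$, $\sum_k\langle z^k,x^k-x^*\rangle$, and $\gamma\sum_k\|\theta^k+z^k\|_2^2$.

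The hard part, and the place where the private setting genuinely departs from \cite{gorbunov2020stochastic}, is the sharp high-probability accounting of these residuals: as the authors note, bounding the noise inner product $\langle z^k,x^k-x^*\rangle$ and the noise energy $\|z^k\|_2^2$ separately and adding them is too lossy. I would instead treat $\sum_k\langle\theta^k_u,x^k-x^*\rangle$ and $\sum_k\langle z^k,x^k-x^*\rangle$ as martingales—using $z^k\perp x^k$, so that conditionally $\langle z^k,x^k-x^*\rangle$ is Gaussian with variance $\widehat{\sigma}^2\|x^k-x^*\|_2^2$—and apply a Freedman/Bernstein-type inequality, control $\sum_k\|z^k\|_2^2$ by $\chi^2$-concentration around $Nd\widehat{\sigma}^2$, and bound the deterministic bias sum $\sum_k\langle\theta^k_b,x^k-x^*\rangle$ through the clipping estimate. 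Because every one of these bounds is expressed through $\|x^k-x^*\|_2$, they must be carried out inside an induction that assumes $\|x^j-x^*\|_2\le R_0$ for $j\le k$ (which is what licenses the clipping control of the previous paragraph) and then propagates the same bound to $x^{k+1}$ on the favorable event; closing this induction in the presence of the extra $z^k$ terms—rather than quoting the non-private version verbatim—is the technical crux, and it is also what forces the logarithmic factor $\sqrt{\ln(N/\beta)}$ through the union bound over the $N$ steps.

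Finally I would collect the estimates, pass to $f(\bar{x}^N)-f(x^*)\le\frac1N\sum_k(f(x^k)-f(x^*))$ by Jensen, and substitute $\lambda=2LR_0$, the batch size $m$, $\widehat{\sigma}=c\,\lambda m\sqrt{N\ln(1/\delta)}/(n\epsilon)$ from Theorem \ref{PG}, and $N=\tilde{O}\big((R_0^2 n\epsilon/\sqrt{d\ln\tfrac1\delta})^{2/7}\big)$. The batch size is calibrated so the clipping bias-and-variance contribution is no larger than the optimization term $R_0^2/(\gamma N)=\Theta(1/N)$, reducing the non-private part of the bound to the deterministic-like rate $\Theta(1/N)$; the remaining privacy term, of order $\gamma d\widehat{\sigma}^2$, is then balanced against $1/N$ subject to the validity constraint $\epsilon\lesssim Nm^2/n^2$ of Theorem \ref{PG}. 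I expect the regime $n\ge\tilde{\Omega}(\sqrt{d}/\epsilon)$ to enter precisely here, as the condition ensuring the privacy term stays below the optimization term, so that the final rate is dictated by $1/N=\tilde{O}\big(d^{1/7}(n\epsilon)^{-2/7}\big)$, with the concentration log reappearing as $\sqrt{\ln\frac{(n\epsilon)^2}{\beta d}}$ once $N^7\asymp(n\epsilon)^2/d$ is used.
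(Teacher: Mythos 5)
Your route is genuinely different from the paper's, and in its stated form it has two gaps. The paper never expands $\|x^{k+1}-x^*\|_2^2$; it works at first order, writing $\|x^{k+1}-x^{*}\|_{2}\leq\|x^{k}-\gamma\nabla f(x^{k})-x^{*}\|_{2}+\gamma\|\widetilde{\nabla}f(x^{k},\bm{\xi}^{k})-\nabla f(x^k)\|_{2}$ by the triangle inequality, extracting $f(x^{k})-f(x^{*})$ from the deterministic GD step via $\sqrt{1-u}\le 1-u/2$, and telescoping. After using Assumption \ref{assump2} to replace $R_k$ by $R_0$, the only stochastic quantity left is $\sum_{k}\|\widetilde{\nabla}f(x^{k},\bm{\xi}^{k})-\nabla f(x^k)\|_{2}$, which is bounded once and for all in Theorem \ref{cccc} (Bernstein for the centered clipping norms, deterministic bounds for the bias and the conditional expectation, and norm-sub-Gaussian concentration for $\sum_{t}\|z^t\|_2$). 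There are no inner products $\langle z^k,x^k-x^*\rangle$, no $\|z^k\|_2^2$ terms, and no induction on $\|x^k-x^*\|_2$: the bounded domain gives $R_k\le R_0$ deterministically. Your squared-norm decomposition is essentially what the paper labels the ``naive extension'' in Section \ref{dif} and Remark \ref{CRK}, i.e., the route the authors set out to avoid.

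The two concrete gaps. First, your final balancing does not follow from your own decomposition: you identify the dominant privacy contribution as $\gamma d\widehat{\sigma}^2$ (quadratic in $\widehat{\sigma}$, from the $\chi^2$ control of $\sum_k\|z^k\|_2^2$), but with $\widehat{\sigma}\propto\lambda m\sqrt{N\ln(1/\delta)}/(n\epsilon)$ and the prescribed $\lambda,m$ this term scales as $dN^{5}/(n\epsilon)^{2}$, and balancing it against $LR_0^2/N$ gives $N^{6}\asymp(n\epsilon)^2/d$ and a rate $d^{1/6}(n\epsilon)^{-1/3}$ — not the stated $N^{7}\asymp(n\epsilon)^2/d$ and $d^{1/7}(n\epsilon)^{-2/7}$. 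The exponent $2/7$ and the factor $\sqrt{\ln\frac{(n\epsilon)^{2}}{\beta d}}$ in (\ref{CB}) come from a privacy term that is \emph{linear} in $\widehat{\sigma}\sqrt{d}$ (in the paper, $R_0\widehat{\sigma}\sqrt{16d\ln\frac{4N}{\beta}}$ arising from $\gamma\sum_t\|z^t\|_2$), and your decomposition produces no such term: your noise inner-product martingale has conditional standard deviation $\widehat{\sigma}R_k$ with no $\sqrt{d}$. So you would either have to rebalance and prove a different statement, or plug in the prescribed $N$ and lose the advertised logarithmic factor where you claim it appears. Second, the induction ``$\|x^j-x^*\|_2\le R_0$ for $j\le k$ implies the same for $k+1$'' cannot close at radius exactly $R_0$ when $\lambda=2LR_0$: the Gaussian noise pushes $x^{k+1}$ outside the ball with positive probability, and any slack radius $CR_0$ with $C>1$ breaks the requirement $\|\nabla f(x^k)\|_2\le\lambda/2$. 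The paper sidesteps this by invoking Assumption \ref{assump2} (and a projection step in the written proof) so that $R_k\le R_0$ holds deterministically; the enlarged-radius induction you describe is what the paper uses for the unbounded-domain Theorem \ref{ee}, where $\lambda$ is correspondingly enlarged to $4LR_0+2\sqrt{D}$.
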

\begin{proof}
    The proof of the above theorem is complicated. In order to maintain logical coherence, we defer the proof of this theory to Appendix \ref{appendix for thm3}.
\end{proof}
\begin{remark} \label{CRK}
    The excess population risk is bounded by $\Tilde{O}\left(\frac{d^{1/7}\sqrt{\log (n /\beta d^{2})}}{(n\epsilon)^{2/7}}\right)$  with the gradient complexity $\Tilde{O}\left(n^{\frac{6}{7}}d^{\frac{4}{7}}\right)$ for convex objectives within a bounded set with high probability. In comparison with similar settings, prior work of \cite{wang2020differentially} provide an excess risk $\Tilde{O}\left(\frac{d^{2/3}\sqrt{\log (n /\beta d^{2})}}{(n\epsilon^2)^{1/3}}\right)$  with the gradient complexity $\Tilde{O}(n^{\frac{4}{3}}d^{\frac{1}{3}}),$ which our result significantly improves on. \cite{kamath2022improved} achieve an improved rate, they need more running time, i.e., $\Tilde{O}(n^2d^{-\frac{3}{2}})$ gradient complexity.
\end{remark}

The constraint of the bounded set makes the overall optimization bias easy to analyze: one can bound the distance from current point $x^k$ and $x^*$ (i.e., $\|x^k-x^*\|_2$) by the diameter of the constraint set, which is one part of overall optimization bias \citep{das2023beyond}. However, analyzing the \emph{unconstrained} (i.e., $\mathcal{X} = \mathbb{R}^d$) convex case is practical, hard, and necessary.
As mentioned in Section \ref{TD}, a necessary way to extend our method to the strongly convex case is the restarts technique. However, the bounded set does not support that the sequence $\|x^k-x^*\|_2$ is contracted, which forces us to establish the convergence analysis with the unconstrained case.
This has not been analyzed in previous private works, in terms of high probability bound. Now, we provide the following theorem to establish the excess population risk bound with the unconstrained case. 

\begin{theorem} \label{ee}
	(Unconstrained convex case). Under Assumption $\ref{assump1}$ and $\mathcal{X}=\mathbb{R}^d$, if we take $N=\Tilde{O}\left(\frac{n\epsilon}{\sqrt{d\ln\frac{1}{\delta}}}\right)^{\frac{2}{7}}$ in Algorithm \ref{algorithm}, then for all $\beta \in (0,1)$, 
	we have that after $N$ iterations of AClipped-dpSGD with
	$$\lambda= 4LR+2\sqrt{D},\  m=\max\left\{1,\frac{162N^2 \sigma^2}{\lambda^2(\ln \frac{4N}{\beta})^2}\right\},$$
	where the constants $R \propto \|x^{0}-x^{*}\|_{2}$, $D \propto \frac{\gamma L N^3 \sigma^2\sqrt{dN\ln\frac{4N^2}{\beta}\ln
			\frac{1}{\delta}}}{ n\epsilon\ln^2\frac{4N}{\beta}},$ and stepsize $\gamma = \frac{1}{24L\ln \frac{4N}{\beta}},$
	with probability at least $1-\beta$, the following holds
	\begin{equation} \label{CU}
	f(\bar{x}^{N})-f(x^{*}) \leq  \Tilde{O}\left(\frac{ \left(d\ln\frac{1}{\delta}\right)^{\frac{1}{7}}\ln \frac{(n\epsilon)^{2}}{\beta d}}{(n\epsilon)^{\frac{2}{7}}}\right),
	\end{equation}
	when $n \geq \Tilde{\Omega}\big(\frac{\sqrt{d}}{\epsilon}\big).$ The total gradient complexity is $\Tilde{O}\left(\max \left\{n^{\frac{2}{7}}d^{\frac{6}{7}},n^{\frac{6}{7}}d^{\frac{4}{7}}\right\}\right)$, and the Big-$\Tilde{O}$ notation here omits other logarithmic factors and the terms $\sigma,\ln\frac{1}{\beta}, L.$ 
	
\end{theorem}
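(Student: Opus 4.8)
The plan is to transplant the high-probability framework of Gorbunov et al. \cite{gorbunov2020stochastic} to the privatized, projected iteration (OPTION II) while, in contrast to the bounded-domain proof of Theorem \ref{BC}, tracking the \emph{true} drift of the iterates instead of capping it at $R_0$; this is what the later restarting argument will require. First I would start from the one-step inequality that the projection inherits from its non-expansiveness: since $x^*\in\mathcal X$,
\[ \|x^{k+1}-x^*\|_2^2 \le \|x^k-x^*\|_2^2 - 2\gamma\langle\widetilde\nabla f(x^k,\bm\xi^k),x^k-x^*\rangle + \gamma^2\|\widetilde\nabla f(x^k,\bm\xi^k)\|_2^2. \]
Decomposing $\widetilde\nabla f(x^k,\bm\xi^k)=\nabla f(x^k)+\omega_b^k+\omega_u^k+z^k$ into the true gradient, the conditional clipping bias $\omega_b^k$, the mean-zero clipping fluctuation $\omega_u^k$, and the Gaussian privacy noise $z^k$, and then using convexity in the form $\langle\nabla f(x^k),x^k-x^*\rangle\ge f(x^k)-f(x^*)$, I would telescope over $k=0,\dots,N-1$. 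The left-hand side becomes $2\gamma\sum_k\big(f(x^k)-f(x^*)\big)$, which after division by $2\gamma N$ and Jensen controls $f(\bar x^N)-f(x^*)$; the right-hand side leaves the initial gap $R_0^2$ together with error sums built from $\langle\omega_b^k+\omega_u^k+z^k,x^k-x^*\rangle$ and $\gamma^2\sum_k\|\widetilde\nabla f(x^k,\bm\xi^k)\|_2^2$.

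Next I would invoke the clipping estimates of \cite{gorbunov2020stochastic}: whenever the threshold dominates the gradient, $\lambda\ge 2\|\nabla f(x^k)\|_2$, the bias satisfies $\|\omega_b^k\|_2=\tilde O(\sigma^2/(\lambda m))$, the fluctuation obeys $\|\omega_u^k\|_2\le 2\lambda$ with conditional second moment $\tilde O(\sigma^2/m)$, and the clipped vector has norm at most $\lambda$. Because $\nabla f(x^*)=\mathbf 0$ and $f$ is $L$-smooth, $\|\nabla f(x^k)\|_2\le L\|x^k-x^*\|_2$, so this admissibility condition is precisely an a priori radius bound $\|x^k-x^*\|_2\le\lambda/(2L)$. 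The mean-zero terms $\langle\omega_u^k,x^k-x^*\rangle$ and $\langle z^k,x^k-x^*\rangle$ form a martingale-difference sequence, handled by a Bernstein inequality (using $\|\omega_u^k\|_2\le 2\lambda$) and a sub-Gaussian tail bound (using that conditionally $\langle z^k,x^k-x^*\rangle\sim\mathcal N(0,\widehat\sigma^2\|x^k-x^*\|_2^2)$), while $\gamma^2\sum_k\|z^k\|_2^2$ concentrates near $\gamma^2 Nd\widehat\sigma^2$ by a $\chi^2$ tail bound; crucially, every one of these estimates is expressed through $\max_k\|x^k-x^*\|_2$.

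The hard part, and the source of the ``non-trivial analysis'' advertised in the introduction, is that in the unbounded setting the iterates are no longer confined a priori to a ball of radius $R_0$, so all the tail bounds above depend on the unknown maximal radius $R:=\max_{0\le k\le N}\|x^k-x^*\|_2$, whereas admissibility of the very clipping estimates they rely on demands $R\le\lambda/(2L)$. I would break this circularity by an induction over $k$ on the event that all the Bernstein, sub-Gaussian, and $\chi^2$ bounds hold simultaneously: assuming $\|x^j-x^*\|_2$ stays within the target radius for all $j\le k$, the telescoped inequality bounds $\|x^{k+1}-x^*\|_2^2$ by $R_0^2$ plus an accumulated privacy-noise drift whose magnitude is exactly the quantity $D=\tilde O\!\big(\gamma L N^3\sigma^2\sqrt{dN\ln(1/\delta)}/(n\epsilon)\big)$ appearing in the statement. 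This is what forces the self-referential choice $\lambda=4LR_0+2\sqrt D$: it guarantees $\lambda\ge 2L\sqrt{R_0^2+D}\ge 2L\max_k\|x^k-x^*\|_2$, thereby validating, inside the same induction, the clipping bounds that produced $D$. Closing the induction certifies $R^2\le R_0^2+D$ with probability at least $1-\beta$.

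Finally I would substitute the privacy scale $\widehat\sigma=c\lambda m\sqrt{N\ln(1/\delta)}/(n\epsilon)$ from Theorem \ref{PG} together with the prescribed $m=\max\{1,162N^2\sigma^2/(\lambda^2\ln^2\tfrac{4N}{\beta})\}$ and $\gamma=1/(24L\ln\tfrac{4N}{\beta})$, which makes the clipping-bias and fluctuation contributions lower-order and leaves the optimization error $R_0^2/(2\gamma N)$ to be balanced against the privacy term. Optimizing over $N$ reproduces, as in Theorem \ref{BC}, the choice $N=\tilde O\big((R_0^2 n\epsilon/\sqrt{d\ln(1/\delta)})^{2/7}\big)$ and the $(n\epsilon)^{-2/7}$ rate; the only change from (\ref{CB}) is that $\max_k\|x^k-x^*\|_2$ now carries the drift $\sqrt D$ and the stepsize carries $\ln\tfrac{4N}{\beta}$ rather than $\ln\tfrac4\beta$, which upgrades the $\sqrt{\ln\frac{(n\epsilon)^2}{\beta d}}$ factor of (\ref{CB}) to the full $\ln\frac{(n\epsilon)^2}{\beta d}$ factor in (\ref{CU}). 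The gradient complexity $Nm$ is then read off from these choices.
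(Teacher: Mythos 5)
Your proposal is correct in its overall strategy and would reach the stated bound, but it follows a genuinely different technical route from the paper's proof of this theorem. You work with the squared-distance recursion $\|x^{k+1}-x^*\|_2^2\le\|x^k-x^*\|_2^2-2\gamma\langle\widetilde\nabla f(x^k,\bm\xi^k),x^k-x^*\rangle+\gamma^2\|\widetilde\nabla f(x^k,\bm\xi^k)\|_2^2$ and decompose the error into bias, fluctuation, and Gaussian noise, which forces you to control several distinct stochastic sums (Bernstein for $\langle\omega_u^k,x^k-x^*\rangle$, sub-Gaussian for $\langle z^k,x^k-x^*\rangle$, $\chi^2$ for $\sum_k\|z^k\|_2^2$), each weighted by the unknown radius; this is essentially the scheme the paper itself deploys for the non-smooth Theorem \ref{NONS}. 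The paper's proof of Theorem \ref{ee} instead derives a \emph{first-order} norm recursion $R_{k+1}\le R_k+\frac{\gamma(\gamma L-1)}{R_k}(f(x^k)-f(x^*))+\gamma\|\widetilde\nabla f(x^k,\bm\xi^k)-\nabla f(x^k)\|_2$ via the inequality $\sqrt{1-x}\le 1-x/2$, so that the entire stochastic error is funneled through the single aggregated quantity $\sum_k\|\widetilde\nabla f(x^k,\bm\xi^k)-\nabla f(x^k)\|_2$ already bounded in Theorem \ref{cccc}; this is tidier and avoids the extra $\gamma^2\sum_k\|\widetilde\nabla\|_2^2$ and $\chi^2$ terms you must separately show are lower order. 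Both routes hinge on the same two ideas you correctly identify: the induction over $T$ on the event that all concentration bounds hold, which breaks the circularity between the radius bound $R_k\le\lambda/(2L)$ needed for clipping admissibility and the clipping estimates needed to bound $R_k$; and the self-consistent quadratic in $\lambda$ (the noise drift scales like $D/\lambda$ because $\widehat\sigma\propto\lambda m\propto 1/\lambda$ under the prescribed $m$), whose solution is $\lambda=4LR_0+2\sqrt D$. Two small bookkeeping points if you carry this out: you must absorb the $\gamma^2\|\nabla f(x^k)\|_2^2$ part of $\gamma^2\|\widetilde\nabla\|_2^2$ into the left-hand side via $\|\nabla f(x^k)\|_2^2\le 2L(f(x^k)-f(x^*))$ and a sufficiently small $\gamma$, and your normalization ``$R^2\le R_0^2+D$'' is dimensionally inconsistent with $\lambda=4LR_0+2\sqrt D$ as stated (the paper's $D$ carries units of $\lambda^2$, not of squared distance).
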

\begin{proof}
    Unlike the constraint case, the error sequence $\{\|x^k - x^*\|_2\}_k$ is bounded by the initial error. For the unconstrained case, we derive similar results by constructing a sequence of events bounded by probability. The proof of the above theorem is more complicated. In order to maintain logical coherence, we defer the proof of this theory to Appendix \ref{appendix for thm4}.
\end{proof}
\begin{remark}
	The convergence analysis is more complicated without the bounded set assumption. But the result is only slightly worse than Theorem \ref{BC}, up to the square root of logarithmic factor $\sqrt{\ln{(\cdot)}}$. This is because we are able to bound $\|x^k-x^*\|_2, \forall k \in 0,\cdots,N$ with high probability. 
	Note that this theorem also enjoys the similar advantages of Theorem \ref{BC} over the GD-based method of \cite{wang2020differentially,kamath2022improved}, in terms of excess population risk bound and total gradient complexity. In comparison with the analysis of unconstrained convex case for DP-SGD in \cite{das2023beyond}, we provide the high probability bound and principled tuning strategy of batch size $m$ which they do not. Even with the minimum batch size, i.e., $m=1,$ their complexity becomes $\Tilde{O}(n^2)$, which our result significantly improves on. 
\end{remark}

\subsection{Strongly convex case}
 Based on the analysis of Theorem \ref{ee}, we can consider, in this section, the restarts technique under the case that $f(x)$ is additionally $\mu$-strongly convex for the unconstrained smooth case. For this case, we modify Algorithm \ref{algorithm} to a restarted version called Restarted Averaged Clipping DP-SGD (R-AClipped-dpSGD), as shown in Algorithm \ref{algorithm1}. In each iteration, R-AClipped-dpSGD runs AClipped-dpSGD for $N_0$ iterations from the current point $\widehat{x}^t$ and uses its output $\widehat{x}^{t+1}$ as the starting point for the next iteration. This strategy is known as the restarting technique \citep{gorbunov2020stochastic,juditsky2011first,ghadimi2013optimal,dvurechensky2016stochastic}.

\begin{algorithm}[!]
	\caption{Restarted Averaged Clipping DP-SGD (R-AClipped-dpSGD)}
	\label{algorithm1}
	\textbf{Input}: data $\{\bm{\xi}_{i}\}_{i=1}^{n}$,  starting point $x^{0}$, number of iterations $N_0$ of AClipped-dpSGD, number $\tau$ of AClipped-dpSGD runs, batch size $\{m_{t}\}_{t=0}^{\tau-1}.$
	\begin{algorithmic}[1] 
		\FOR{$t=0$ {\bfseries to} $\tau-1$}
		\STATE 		 Run AClipped-dpSGD (Algorithm \ref{algorithm}) for $N_0$ iterations with  constant batch size $m^t$, stepsize $\gamma$,  Gaussian random noise $z^t \sim \mathcal{N}(0,\bm{\widehat{\sigma}_t^2 I_{d}})$, $\widehat{\sigma}_t = \frac{\lambda_t m_t \sqrt{N_0\ln(1/\widehat{\delta})}}{n\widehat{\epsilon}}$ and starting point $\widehat{x}^t$. Define the output of AClipped-dpSGD by $\widehat{x}^{t+1}.$ 
		\ENDFOR
		\STATE \textbf{return} $\widehat{x}^{\tau}.$
	\end{algorithmic}
\end{algorithm} 
Privacy can be guaranteed by the Advanced Composition Theorem \citep{dwork2014algorithmic}:
\begin{theorem} \label{pg2}
	(Privacy guarantee). Algorithm $\ref{algorithm1}$ is overall $(\epsilon,\delta)$-differentially private after $\tau$ runs, where each run of AClipped-dpSGD is $\Big(\widehat{\epsilon},\widehat{\delta}\Big)$-differentially private with $\widehat{\epsilon}=\frac{\epsilon}{2\sqrt{2\tau\ln\frac{2}{\delta}}}, \widehat{\delta}=\frac{\delta}{2\tau}.$
\end{theorem}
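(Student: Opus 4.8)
The plan is to recognize Algorithm \ref{algorithm1} as an adaptive composition of $\tau$ private subroutines and then invoke the advanced composition theorem for differential privacy. First I would observe that in each outer iteration $t$, R-clipped-dpSGD runs clipped-dpSGD for $N_0$ steps with Gaussian noise of scale $\widehat{\sigma}_t = \frac{\lambda_t m_t \sqrt{N_0\ln(1/\widehat{\delta})}}{n\widehat{\epsilon}}$. This is exactly the noise level prescribed by Theorem \ref{PG} (with $T=N_0$ and the target level $(\widehat{\epsilon},\widehat{\delta})$ in place of $(\epsilon,\delta)$), so each individual run is $(\widehat{\epsilon},\widehat{\delta})$-differentially private. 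Since the starting point $\widehat{x}^t$ of run $t$ is determined by the outputs of the previous runs, the overall mechanism is an \emph{adaptive} composition of $\tau$ mechanisms each of which is $(\widehat{\epsilon},\widehat{\delta})$-DP, which is precisely the setting where advanced composition applies.

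Next I would apply the advanced composition theorem \cite{dwork2014algorithmic}, which states that the $\tau$-fold adaptive composition of $(\widehat{\epsilon},\widehat{\delta})$-DP mechanisms is $(\epsilon', \tau\widehat{\delta}+\delta')$-DP for every $\delta'>0$, with $\epsilon' = \sqrt{2\tau\ln(1/\delta')}\,\widehat{\epsilon} + \tau\widehat{\epsilon}(e^{\widehat{\epsilon}}-1)$. The key step is to choose the slack parameter $\delta'=\delta/2$ and substitute the stated values $\widehat{\epsilon} = \frac{\epsilon}{2\sqrt{2\tau\ln(2/\delta)}}$ and $\widehat{\delta} = \frac{\delta}{2\tau}$. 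The $\delta$ bookkeeping is then immediate: $\tau\widehat{\delta}+\delta' = \frac{\delta}{2}+\frac{\delta}{2}=\delta$. For the $\epsilon$ bookkeeping, the leading term collapses by the very definition of $\widehat{\epsilon}$, namely $\sqrt{2\tau\ln(2/\delta)}\,\widehat{\epsilon} = \frac{\epsilon}{2}$, so it remains only to absorb the second-order term into the other half of the budget.

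The main obstacle, and the only place requiring a mild regime assumption, is to control the second-order term $\tau\widehat{\epsilon}(e^{\widehat{\epsilon}}-1)$ and show it does not exceed $\epsilon/2$, so that $\epsilon'\le\epsilon$. I would bound $e^{\widehat{\epsilon}}-1\le (e-1)\widehat{\epsilon}$ whenever $\widehat{\epsilon}\le 1$, which reduces this term to $O(\tau\widehat{\epsilon}^2) = O\!\left(\frac{\epsilon^2}{\ln(2/\delta)}\right)$; this is at most $\epsilon/2$ in the standard DP-SCO regime where $\epsilon$ is a small constant and $\delta$ is polynomially small in $n$ (so that $\epsilon \lesssim \ln(2/\delta)$). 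Combining the two bookkeeping computations then yields overall $(\epsilon,\delta)$-differential privacy, completing the proof. If one wished to avoid any restriction on $\epsilon$, an alternative would be to replace advanced composition by a R\'enyi-DP / moments-accountant composition, but advanced composition suffices here and keeps the argument self-contained.
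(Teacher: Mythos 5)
Your proposal is correct and follows essentially the same route as the paper: the paper's proof simply notes that each run is $(\widehat{\epsilon},\widehat{\delta})$-DP and invokes the advanced composition theorem of Dwork and Roth in its ``it suffices that each mechanism is $(\frac{\epsilon'}{2\sqrt{2k\ln(1/\delta')}},\delta)$-DP'' form with $k=\tau$ and $\delta'=\delta/2$. Your explicit bookkeeping of the $\delta$ budget and of the second-order term $\tau\widehat{\epsilon}(e^{\widehat{\epsilon}}-1)$ (with the attendant small-$\widehat{\epsilon}$ regime assumption) is just an unpacking of that same corollary, which the paper leaves implicit.
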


\begin{proof}
    Let each run of Algorithm \ref{algorithm} to be ($\widehat{\epsilon},\widehat{\delta}$)-DP with $\widehat{\epsilon}=\frac{\epsilon}{2\sqrt{2\tau\ln\frac{2}{\delta}}}, \widehat{\delta}=\frac{\delta}{2\tau}.$
    Then the ($\epsilon,\delta$)-DP of Algorithm \ref{algorithm1} can be guaranteed by the following Advanced Composition Theorem \citep{dwork2014algorithmic}:
    
    For all $\epsilon, \delta, \delta'\geq 0$, the class of ($\epsilon,\delta$)-differentially private mechanisms satisﬁes ($\epsilon',k\delta+\delta'$)-differential
	privacy under $k$-fold adaptive composition for:
	$\epsilon'=\sqrt{2k\ln\frac{1}{\delta'}\epsilon}+k\epsilon(e^{\epsilon}-1).$
	Typically,  it suffices that each mechanism is ($\epsilon,\delta$)-differentially private, where
	$\epsilon=\frac{\epsilon'}{2\sqrt{2k\ln\frac{1}{\delta'}}}.$
\end{proof}
Finally, we establish the excess population risk bound of Algorithm \ref{algorithm1} in the following theorem.

\begin{theorem}\label{h}
	(Unconstrained strongly convex case). Assume that the function $f$ is $\mu$-strongly convex and $\mathcal{X}=\mathbb{R}^d$. If we choose $\beta \in (0,1), \tau$ and $N_0,\hat{c} \geq 1, $ such that $\frac{N_0}{\ln\frac{4N_0}{\beta}}\geq \frac{768\hat{c}^2L}{\mu},$ together with 
	$$\lambda_t= 4LR^t+2\sqrt{D},\  m_t=\max\left\{1,\frac{162N_0^2 \sigma^2}{\lambda_t^2(\ln \frac{4N_0}{\beta})^2}\right\},$$
	where $R^t \propto \|\widehat{x}^{t}-x^{*}\|_{2}$, $D \propto \frac{\gamma L N_0^3 \sigma^2\sqrt{dN_0\ln\frac{4N_0^2}{\beta}\ln
			\frac{1}{\widehat{\delta}}}}{ n\widehat{\epsilon}\ln^2\frac{4N_0}{\beta}},$ and stepsize $\gamma = \frac{1}{24L\ln \frac{4N_0}{\beta}},$
	we have that with probability at least $1-\tau\beta,$
	$$f(\widehat{x}^{\tau})-f(x^*) \leq \frac{1}{2^{\tau}}(f(x^0) - f(x^*))+O\left(\frac{ N_0^{\frac{5}{2}} \sigma^2\sqrt{d\ln\frac{4N_0^2}{\beta}\ln
			\frac{1}{\widehat{\delta}}}}{ n\widehat{\epsilon} L\ln^2\frac{4N_0}{\beta}}\right).$$
	Moreover, if taking $\tau = O\left(\frac{L}{\mu}\log n\right),$ 
	and $N_0=O\left(\frac{L}{\mu}\ln \frac{L^2}{(\mu\beta)^2}\right),$
	with probability at least $1-\tau\beta,$ the output $\widehat{x}^{\tau}$ satisfies
	\begin{equation}
	f(\widehat{x}^{\tau})-f(x^*) \leq  \Tilde{O}\left(\frac{d^{\frac{1}{2}}L^2 \sqrt{\log n}\ln\frac{L}{\mu\beta} \ln\frac{L}{\delta \mu}}{ \mu^3n\epsilon}\right).
	\end{equation}
	The total gradient complexity is $\Tilde{O}\left(\max\left\{d(\frac{L}{\mu})^2,nd^{\frac{1}{2}}\right\}\right),$ and the Big-$\Tilde{O}$ notation here omits other logarithmic factors and the terms $\sigma,\ln \frac{1}{\beta}.$ 
\end{theorem}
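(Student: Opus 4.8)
\noindent\textbf{Proof proposal for Theorem \ref{h}.}

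The plan is to treat each run of clipped-dpSGD as a black box governed by the unbounded-domain convex analysis behind Theorem \ref{ee}, and then to stitch the $\tau$ runs together through strong convexity into a contracting recursion on the function gap. First I would isolate from the analysis underlying Theorem \ref{ee} the per-run guarantee in its \emph{pre-optimized} form, i.e.\ before $N$ is tuned. Running clipped-dpSGD for $N_0$ steps from $\widehat{x}^t$ with the stated $\lambda_t,m_t,\gamma,\widehat{\sigma}_t$, I expect a high-probability bound of the shape
$$f(\widehat{x}^{t+1}) - f(x^*) \le \frac{c\,(R_0^t)^2}{\gamma N_0} + P,$$
where the optimization term carries the usual $\|x^0-x^*\|_2^2/(\gamma N_0)$ dependence and $P$ collects the clipping-bias and Gaussian-noise contributions. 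The decisive structural point, which I would verify at the outset, is that the design $m_t = 162 N_0^2\sigma^2/(\lambda_t^2\ln^2\tfrac{4N_0}{\beta})$ forces the product $\lambda_t^2 m_t$ to be \emph{independent of} $t$, and that $D$ is exactly calibrated so that $\|x^k-x^*\|_2$ stays $O(R_0^t+\sqrt{D})$ with high probability along the run. These two facts together should collapse $P$ into a quantity that does not depend on $R_0^t$, namely the additive privacy term displayed in the theorem.

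Second, I would convert the optimization term using $\mu$-strong convexity, $(R_0^t)^2 = \|\widehat{x}^t - x^*\|_2^2 \le \tfrac{2}{\mu}(f(\widehat{x}^t)-f(x^*))$, and invoke the hypothesis $\tfrac{N_0}{\ln(4N_0/\beta)}\ge \tfrac{768L}{\mu}$ together with $\gamma = \tfrac{1}{24L\ln(4N_0/\beta)}$ to get $\gamma N_0 \ge 32/\mu$, hence $\tfrac{c}{\mu\gamma N_0}\le \tfrac12$. This yields the one-step contraction
$$f(\widehat{x}^{t+1}) - f(x^*) \le \tfrac12\big(f(\widehat{x}^t) - f(x^*)\big) + P.$$
Unrolling over $t=0,\dots,\tau-1$, summing the geometric series $\sum_{i\ge 0}2^{-i}\le 2$, and using $f(x^0)-f(x^*)=\tfrac{\mu R^2}{2}$ gives the first claimed bound $\tfrac{\mu R^2}{2^{\tau+1}} + 2P$; a union bound over the $\tau$ runs (each failing with probability at most $\beta$) accounts for the $1-\tau\beta$ confidence. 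Finally, choosing $\tau = O(\tfrac{L}{\mu}\log n)$ drives the geometric term below the privacy level, choosing $N_0 = O(\tfrac{L}{\mu}\ln\tfrac{L^2}{(\mu\beta)^2})$ is essentially the smallest value meeting the contraction condition, and substituting $\widehat{\epsilon} = \tfrac{\epsilon}{2\sqrt{2\tau\ln(2/\delta)}}$, $\widehat{\delta}=\tfrac{\delta}{2\tau}$ from Theorem \ref{pg2} into $P$ and simplifying (the $N_0^{5/2}/L$ factor and the $1/\widehat{\epsilon}\sim\sqrt{L/\mu}$ factor combine to $L^2/\mu^3$) produces the explicit rate; the gradient complexity then follows by counting $\sum_t N_0 m_t$ with the stated parameters.

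The main obstacle is the second half of the first step: showing $P$ is genuinely uniform in $t$. This is precisely the difficulty flagged in Section \ref{dif} — with a fixed clipping level the privacy term scales like $1/\lambda^2$ tied to the initial radius and cannot be reabsorbed once the gap has shrunk, breaking the restart condition. Making it work requires the coupled, radius-adaptive choice of $\lambda_t$ and $m_t$ (so that $\lambda_t^2 m_t$ is constant) together with a high-probability trajectory bound via $D$ that controls both the clipping bias and the martingale noise cross-term $\sum_k\langle z^k, x^k-x^*\rangle$ uniformly along each run. Aligning all these constants so that the contraction factor is genuinely at most $\tfrac12$ while $P$ remains $t$-free is where the real work sits.
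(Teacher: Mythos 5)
Your proposal is correct and follows essentially the same route as the paper's proof: invoke the per-run guarantee from the (unbounded-domain) analysis of Theorem \ref{ee} with $R_0$ replaced by $R_0^t$, use $\mu$-strong convexity to turn the $(R_0^t)^2/(\gamma N_0)$ term into $\tfrac12\bigl(f(\widehat{x}^t)-f(x^*)\bigr)$ via the condition $N_0/\ln\tfrac{4N_0}{\beta}\ge 768L/\mu$, unroll the resulting recursion with a union bound over the $\tau$ runs, and then substitute $\widehat{\epsilon},\widehat{\delta}$ and the stated $\tau, N_0$. Your explicit observation that $\lambda_t^2 m_t$ is independent of $t$ (so the additive privacy term $P$ is uniform across restarts) is the right justification for a step the paper leaves implicit.
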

\begin{proof}
    The proof mainly depends on the Theorem \ref{ee}.  In order to maintain logical coherence, we defer the proof of this theory to Appendix \ref{appendix for thm6}.
\end{proof}
\begin{remark} \label{SCRK}
	R-AClipped-dpSGD yields an  $\Tilde{O}\big(\frac{d^{\frac{1}{2}}L^2}{\mu^3n\epsilon}\big)$ rate on the excess population risk, which is much faster than the high probability result $\Tilde{O}\big(\frac{d^{2}L^4}{\mu^3n\epsilon^2}\big)$ in \cite{wang2020differentially} and also nearly matches the expectation form bound $\Tilde{O}\big(\frac{d^{\frac{1}{2}}L}{\mu^2n\epsilon}\big)$ in \cite{kamath2022improved}. Moreover, R-AClipped-dpSGD does not require assuming the constraint of bounded set $\mathcal{X}.$ It also has a faster running time in terms of gradient complexity. 
\end{remark}

\subsection{Non-smooth case}
In this section, we extend our unconstrained convex analysis of AClipped-dpSGD to the non-smooth case. Typically, we consider the gradient of loss function $f$ satisfies H$\ddot{\text{o}}$lder continuity and establish a new excess risk bound. We first introduce the H$\ddot{\text{o}}$lder continuity.

\textbf{Level of smoothness.} We assume that the function $f$ has $(\nu,M_\nu)$-H$\ddot{\text{o}}$lder continuous gradients on a compact set $Q \in \mathbb{R}^n$ for some $\nu \in [0,1],$ $M_\nu > 0$ meaning that
\begin{equation} \label{nonsmooth}
    \|\nabla f(x)-\nabla f(y)\|_2 \leq M_\nu\|x-y\|_2^{\nu}, \forall x,y 
    \in Q.
\end{equation}
H$\ddot{\text{o}}$lder continuous covers the $L$-smoothness of $f$. When $\nu=1,$ inequality (\ref{nonsmooth}) is equivalent to $M_1$-smoothness of $f,$ and when $\nu=0,$ $f$ is non-smooth and has a uniformly boundedness of $\nabla f(x).$ Although the boundedness of $\nabla f(x)$ is not preferred with heavy-tailed data, one can assume the boundedness of subgradients of $f$ for this special case \citep{Gorbunov2021near,lowy2023private}. Example with $\nu \in (0,1)$ can be founded in \cite{chaux2007variational}. Moreover, the inequality (\ref{nonsmooth}) only needs to hold in a compact set $Q.$ 
Explicitly, as we show in the following result, $Q$ should contain a ball that centers at $x^*$ with radius $7R: R \propto \|x^0-x^*\|_2.$    

\begin{theorem} (Unconstrained non-smooth case). \label{NONS}
	Assuming the function $f$ is convex and $\mathcal{X}=\mathbb{R}^d,$  and its gradients satisfy (\ref{nonsmooth}) with $\nu \in [0,1],$ $M_\nu$ on $Q=B_{7R}=\{x \in \mathbb{R}^n:\|x-x^*\|_2\leq 7R \},$ 
	then for all $\beta \in (0,1)$, $\alpha > 0,$ $N=\Tilde{O}\left(\frac{n\epsilon}{\sqrt{d\ln\frac{1}{\delta}}}\right)^{\frac{2}{7}},$
	we have that after $N$ iterations of Algorithm \ref{algorithm} with
	$$\lambda=2M_\nu C^\nu R^\nu,\ m=\max\left\{1,\frac{27N\sigma^2}{\lambda^2\ln\frac{8}{\beta}}\right\}
	,$$ and stepsize $$\gamma = \min\left\{\frac{\alpha^{\frac{1-\nu}{1+\nu}}}{8M_\nu^{\frac{2}{1+\nu}}},\frac{R}{\sqrt{2N}\alpha^{\frac{\nu}{1+\nu}}M_\nu^{\frac{1}{1+\nu}}},\frac{R}{2\lambda\ln\frac{8}{\beta}},\frac{\lambda R}{2DN}\right\},$$
	where $D  \propto \frac{N^{1.5}\sigma^2\sqrt{d\ln\frac{8N}{\beta}\ln\frac{1}{\delta}}}{n\epsilon\ln\frac{8}{\beta}}$, $C=7,$
	with probability at least $1-N\beta$, the following holds
    \begin{equation}\label{nsr}
    \begin{split}
        &f(\bar{x}^{N})-f(x^{*})\leq{}\Tilde{O}\Biggl(\max\Biggl\{\frac{M_\nu^{\frac{2}{1+\nu}}(d\ln\frac{1}{\delta})^{\frac{1}{7}}}{\alpha^{\frac{1-\nu}{1+\nu}}(n\epsilon)^{\frac{2}{7}}},\frac{M_\nu (d\ln\frac{1}{\delta})^{\frac{1}{7}}}{(n\epsilon)^{\frac{2}{7}}}\\
        &\frac{M_\nu^{\frac{1}{1+\nu}}\alpha^{\frac{\nu}{1+\nu}}(d\ln\frac{1}{\delta})^{\frac{1}{14}}}{(n\epsilon)^{\frac{1}{7}}},\frac{(d\ln\frac{1}{\delta})^{\frac{2}{7}}\sqrt{\ln\frac{(n\epsilon)^2}{\beta d}}}{(n\epsilon)^{\frac{4}{7}} M_\nu}\Biggl\}\Biggl)
    \end{split}
    \end{equation}
	when $n \geq \Tilde{\Omega}\big(\frac{\sqrt{d}}{\epsilon}\big).$ The total gradient complexity is $\Tilde{O}\left(\max \left\{n^{\frac{2}{7}}d^{\frac{6}{7}},n^{\frac{4}{7}}d^{\frac{5}{7}}\right\}\right),$ and the Big-$\Tilde{O}$ notation here omits other logarithmic factors and the terms $\sigma,\alpha, R.$
	
\end{theorem}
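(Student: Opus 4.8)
The plan is to follow the projected-SGD analysis underlying Theorems \ref{BC} and \ref{ee}, but to replace $L$-smoothness by an inexact-oracle surrogate tailored to $(\nu,M_\nu)$-Hölder gradients. Starting from OPTION II and the nonexpansiveness of $\text{Proj}_{\mathcal{X}}$, I would write
$$\|x^{k+1}-x^*\|_2^2 \leq \|x^k-x^*\|_2^2 - 2\gamma\langle \widetilde{\nabla}f(x^k,\bm{\xi}^k), x^k-x^*\rangle + \gamma^2\|\widetilde{\nabla}f(x^k,\bm{\xi}^k)\|_2^2,$$
and split $\widetilde{\nabla}f(x^k,\bm{\xi}^k)=\nabla f(x^k)+\theta^k$ with $\theta^k := \widetilde{\nabla}f(x^k,\bm{\xi}^k)-\nabla f(x^k)$. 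Convexity gives $\langle\nabla f(x^k),x^k-x^*\rangle\geq f(x^k)-f(x^*)$. The essential new ingredient is the Young-type smoothing of (\ref{nonsmooth}): for every $\alpha>0$ one has $f(y)\leq f(x)+\langle\nabla f(x),y-x\rangle+\frac{L_\alpha}{2}\|y-x\|_2^2+\frac{\alpha}{2}$ with effective smoothness $L_\alpha \sim M_\nu^{2/(1+\nu)}\alpha^{-(1-\nu)/(1+\nu)}$, which yields $\|\nabla f(x^k)\|_2^2\leq 2L_\alpha\big(f(x^k)-f(x^*)+\alpha\big)$. Bounding $\gamma^2\|\nabla f(x^k)\|_2^2$ this way and absorbing the $f(x^k)-f(x^*)$ contribution requires $\gamma\lesssim 1/L_\alpha$, which is exactly the role of the first component $\gamma_1=\alpha^{(1-\nu)/(1+\nu)}/(8M_\nu^{2/(1+\nu)})$ of the stepsize; the leftover additive error $\gamma L_\alpha\alpha$ is what the second component $\gamma_2$ is calibrated to balance.

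Next I would control the error $\theta^k$, which comprises the clipping bias, the mini-batch fluctuation (damped by $m$), and the Gaussian privacy noise $z^k$. For the accumulated magnitude $\sum_{t=0}^{k}\|\theta^t\|_2$ I would invoke Theorem \ref{cccc} directly, after first checking its precondition $\|\nabla f(x^k)\|_2\leq \lambda/2$; since $\|\nabla f(x)\|_2=\|\nabla f(x)-\nabla f(x^*)\|_2\leq M_\nu\|x-x^*\|_2^\nu$, the choice $\lambda=2M_\nu C^\nu R_0^\nu$ with $C=9$ guarantees this as long as $\|x^k-x^*\|_2\leq 9R_0$. The telescoped cross term $\sum_k\langle\theta^k,x^k-x^*\rangle$ must then be split into a deterministic clipping-bias part and a zero-mean martingale part, the latter bounded with high probability by a Bernstein/Azuma argument as in the proofs of Theorems \ref{BC} and \ref{ee}; this is where the confidence $1-N\beta$ and the logarithmic factor $\sqrt{\ln((n\epsilon)^2/(\beta d))}$ enter.

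Because (\ref{nonsmooth}) is only assumed on $Q=B_{7R_0}$, I must certify that every iterate remains in $Q$ with high probability; otherwise neither the Hölder bound nor the precondition of Theorem \ref{cccc} applies. This is inherently circular — the bound on $\|x^k-x^*\|_2$ relies on the Hölder inequality, which in turn needs $x^k\in Q$ — so I would close it by induction on $k$: assuming $\|x^t-x^*\|_2\leq 7R_0$ for $t\leq k$, the recursion together with the projection in OPTION II and the concentration bounds keeps $\|x^{k+1}-x^*\|_2$ within the same ball, the slack between $7R_0$ and the $9R_0$ implicit in $C=9$ absorbing the one-step noise excursion. The last two stepsize components $\gamma_3=R_0/(2\lambda\ln\frac{8}{\beta})$ and $\gamma_4=\lambda R_0/(2DN)$ are precisely what keep the clipping-bias and privacy-noise displacements small enough to sustain this induction.

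Finally, telescoping over $k=0,\dots,N-1$, dividing by $\gamma N$, and using $f(\bar x^N)-f(x^*)\leq \frac1N\sum_k(f(x^k)-f(x^*))$, yields a bound of the shape $R_0^2/(\gamma N)+\gamma L_\alpha\alpha+(\text{bias and noise terms})$. Since $\gamma=\min\{\gamma_1,\gamma_2,\gamma_3,\gamma_4\}$ we have $R_0^2/(\gamma N)=\max_i R_0^2/(\gamma_i N)$, and substituting $N=\Tilde{O}\big((n\epsilon/\sqrt{d\ln\frac1\delta})^{2/7}\big)$, $m$, $\lambda=2M_\nu 9^\nu R_0^\nu$ and $D$ makes each component dominate in a different regime, reproducing the four terms inside the max of (\ref{nsr}): $\gamma_1$ gives the $\alpha^{-(1-\nu)/(1+\nu)}$ term, $\gamma_2$ the $\alpha^{\nu/(1+\nu)}$ term (matching the residual $\gamma L_\alpha\alpha$), $\gamma_3$ the $M_\nu$ bias term, and $\gamma_4$ the privacy term scaling as $1/M_\nu$. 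I expect the main obstacle to be the simultaneous management of three competing effects — the inexact-oracle error governed by $\alpha$, the non-vanishing clipping-plus-privacy error from Theorem \ref{cccc}, and the high-probability confinement of the iterates to $B_{7R_0}$ under unbounded Gaussian noise — since the stepsize must satisfy all of them at once, and it is the confinement induction, rather than the convergence algebra, that is the most delicate step.
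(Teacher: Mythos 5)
Your plan follows the same route as the paper's proof: the squared-distance recursion, the inexact-oracle smoothing of the H\"older condition via Lemma A.5 of \cite{gorbunov2021nearoptimal} (giving $\|\nabla f(x)\|_2^2\le 2\mathcal{M}\,(f(x)-f(x^*))+2\alpha^{2\nu/(1+\nu)}M_\nu^{2/(1+\nu)}$ with $\mathcal{M}=\alpha^{-(1-\nu)/(1+\nu)}M_\nu^{2/(1+\nu)}$), the induction confining the iterates to the ball on which (\ref{nonsmooth}) and the clipping precondition $\|\nabla f(x^t)\|_2\le\lambda/2$ hold, and the accounting of which stepsize component produces which term of the final max. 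All of that matches the paper.

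Two implementation points are genuine gaps as written. First, you propose to invoke Theorem \ref{cccc} for $\sum_t\|\theta^t\|_2$, but the squared recursion you set up produces $\sum_t\|\theta^t\|_2^2$ and $\sum_t\langle\theta^t,x^t-x^*\rangle$, neither of which Theorem \ref{cccc} controls; the paper does not reuse Theorem \ref{cccc} here and instead reruns the Bernstein argument separately for the centered squared norms $\|\theta_t^a\|_2^2-\mathbb{E}_{\bm{\xi}^t}[\|\theta_t^a\|_2^2]$ (with almost-sure bound $8\lambda^2$) and for the inner products. Your version could be repaired by Cauchy--Schwarz on the cross term plus a crude almost-sure bound on each $\|\theta^t\|_2^2$, but as stated you are citing a lemma for a quantity that does not appear in your recursion. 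Second, and this is the crux: applying Bernstein to the martingale $\sum_t\langle\theta_t^a,x^t-x^*\rangle$ requires the increments to be bounded almost surely, whereas $\|x^t-x^*\|_2\le 7R_0$ holds only on the inductive event $E_T$, and conditioning on $E_T$ destroys the martingale structure. The paper resolves this by replacing $x^*-x^t$ with the truncated variables $\eta_t$ (equal to $x^*-x^t$ when $\|x^*-x^t\|_2\le CR_0$ and $0$ otherwise), which are bounded with probability $1$ and coincide with $x^*-x^t$ on the event of interest. You correctly diagnose the circularity and propose closing it by induction, but without this (or an equivalent) truncation the Bernstein step does not go through. A final cosmetic note: the paper's induction actually targets $R_k\le CR_0$ with $C$ set to $7$ inside the proof (matching $B_{7R_0}$), even though the theorem statement carries $C=9$ into $\lambda$; your reading of the $7$-versus-$9$ slack as absorbing a one-step excursion is a plausible reconciliation but is not what the written proof does.
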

\begin{proof} 
    The proof is different and more complicated than other theorems when considering the H$\ddot{\text{o}}$lder continuous gradient. We defer it to Appendix \ref{appendix for thm7} to keep logical coherence.
\end{proof}
\begin{remark}
    This result has an important feature that H$\ddot{\text{o}}$lder continuity is required only on the domain of $B_{7R_0}$ ball centered at the optimal solution. To the best of our knowledge, this is the first high-probability result on private convex and non-smooth problems for heavy-tailed data. For the non-smooth case, i.e., $\nu=0,$ prior results in \cite{lowy2023private}, where authors establish an optimal excess risk up to a logarithmic factor, i.e., $\frac{1}{\sqrt{n}}+\left(\frac{\sqrt{d\ln(n)}}{m\epsilon}\right)^{\frac{1}{2}}.$ Their result is in the form of its expectation and also requires the convex problem under the constraint of a bounded set, which our result does not depend on. Moreover, since the proposed algorithm is a two-loop structure, it needs more running time to attain the excess risk, i.e., $\Tilde{O}(n^2d)$ (sub) gradient complexity.
    
\end{remark}

\section{Experiments}
In this section, we conduct numerical experiments to corroborate the efficacy of AClipped-dpSGD (in terms of Table \ref{tab1}). We apply our method and other private methods on the base machine learning tasks, i.e., solving ridge regression (RR) with $f(x,\xi) = (\langle x, \xi \rangle - y )^2$ objectives and logistic regression (LR) task with $f(x,\xi) = \log(1+\text{exp}(1+y\langle x, \xi \rangle ))$ objectives. We use the non-private AClipped-dpSGD method as our baseline method.

We test our proposed methods on synthetic data. Specifically, we generate the synthetic datasets from three heavy-tailed distributions, including \textit{Student's t-distribution}, \textit{Laplace distribution}, and \textit{Chi-squared distribution}. Each dataset has a size of 100,000 and each dataset is generated by the following models: 
\begin{equation*}
    \textit{RR}: \ y_i:=\langle x^*,\xi_i \rangle + e_i \ \text{and}\  \textit{LR}:\
     y_i = 2 * \text{sign}\left[\frac{1}{\exp{(\langle x^*,\xi_i \rangle+e_i)}}-\frac{1}{2}\right]-1,
\end{equation*}
where $x_i \in \mathbb{R}^{10}$ and $y_i \in \mathbb{R}.$ We denote $z_i$ as the zero mean noise that is generated from the above three heavy-tailed distributions. As an example, we generate the noise $z_i$ from \textit{Student's t-distribution}, i.e., $\mathbb{P}(z_i = t) = \frac{\Gamma(\frac{v+1}{2})}{\sqrt{\pi v}\Gamma(\frac{v}{2})}(1+\frac{t^2}{v})^{-(v+1)/2}$ where $v$ is the number of degrees of freedom and $\Gamma$ is the \textit{gamma function}, and then let $e_i = z_i - \mathbb{E}[z_i].$

We also test our methods on real-world data. To be consistent with the heavy-tailed setting, the datasets we chose from the LIBSVM library \citep{chang2011libsvm} are  \emph{Adult}, \emph{Diabetes}, on which the SGD is struggling especially on the \emph{Diabetes} data \citep{gorbunov2020stochastic}.  We compare AClipped-dpSGD with DP-GD of \cite{wang2020differentially}, DP-SGD of \cite{das2023beyond}. For a fair comparison, the batch size $m$ is fixed to $\sqrt{n}$ for stochastic methods and other model parameters are tuned according to the corresponding theoretical strategies. We consider four privacy levels - $(0.5, n^{-1})$-DP, $(0.75, n^{-1})$-DP, $(1, n^{-1})$-DP, and $(2, n^{-1})$-DP, with step size in the range $10^{-4} \sim 10^{-2}.$ More descriptions and results are presented in Appendix \ref{secB1}. 

\begin{figure*}[t]
	\centering
	\subfigure{
		\includegraphics[width=0.23\textwidth]{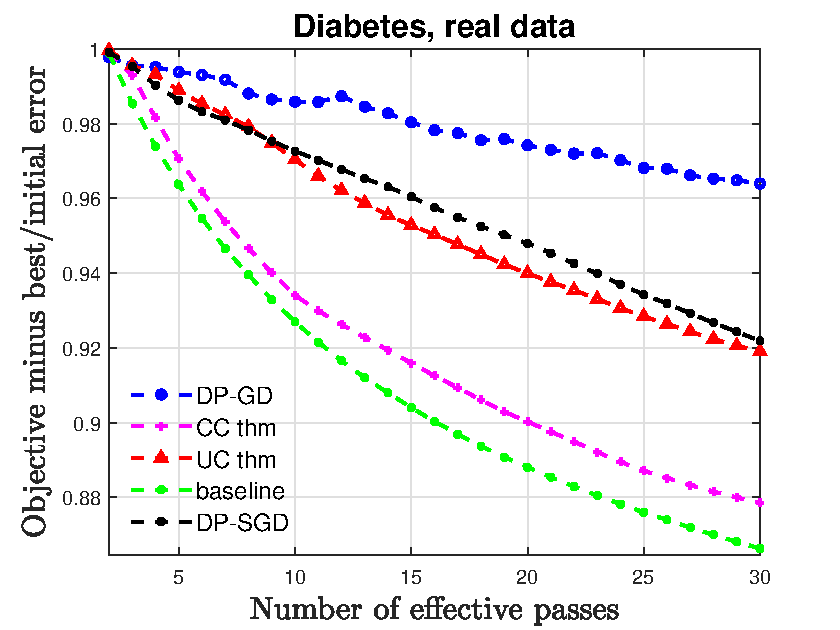}}
	\subfigure{
		\includegraphics[width=0.23\textwidth]{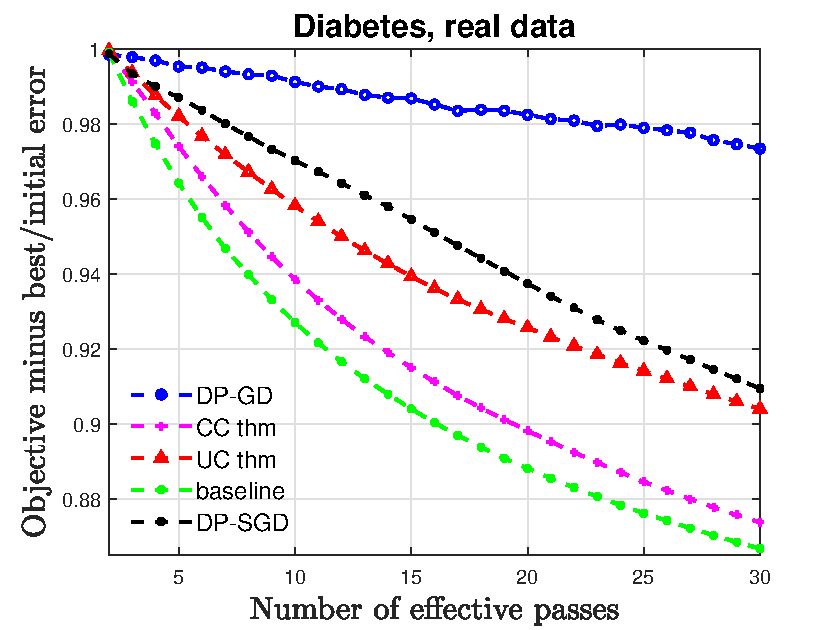}}
	\subfigure{
		\includegraphics[width=0.23\textwidth]{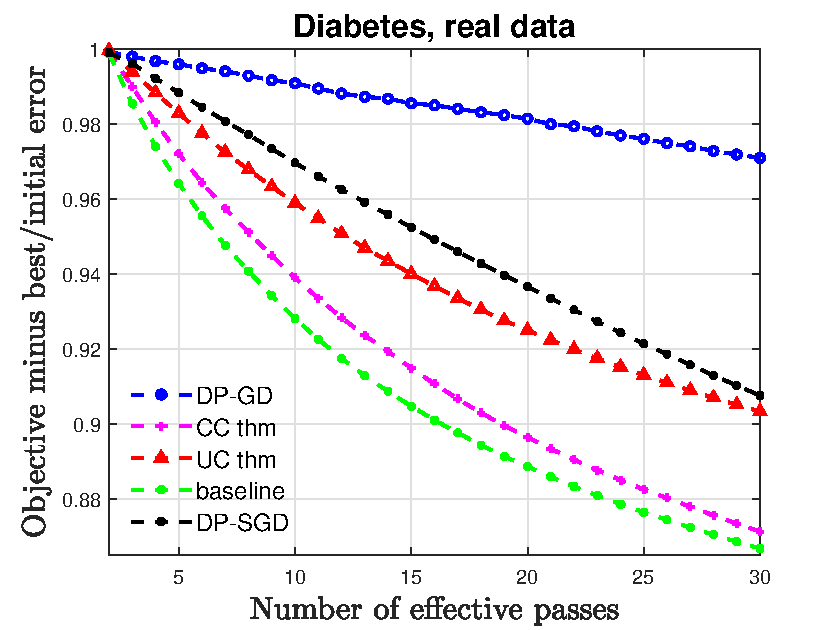}}
	\subfigure{
		\includegraphics[width=0.23\textwidth]{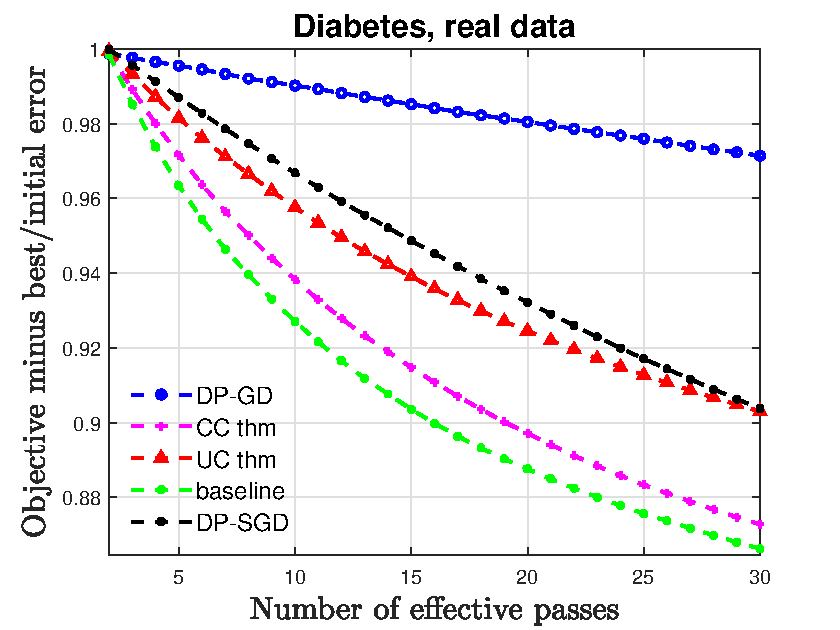}}
  \newline
   \setcounter{subfigure}{0}
  	\subfigure[$\epsilon = 0.5$]{
		\includegraphics[width=0.23\textwidth]{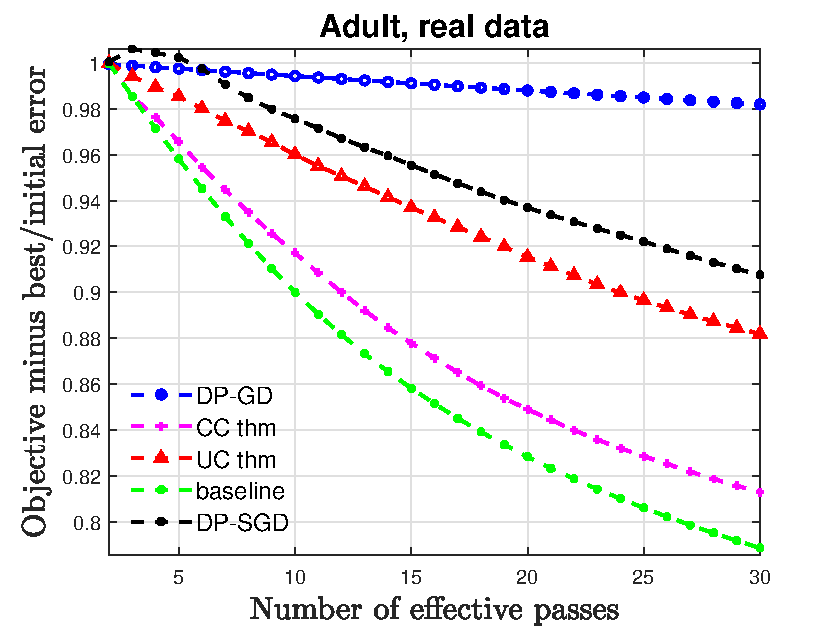}}
	\subfigure[$\epsilon = 0.75$]{
		\includegraphics[width=0.23\textwidth]{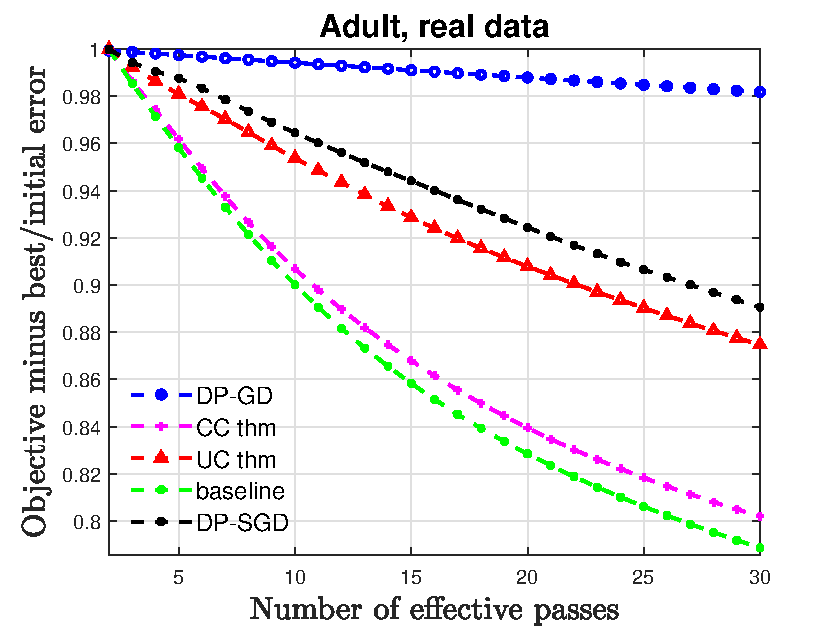}}
	\subfigure[$\epsilon = 1.0$]{
		\includegraphics[width=0.23\textwidth]{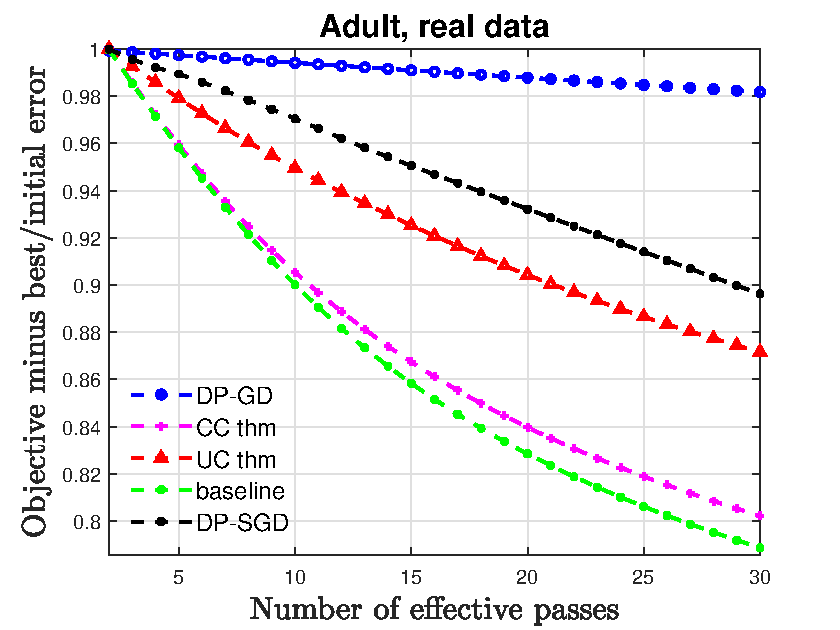}}
	\subfigure[$\epsilon = 2.0$]{
		\includegraphics[width=0.23\textwidth]{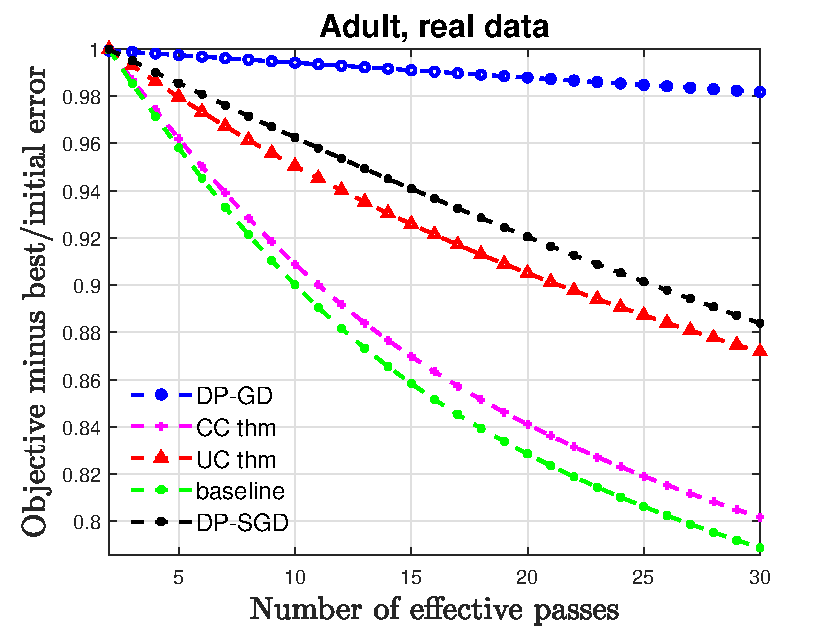}}
	\caption{Trajectories of the logistic regression model for the real-world data. The top and bottom rows correspond to the \textit{Diabetes} and \textit{Adult} datasets, respectively.}
    \label{fig1}
\end{figure*}

\begin{figure*}[t]
	\centering
	\subfigure[$\epsilon = 2$]{
		\includegraphics[width=0.31\textwidth]{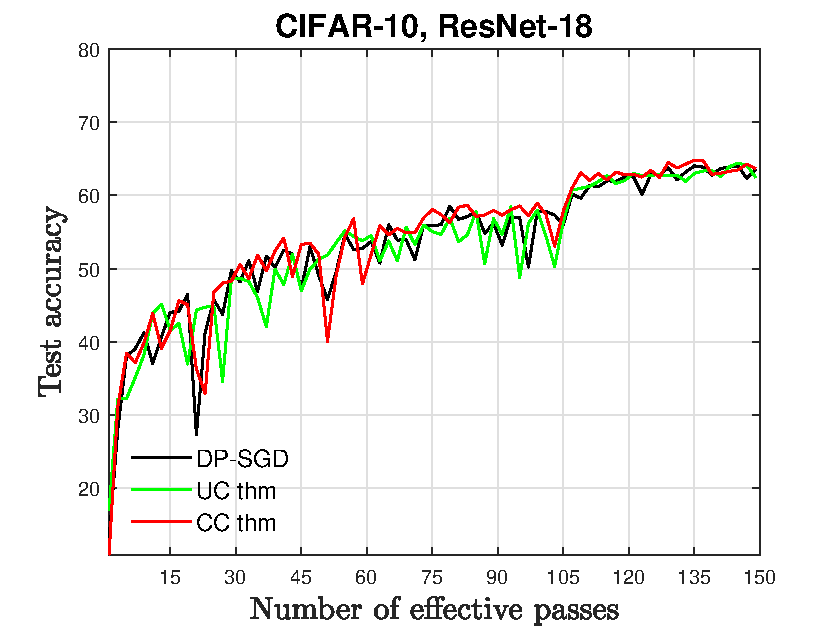}}
	\subfigure[$\epsilon = 4$]{
		\includegraphics[width=0.31\textwidth]{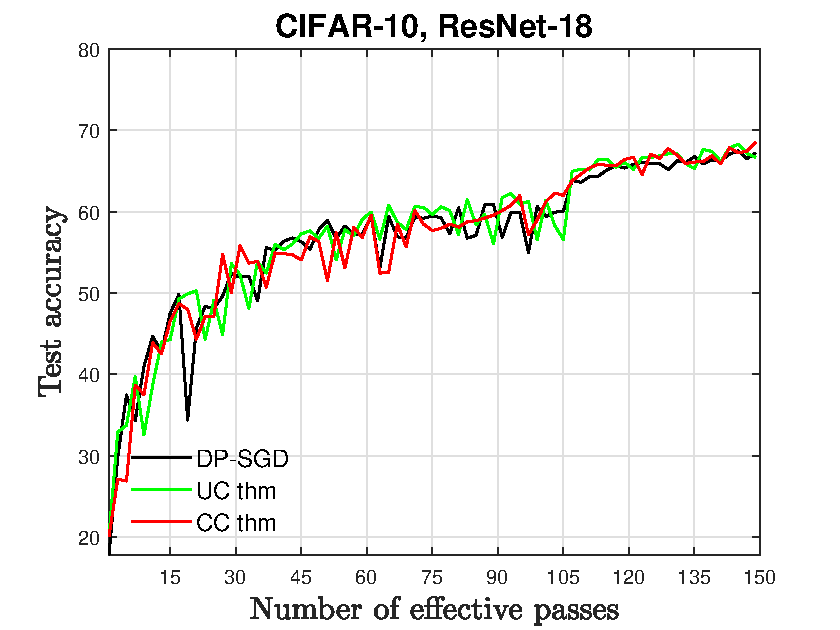}}
	\subfigure[$\epsilon = 6$]{
		\includegraphics[width=0.31\textwidth]{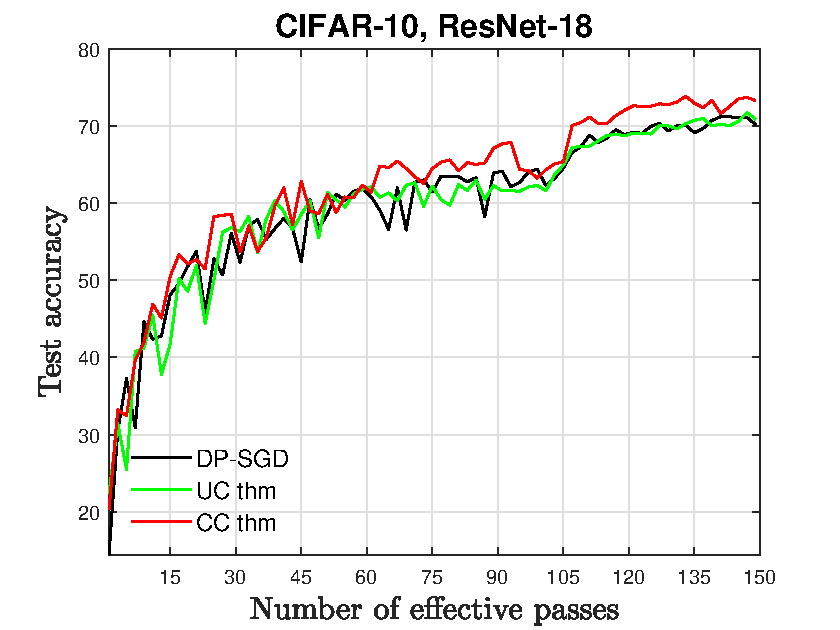}}
	\caption{Trajectories of test accuracy for different noise levels on \textit{CIFAR-10} dataset. We set the $\delta$ to $10^{-5},$ clipping level $\lambda$ to 30, and batch size $m$ to 500. All experiments are conducted on the NVIDIA RTX A6000 platform.}
    \label{figdp}
\end{figure*}

In Figure \ref{fig1} and \ref{fig2}, we plot the behavior of the loss function averaged over 300 times. Each time we 
run 30 and 400 epochs through the real-world datasets and synthetic datasets, respectively. We define the alias for AClipped-dpSGD, DP-SGD, DP-GD, baseline methods as the CC them (Theorem \ref{BC}), UC thm (Theorem \ref{ee}), DP-SGD, DP-GD, and baseline, respectively. Table \ref{table2} illuminates the corresponding estimation errors (divided
by initial error) and running times after being averaged over 300 times.

In addition to numerical tests, we further conduct experiments using a popular deep learning benchmark, that is, the \textit{CIFAR-10} dataset. This dataset consists of color images of the shape 32x32x3, categorized into 10 classes, with 50,000 training examples and 10,000 test examples. In our study, we measure the performances of DP-SGD and the proposed AClipped-dpSGD method by training the ResNet-18 model (without use of additional data or a pre-trained model) and reporting the trajectory of test accuracy. To ensure a fair comparison, we consider three private settings: $(2, 10^{-5})$-DP, $(4, 10^{-5})$-DP and $(6, 10^{-5})$-DP, with fixed batch size $m$ to 500, consistent with specifications in \cite{das2023beyond}. We tune the stepsize within the range $0.001 \sim 0.01.$ In Fig \ref{figdp}, we provide the test accuracy averaged over 5 independent runs and across 150 epochs. In Table \ref{tabledp}, we calculate the mean and standard deviation of the last 5 epochs across 5 independent runs. 

\begin{table*}[t]
    \caption{Summary of experimental results of private methods on the logistic model. Bold values mark the results of our methods.}
	\centering	
    \resizebox{\textwidth}{!}{
	\begin{tabular}{ccccccc|cccc}
		\toprule
		\multirow{2}{*}{Model} &\multirow{2}{*}{Data} &\multirow{2}{*}{$\epsilon$} & \multicolumn{4}{c}{Estimation Error}&\multicolumn{4}{|c}{Runtime (wall clock time): $s$}\\
		\cmidrule{4-7} \cmidrule{8-11}
		
		& & & CC & UC &DP-SGD & DP-GD & CC & UC &DP-SGD& DP-GD \\
		\hline
		
		
		\multirow{20}{*}{LR} &\multirow{4}{*}{\emph{Diabetes}}& 0.5  &
		\textbf{0.8772} & \textbf{0.9174} &	0.9195 &0.9724	 &\textbf{6.18E-4} & \textbf{5.87E-4} &7.14E-4&5.32E-3 \\

        \multirow{20}{*}{} &\multirow{4}{*}{} &0.75	&\textbf{0.8718}	&\textbf{0.9021}	&0.9070 &0.9706 &\textbf{5.91E-4} &\textbf{5.87E-4}	&7.00E-4	&5.27E-3\\

		\multirow{20}{*}{} &\multirow{4}{*}{} &1.0	&\textbf{0.8693}	&\textbf{0.9017}	&0.9051	&0.9702	&\textbf{5.63E-4}	&\textbf{5.69E-4} &6.98E-4 &5.61E-3\\

        \multirow{20}{*}{} &\multirow{4}{*}{} &2.0	&\textbf{0.8691}	&\textbf{0.9012}	&0.9012 &0.9632 &\textbf{6.06E-4} &\textbf{5.05E-4}	&7.49E-4	&5.49E-3\\
		\cmidrule{2-4} \cmidrule{5-6} \cmidrule{7-11}
		
		\multirow{20}{*}{}&\multirow{4}{*}{\emph{Adult}}&0.5	&\textbf{0.8103}	&\textbf{0.8791}	&0.9051	&0.9813	&\textbf{0.128}	&\textbf{0.129} &0.138 &8.388\\
		\multirow{20}{*}{}&\multirow{4}{*}{}&0.75	&\textbf{0.7993}	&\textbf{0.8718}	& 0.8875 	&0.9811	&\textbf{0.125}	&\textbf{0.126} &0.133 &8.297\\
  
		\multirow{20}{*}{}&\multirow{4}{*}{}&1.0	&\textbf{0.7992}	&\textbf{0.8689}	&0.8859	&0.9809	&\textbf{0.585}	&\textbf{0.599} &0.605 &9.264\\

		\multirow{20}{*}{}&\multirow{4}{*}{}&2.0	&\textbf{0.7986}	&\textbf{0.8679}	& 0.8805 	&0.9806	&\textbf{0.126}	&\textbf{0.125} &0.134 &8.317\\
        \cmidrule{2-4} \cmidrule{5-6} \cmidrule{7-11}

		\multirow{20}{*}{}&\multirow{4}{*}{\emph{Student's t}}&0.5	&\textbf{0.8517}	&\textbf{0.8672}	& 0.8796	&0.9058	&\textbf{1.380}	&\textbf{1.381} &1.580 &32.071\\
		\multirow{20}{*}{}&\multirow{4}{*}{}&0.75	&\textbf{0.8443}	&\textbf{0.8526}	& 0.8765 	&0.9055	&\textbf{1.396}	&\textbf{1.384} &1.610 &32.011\\
  
		\multirow{20}{*}{}&\multirow{4}{*}{}&1.0	&\textbf{0.8428}	&\textbf{0.8505}	& 0.8754 	&0.9053	&\textbf{1.396}	&\textbf{1.396} &1.611 &33.011\\

		\multirow{20}{*}{}&\multirow{4}{*}{}&2.0	&\textbf{0.8420}	&\textbf{0.8496}	& 0.8752 	&0.9049	&\textbf{1.395}	&\textbf{1.396} &1.613 &33.015\\
        \cmidrule{2-4} \cmidrule{5-6} \cmidrule{7-11}
        
		\multirow{20}{*}{}&\multirow{4}{*}{\emph{Laplace}}&0.5	&\textbf{0.5767}	&\textbf{0.6128}	& 0.6980	&0.7742	&\textbf{1.415}	&\textbf{1.398} &1.631 &36.645\\
		\multirow{20}{*}{}&\multirow{4}{*}{}&0.75	&\textbf{0.5709}	&\textbf{0.6098}	& 0.6965 	&0.7720	&\textbf{1.290}	&\textbf{1.289} &1.506 &31.339\\
  
		\multirow{20}{*}{}&\multirow{4}{*}{}&1.0	&\textbf{0.5702}	&\textbf{0.6056}	& 0.6960 	&0.7715	&\textbf{1.434}	&\textbf{1.390} &1.727 &32.231\\

		\multirow{20}{*}{}&\multirow{4}{*}{}&2.0	&\textbf{0.5679}	&\textbf{0.6042}	& 0.6957 	&0.7711	&\textbf{1.687}	&\textbf{1.673} &2.024 &32.501\\
        \cmidrule{2-4} \cmidrule{5-6} \cmidrule{7-11}
        
		\multirow{20}{*}{}&\multirow{4}{*}{\emph{$\chi^2$}}&0.5	&\textbf{0.6270}	&\textbf{0.6521}	& 0.7245	&0.7926	&\textbf{1.411}	&\textbf{1.418} &1.561 &32.011\\
		\multirow{20}{*}{}&\multirow{4}{*}{}&0.75	&\textbf{0.6165}	&\textbf{0.6448}	& 0.7230 	&0.7920	&\textbf{1.419}	&\textbf{1.428} &1.567 &32.384\\
  
		\multirow{20}{*}{}&\multirow{4}{*}{}&1.0	&\textbf{0.6118}	&\textbf{0.6432}	& 0.7226 	&0.7916	&\textbf{1.382}	&\textbf{1.392} &1.646 &32.805\\

		\multirow{20}{*}{}&\multirow{4}{*}{}&2.0	&\textbf{0.6113}	&\textbf{0.6421}	& 0.7221 	&0.7911	&\textbf{1.390}	&\textbf{1.401} &1.582 &32.443\\
		\botrule
	\end{tabular}}
	\label{table2}
\end{table*}

\begin{table*}[t]
    \caption{Summary of experimental results of private methods on the ResNet-18 model: average test accuracy $\pm$ standard deviation of the last 5 epochs across 5 independent runs, non-private baseline: $83.28\pm0.02$\% . Bold values mark the results of our methods.}
	\centering	
    \resizebox{\textwidth}{!}{
	\begin{tabular}{cccccc|ccc}
		\toprule
		\multirow{2}{*}{Model} &\multirow{2}{*}{Data} &\multirow{2}{*}{$\epsilon$} & \multicolumn{3}{c}{Test Accuracy}&\multicolumn{3}{|c}{GPU Runtime (per epoch): $s$}\\
		\cmidrule{4-6} \cmidrule{7-9}
		
		& & & CC & UC &DP-SGD  & CC & UC &DP-SGD \\
		\hline
		
		
		\multirow{3}{*}{ResNet-18} &\multirow{3}{*}{\emph{CIFAR-10}}& 2  &
		\textbf{64.23$\pm$0.51}\% & \textbf{63.69$\pm$0.59}\% &	63.09$\pm$0.46\% 	 &\textbf{27.38} & \textbf{27.63} &57.64\\

        \multirow{3}{*}{} &\multirow{3}{*}{} &4	&\textbf{67.78$\pm$0.32}\%	&\textbf{67.19$\pm$0.44}\%	&66.64$\pm$0.37\%  &\textbf{27.02} &\textbf{27.21}	&57.99	\\

		\multirow{3}{*}{} &\multirow{3}{*}{} &6	&\textbf{73.26$\pm$0.16}\%	&\textbf{71.12$\pm$0.18}\%	&70.99$\pm$0.22\%		&\textbf{27.12}	&\textbf{27.17} &58.46 \\
		\bottomrule
	\end{tabular}}
	\label{tabledp}
\end{table*}

\begin{figure*}[t]
	\centering
	\subfigure{
		\includegraphics[width=0.23\textwidth]{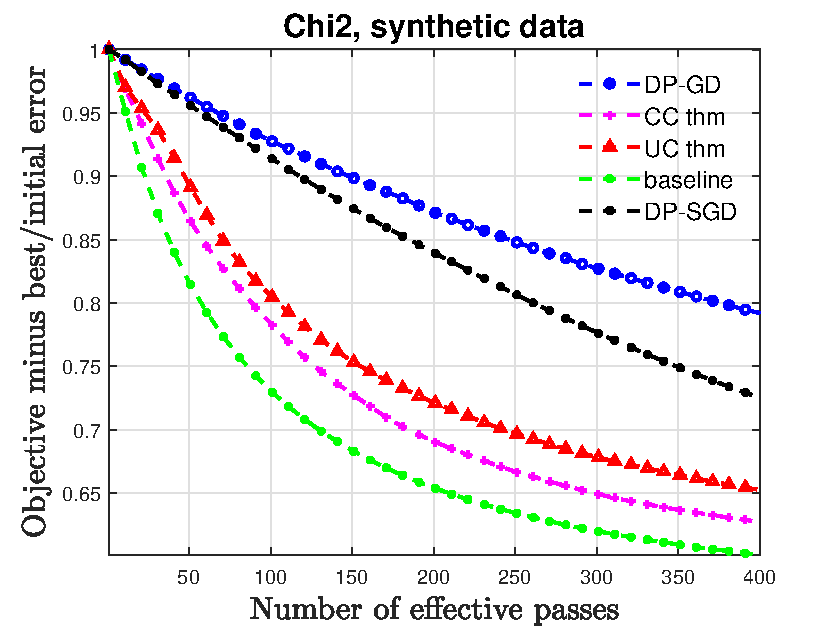}}
	\subfigure{
		\includegraphics[width=0.23\textwidth]{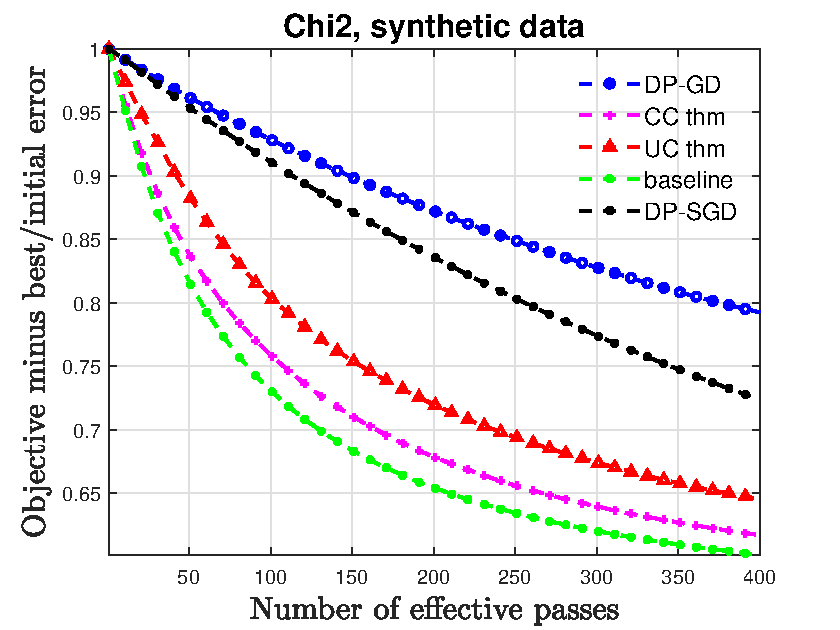}}
	\subfigure{
		\includegraphics[width=0.23\textwidth]{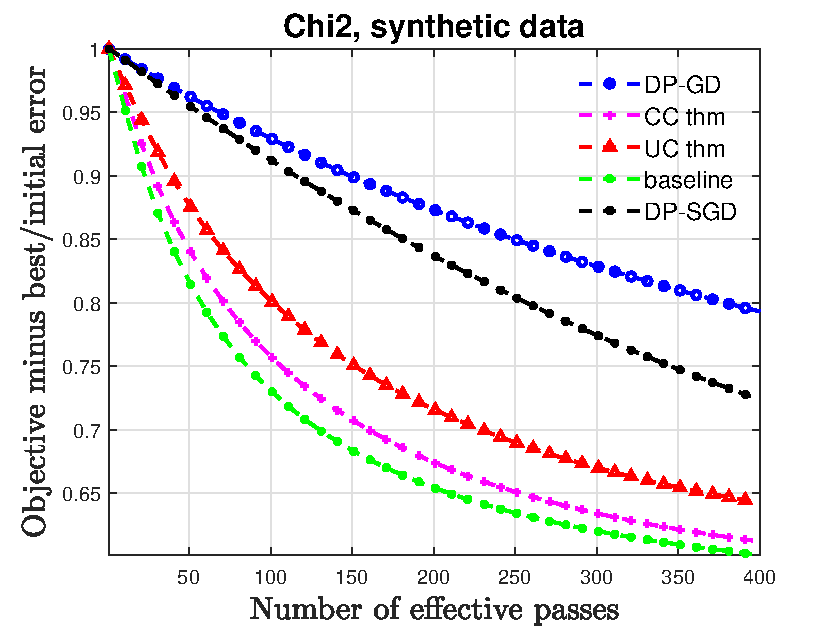}}
	\subfigure{
		\includegraphics[width=0.23\textwidth]{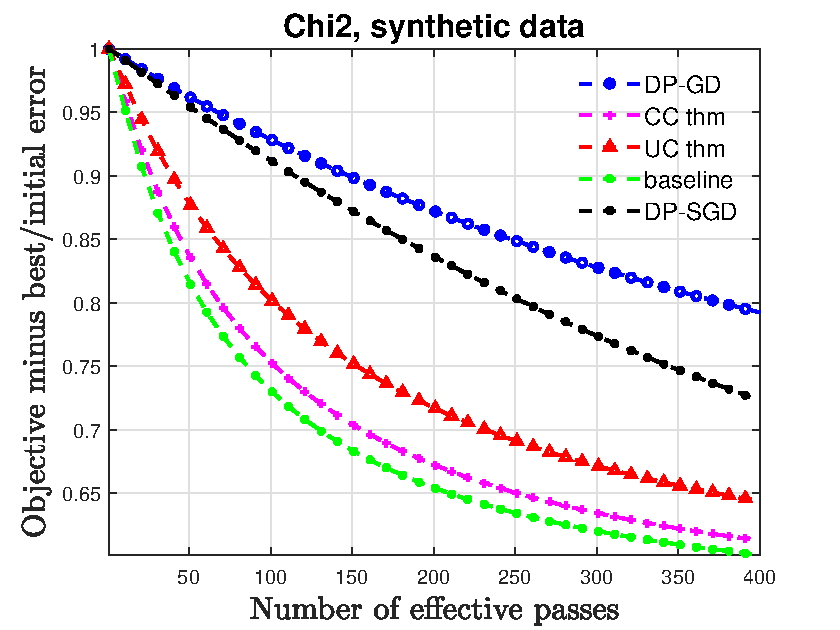}}
  \newline
  	\subfigure{
		\includegraphics[width=0.23\textwidth]{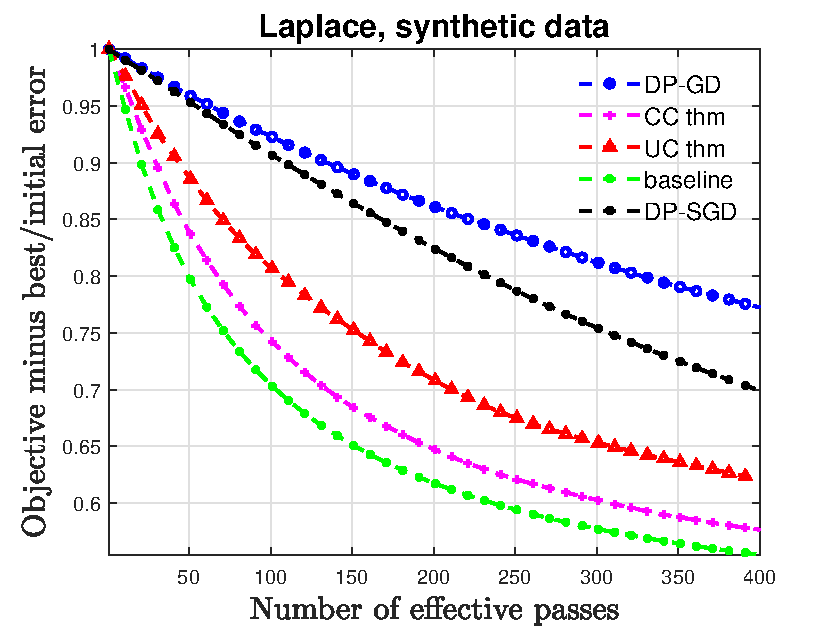}}
	\subfigure{
		\includegraphics[width=0.23\textwidth]{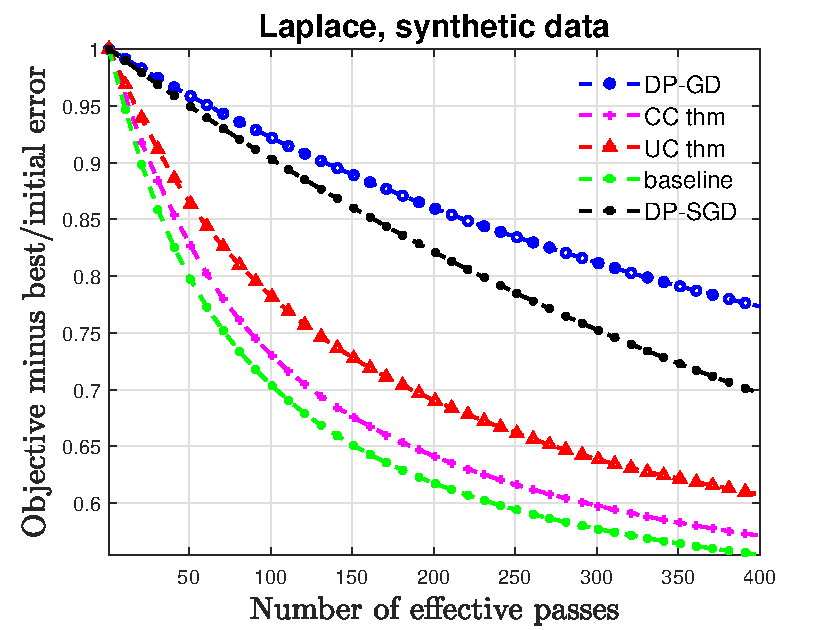}}
	\subfigure{
		\includegraphics[width=0.23\textwidth]{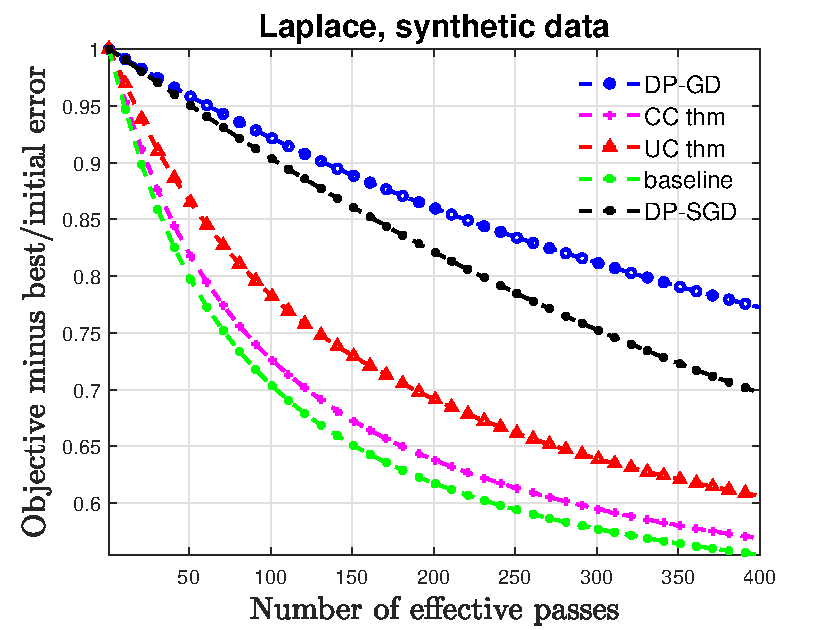}}
	\subfigure{
		\includegraphics[width=0.23\textwidth]{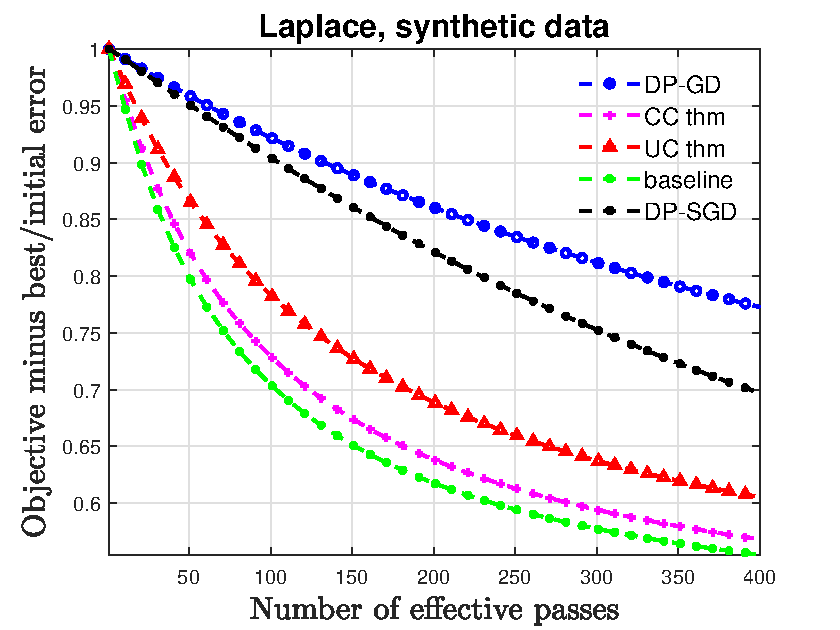}}
  \newline
   \setcounter{subfigure}{0}
  	\subfigure[$\epsilon = 0.5$]{
		\includegraphics[width=0.23\textwidth]{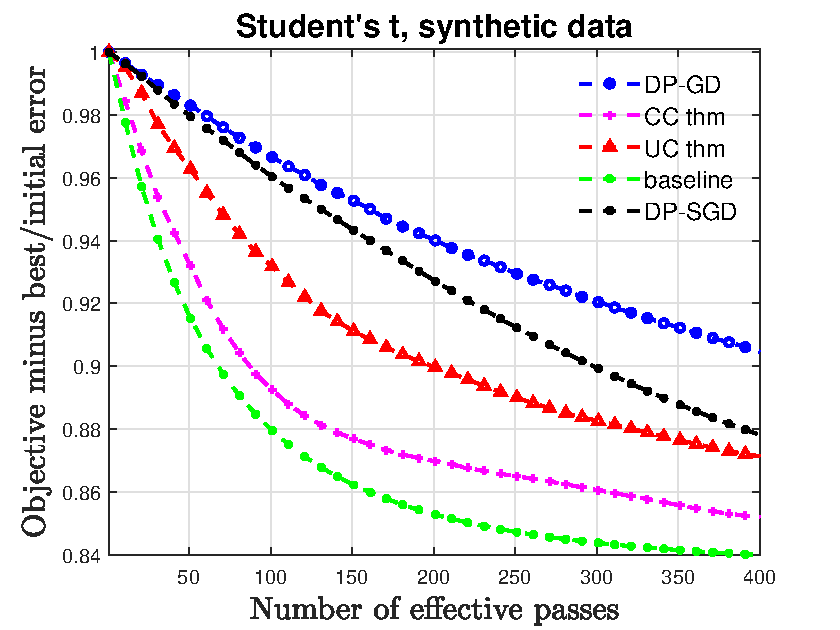}}
	\subfigure[$\epsilon = 0.75$]{
		\includegraphics[width=0.23\textwidth]{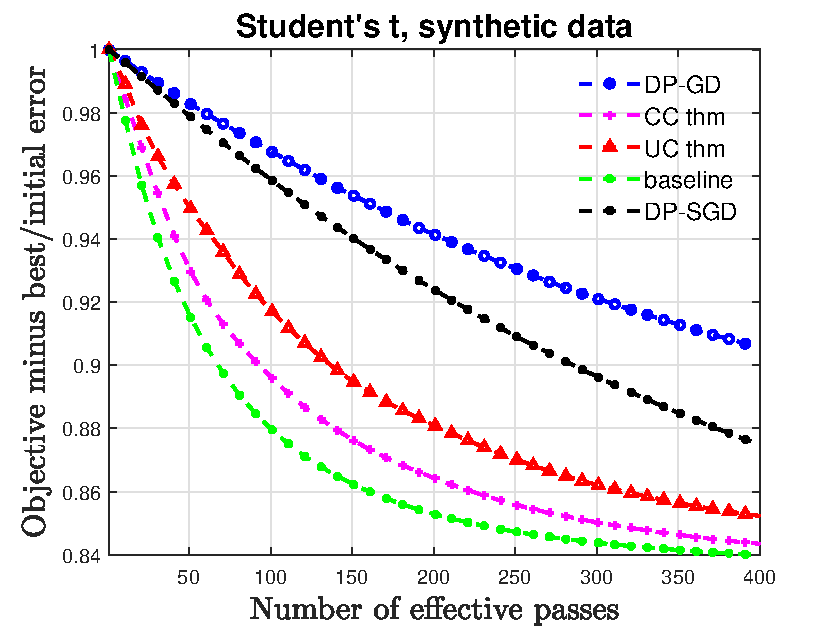}}
	\subfigure[$\epsilon = 1.0$]{
		\includegraphics[width=0.23\textwidth]{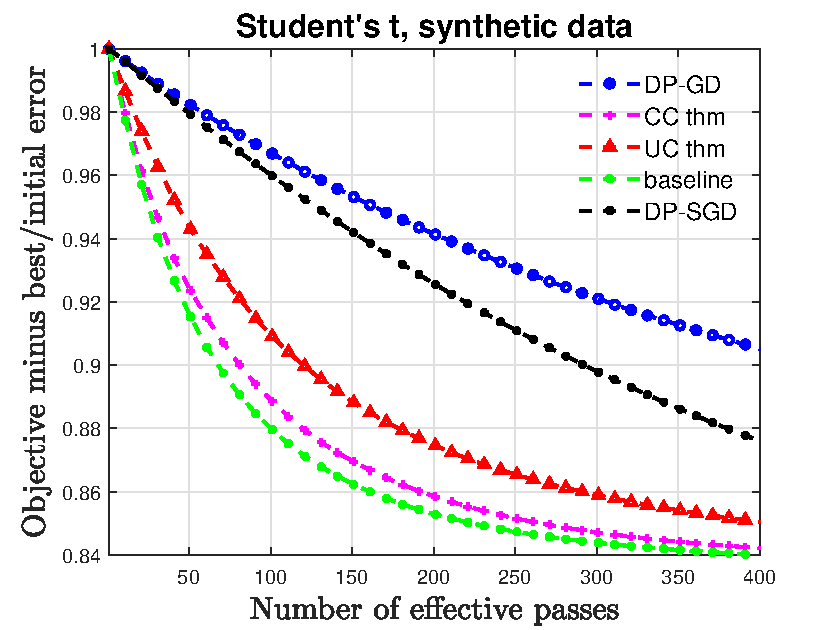}}
	\subfigure[$\epsilon = 2.0$]{
		\includegraphics[width=0.23\textwidth]{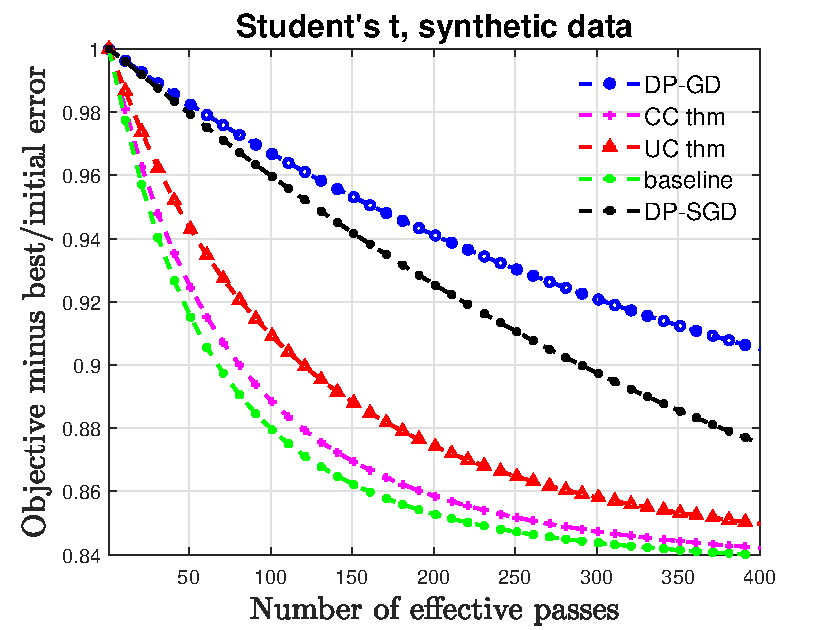}}
    \caption{Trajectories of the logistic regression model for the synthetic data. The three rows correspond to the \textit{Chi-squared distribution}, \textit{Laplace distribution}, and \textit{Student's t-distribution}, respectively.}
 \label{fig2}
\end{figure*}

From the experimental results, we observe that the stochastic methods consistently outperform the deterministic method. We also verify that the unconstrained AClipped-dpSGD performs worse than its constrained counterpart, but remains comparable to DP-SGD.
These results align with our previous theoretical analyses. Additionally, we note a significant improvement in the running time of AClipped-dpSGD compared to DP-SGD. This enhancement is due to AClipped-dpSGD requiring fewer clipping operations, as indicated by fewer clipping function calls during the training process in our source code. Specifically, the pre-iteration cost is decreased from  $O(m)$ to $O(1).$ Now that we have validated our insights in Table \ref{tab1} 
in terms of the risk bound and complexity based on empirical results. Moreover, as we increase the privacy budget $\epsilon,$ we observe overall improvements across all methods in terms of performance, smoothness, and stability. These findings confirm our previous theoretical analysis.

\section{Conclusions}
In this paper, we conduct a comprehensive study of DP-SCO for heavy-tailed data for the (strongly) convex and (non) smooth objectives. We first analyze the advantages of the one-clip strategy, AClip, and establish the expected high probability form of excess risk bounds and gradient complexities for the proposed AClipped-dpSGD method under constrained or unconstrained convex cases.  We show that for (strongly) convex and (non) smooth objectives, our method achieves better risk bounds or runs faster compared with previous works. Moreover, 
as the clipping method is widely used in (private) optimization areas \citep{papernot2021tempered,song2021evading},
our study of the averaged clipping strategy provides a new perspective on choices of clipping method in optimization algorithms and our theoretical analyses of AClipped-dpSGD, an application based on this strategy, are of great help to related studies. We also conduct a numerical study of the considered methods. The results confirm a better performance of our method.

Admittedly, our approach has some limitations. For example, we do not provide lower bounds for the convex and (non) smooth cases and we do not consider regularized problems.  It would also be interesting to generalize our approach to the general non-convex problems. 

\begin{appendices}

\section{Missing proofs}\label{secA1}
\subsection{Proof of Theorem \ref{cccc}} \label{appendix for thm1}
We start by introducing the following definition:
\begin{definition}	
	(Sub-Gaussian vector \citep{jin2019short}). A random vector $Z \in \mathbb{R}^d$ is said to be sub-Gaussian with variance $\sigma^2$ if it is centered and for any $u \in \mathbb{R}^d$ such that $|u|=1$, the real random variable $u^T Z$ is sub-Gaussian with variance $\sigma^2$. We write $Z \sim $  subG($\sigma^2$).
	
\end{definition}
\begin{proof}
    For all iteration $0 \leq k \leq N-1$, 
	\begin{align*}
	\sum_{t=0}^{k}\|\widetilde{\nabla} f(x^{t},\bm{\xi}^{t})-\nabla f(x^{t})\|_2 
	\leq{}& \sum_{t=0}^{k}\left(\|\widehat{\nabla}f(x^{t},\bm{\xi}^{t})-\nabla f(x^{t})\|_2+\|z^t\|_2\right)\\
	={}&\sum_{t=0}^{k}\|\widehat{\nabla}f(x^{t},\bm{\xi}^{t})-\nabla f(x^{t})\|_2+\sum_{t=0}^{k}\|z^t\|_2\\
    ={}&\underbrace{\sum_{t=0}^{k}\left(\|\theta_{t}^a\|_{2}-\mathbb{E}_{\bm{\xi}^{t}}[\|\theta_{t}^a\|_{2}]\right)}_{\circled{1}}+\underbrace{\sum_{t=0}^{k}\mathbb{E}_{\bm{\xi}^{t}}[\|\theta_{t}^a\|_{2}]}_{\circled{2}}\\	{}&+\underbrace{\sum_{t=0}^{k}\|\theta_{t}^b\|_{2}}_{\circled{3}}+\underbrace{\sum_{t=0}^{k}\|z^t\|_2}_{\circled{4}},
\end{align*}
where we introduce new notations:
	$$\theta_{t}^a \overset{\text{def}}{=}\widehat{\nabla}f(x^{t},\bm{\xi}^{t})-\mathbb{E}_{\bm{\xi}^{t}}\left[\widehat{\nabla}f(x^{t},\bm{\xi}^{t})\right],\       	
	\theta_{t}^b \overset{\text{def}}{=} \mathbb{E}_{\bm{\xi}^{t}}\left[\widehat{\nabla}f(x^{t},\bm{\xi}^{t})\right]-\nabla f(x^{t}).$$ 

Upper bound for $\circled{1}$. 
First of all,  we notice that the terms in $\circled{1}$ are conditionally unbiased:
	$$\mathbb{E}_{\bm{\xi}^{t}}\Bigl[\|\theta_{t}^a\|_{2}-\mathbb{E}_{\bm{\xi}^{t}}[\|\theta_{t}^a\|_{2}]\Bigl]=0.$$
	
	Secondly, the terms are bounded with probability 1:
	\begin{equation*}
	\bigl|\|\theta_{t}^a\|_{2}-\mathbb{E}_{\bm{\xi}^{t}}[\|\theta_{t}^a\|_{2}]\bigl| \leq \|\theta_{t}^a\|_{2}+\mathbb{E}_{\bm{\xi}^{t}}[\|\theta_{t}^a\|_{2}]\leq 4\lambda\overset{\text{def}}{=} c.
	\end{equation*}
	
	Finally, we can bound the conditional variances $\bar{\sigma}_{t}^2= \mathbb{E}_{\bm{\xi}^{t}}\left[\bigl|\|\theta_{t}^a\|_{2}-\mathbb{E}_{\bm{\xi}^{t}}[\|\theta_{t}^a\|_{2}]\bigl|^2\right]$ as follows:
	\begin{equation*}
	\bar{\sigma}_t^2 \leq   c\mathbb{E}_{\bm{\xi}^{t}}\left[\bigl|\|\theta_{t}^a\|_{2}-\mathbb{E}_{\bm{\xi}^{t}}[\|\theta_{t}^a\|_{2}]\bigl|\right]
	\leq c\mathbb{E}_{\bm{\xi}^{t}}\left[\|\theta_{t}^a\|_{2}+\mathbb{E}_{\bm{\xi}^{t}}[\|\theta_{t}^a\|_{2}\right]
	= 2c\mathbb{E}_{\bm{\xi}^{t}}[\|\theta_{t}^a\|_{2}].
	\end{equation*} 

Thus, the sequence $\left\{ \|\theta_{t}^a\|_{2}-\mathbb{E}_{\bm{\xi}^{t}}[\|\theta_{t}^a\|_{2}]  \right\}_{t\geq 0}$ is a bounded martingale differences sequence with bounded conditional variances $\{\bar{\sigma}_t^2\}_{t\geq0}$. Therefore, we can apply Bernstein's inequality \citep{freedman1975tail,dzhaparidze2001bernstein} with $X_t = \|\theta_{t}^a\|_{2}-\mathbb{E}_{\bm{\xi}^{t}}[\|\theta_{t}^a\|_{2}]$, $c=4\lambda$ \text{and} $F=\frac{c^2\ln\frac{4}{\beta}}{6}$ and get for all $b>0,$ it holds that
	$$\mathbb{P}\Bigl\{\sum_{t=0}^{k}\bigl|X_t\bigl| >b \  \text{and} \ \sum_{t=0}^{k}\bar{\sigma}_t^2 \leq F\Bigl\} \leq 2e^{-\frac{b^2}{2F+2cb/3}}.$$ 
	
	Thus, with probability at least $1-2\text{exp}\left(-\frac{b^2}{2F+2cb/3}\right),$ we have
	\begin{center}
		either \ $\sum_{t=0}^{k}\bigl|X_t\bigl| \leq b$ \ or \  $\sum_{t=0}^{k}\bar{\sigma}_t^2 > F.$
	\end{center}
 
	Here we choose $b$ in a way such that $2\text{exp}\left(-\frac{b^2}{2F+2cb/3}\right)\leq\frac{\beta}{2}.$ This implies that $$b^2-\frac{2c\ln\frac{4}{\beta}}{3}b-2F\ln\frac{4}{\beta}\geq 0.$$ 
	
	Hence, $$b \geq \frac{c\ln\frac{4}{\beta}}{3}+\sqrt{\frac{c^2\ln^2\frac{4}{\beta}}{9}+2F\ln\frac{4}{\beta}}.$$
	
    Next, in order to bound $\sum_{t=0}^{k}\bar{\sigma}_t^2$ with probability 1, we have the following inequality for $F$
	\begin{align*}
	    \sum_{t=0}^{k}\bar{\sigma}_t^2 \leq{}&  2c\sum_{t=0}^{k}\mathbb{E}_{\bm{\xi}^{t}}[\|\theta_{t}^a\|_{2}]
	    \overset{(\ref{variance bound})}{\leq}{}6\sqrt{2}c\sigma\sum_{t=0}^{k}\frac{1}{\sqrt{m}}\\
	    ={}& 6\sqrt{2}c\sigma\frac{k}{\sqrt{m}}
	    \overset{k \leq N}{\leq} \frac{c^2\ln\frac{4}{\beta}}{6}\\ ={}& F.
	\end{align*}
Therefore, we have shown that with probability at least $1-\frac{\beta}{2}$, $\sum_{t=0}^{k}\bigl|X_t\bigl| \leq b,$ i.e.,
	$$\sum_{t=0}^{k}\bigl|\|\theta_{t}^a\|_{2}-\mathbb{E}_{\bm{\xi}^{t}}\left[\|\theta_{t}^a\|_2\right]\bigl| \leq b,$$ where	
	$b=c\ln\frac{4}{\beta}=4\lambda\ln\frac{4}{\beta}$ as desired.

    Upper bound for $\circled{2}$.
	\begin{equation*}
	\begin{split}
	\circled{2} =&{} \sum_{t=0}^{k}\mathbb{E}_{\bm{\xi}^{t}}\left[\|\theta_{t}^a\|_{2}\right]
	\leq{} 3\sqrt{2}\sigma\sum_{t=0}^{k}\frac{1}{\sqrt{m}}\\
	=&{}3\sqrt{2}\sigma\frac{k}{\sqrt{m}} \overset{k\leq N}{\leq} 3\sqrt{2}\sigma\frac{N}{\sqrt{m}}\\
	=&{}\frac{\lambda\ln \frac{4}{\beta}}{3},
	\end{split} 	
	\end{equation*}
	where the first inequality also follows the variance bound (\ref{variance bound}) in Lemma \ref{bias}. 

    Upper bound for $\circled{3}$.
	\begin{equation*}
	\begin{split}
	\circled{3} =&{}\sum_{t=0}^{k}\|\theta_{t}^b\|_{2}
	\overset{(\ref{bias})}{\leq}{} 4\sigma^2\sum_{t=0}^{k}\frac{1}{m\lambda}\\
	=&{} 4\sigma^2\frac{k}{m\lambda} \overset{k\leq N}{\leq} 4\sigma^2\frac{N}{m\lambda}\\
	=&{} \frac{2\lambda \ln^2\frac{4}{\beta}}{81N}.
	\end{split} 	
	\end{equation*}

    Upper bound for $\circled{4}.$ 
	It is easy to verify that the injected Gaussian random noise $z \in \mathbb{R}^d, z \sim \mathcal{N}(0,\widehat{\sigma}^2 I^d)$ is subG($\widehat{\sigma}^2$). Thus the tail bound of the vector can be achieved by the Lemma 1 in \cite{jin2019short}:
	\begin{equation*}
	\mathbb{P}\left(\|z\|_2 \geq \widehat{t}\right) \leq 2e^{-\frac{\widehat{t}^2}{2\widehat{c}^2 d\widehat{\sigma}^2}},
	\end{equation*}
	where constant $\widehat{c}$  is selected as $2\sqrt{2}$ as the derivations in \cite{jin2019short}. We choose $\widehat{t}$ such that $2\text{exp}\left[-\frac{\widehat{t}^2}{2\widehat{c}^2d \widehat{\sigma}^2}\right]=\frac{\beta}{2k}.$

That is, we choose $\widehat{t}= \widehat{\sigma}\sqrt{16d \ln\frac{4k}{\beta}}$ so that with the probability at least $1-\frac{\beta}{2}$,
	$$\sum_{t=0}^{k}\|z^t\|_2 \leq k\widehat{\sigma}\sqrt{16d \ln\frac{4k}{\beta}}.$$

Finally, summarizing all the above bounds we have derived
	\begin{align*}
	&{}\sum_{t=0}^{k}\|\widetilde{\nabla} f(x^{t},\bm{\xi}^{t})-\nabla f(x^{t})\|_2 \\
	\leq&{}\underbrace{\sum_{t=0}^{k}\left(\|\theta_{t}^a\|_{2}-\mathbb{E}_{\bm{\xi}^{t}}\left[\|\theta_{t}^a\|_{2}\right]\right)}_{\circled{1}}+\underbrace{\sum_{t=0}^{k}\mathbb{E}_{\bm{\xi}^{t}}\left[\|\theta_{t}^a\|_{2}\right]}_{\circled{2}}+\underbrace{\sum_{t=0}^{k}\|\theta_{t}^b\|_{2}}_{\circled{3}}+\underbrace{\sum_{t=0}^{k}\|z^t\|_2}_{\circled{4}},\\
	&{} E_1:\mathbb{P}\left\{\circled{1} \leq 4\lambda\ln\frac{4}{\beta}\right\} \geq 1-\frac{\beta}{2},\\
	&{} E_2:\circled{2} \leq\frac{\lambda\ln \frac{4}{\beta}}{3},\ E_3:\circled{3} \leq \frac{2\lambda \ln^2\frac{4}{\beta}}{81N},\\
	&{}E_4:\mathbb{P}\left\{\circled{4} \leq  k\widehat{\sigma}\sqrt{16d \ln\frac{4k}{\beta}}\right\} \geq 1-\frac{\beta}{2}.
	\end{align*}

Then, we have for each event $E_i,\ i=1,2,3,4,\   \bigcap_{i=1}^{4}\mathbb{P}(E_{i})=1-\bigcup_{i=1}^{4}\mathbb{P}(\bar{E_{t}})\geq 1-\beta$, i.e., with probability at least $1-\beta$,
	\begin{equation*} \label{case1}
	\begin{split}
		&{}\sum_{t=0}^{k}\|\widetilde{\nabla} f(x^{t},\bm{\xi}^{t})-\nabla f(x^{t})\|_2 \leq\\ &{}\lambda\left(4\ln\frac{4}{\beta}+\frac{\ln \frac{4}{\beta}}{3}+\frac{2\ln^2\frac{4}{\beta}}{81N}+\frac{k\widehat{\sigma}\sqrt{16d \ln\frac{4k}{\beta}}}{\lambda}\right). 
	\end{split}
	\end{equation*}
\end{proof}

\subsection{Proof of Theorem \ref{BC}}  \label{appendix for thm3}
\begin{proof}
    For any $k$-th iteration, let  $\hat{x}^k:= x^{k}-\gamma \widetilde{\nabla} f(x^{k},\bm{\xi}^{k}),$ we have $\|x^{k+1}-x^*\|_2 \leq \|\hat{x}^k-x^*\|_2$ by the property of projection.
	Then with the convexity and $L$-smoothness of $f(x)$, we have the following inequality:
	\begin{align*}
	\|x^{k+1}-x^{*}\|_{2}\leq{}&\|\hat{x}^k-x^*\|_2\\
	={}&\|x^{k}-\gamma \widetilde{\nabla} f(x^{k},\bm{\xi}^{k}) -x^{*}\|_{2}\\
	\leq{}&\|x^{k}-\gamma \left(\widetilde{\nabla}f(x^{k},\bm{\xi}^{k})-\nabla f(x^k)+\nabla f(x^{k})\right) -x^{*}\|_{2}\\
	\leq{}&\|x^{k}-\gamma \nabla f(x^{k}) -x^{*}\|_{2}+\gamma \|\widetilde{\nabla}f(x^{k},\bm{\xi}^{k})-\nabla f(x^k)\|_{2}.		
	\end{align*}
	
	For the  first term, we can expand it as
	\begin{align*}
	\|x^{k}-\gamma{\nabla} f(x^{k}) -x^{*}\|_{2}^2={}&\|x^{k}-x^{*}\|_{2}^{2}+\gamma ^{2}\|\nabla f(x^{k})\|_{2}^{2}-2\gamma\left\langle x^{k}-x^{*},{\nabla} f(x^{k})\right\rangle\\
	\leq{}&\|x^{k}-x^{*}\|_{2}^{2}+\gamma ^{2}\|\nabla f(x^{k})\|_{2}^{2}-2\gamma(f(x^{k})-f(x^{*}))\\
	\leq{}&\|x^{k}-x^{*}\|_{2}^{2}+2\gamma ^{2}L(f(x^{k})-f(x^{*}))-2\gamma(f(x^{k})-f(x^{*}))\\
	\leq{}&\|x^{k}-x^{*}\|_{2}^{2}+2\gamma(\gamma L-1) \left(f(x^{k})-f(x^{*})\right)\\
	\leq{}&\|x^{k}-x^{*}\|_{2}^{2}\left(1-\frac{-2\gamma(\gamma L-1)}{\|x^{k}-x^{*}\|_{2}^{2}}(f(x^{k})-f(x^{*}))\right).
	\end{align*}

    Choosing $\gamma$ such that $1-\gamma L>0$ and using the inequality  $\sqrt{1-x}\leq 1-\frac{x}{2}$, we obtain
	\begin{align*}
	\|x^{k}-\gamma{\nabla} f(x^{k}) -x^{*}\|_{2}={}&\sqrt{\|x^{k}-\gamma{\nabla} f(x^{k}) -x^{*}\|_{2}^2}\\
	\leq{}&\|x^{k}-x^{*}\|_{2}\left(1-\frac{-\gamma(\gamma L-1)}{\|x^{k}-x^{*}\|_{2}^{2}}(f(x^{k})-f(x^{*}))\right)\\
	\leq{}&\|x^{k}-x^{*}\|_{2}+\frac{\gamma(\gamma L-1)}{\|x^{k}-x^{*}\|_{2}}(f(x^{k})-f(x^{*})).
	\end{align*}
	
	Thus, the following holds
	\begin{equation} \label{999}
    \begin{split}
        \|x^{k+1}-x^{*}\|_{2}\leq{}& \|x^{k}-x^{*}\|_{2}+\frac{\gamma(\gamma L-1)}{\|x^{k}-x^{*}\|_{2}}(f(x^{k})-f(x^{*}))\\
        &{}+\gamma \|\widetilde{\nabla}f(x^{k},\bm{\xi}^{k})-\nabla f(x^k)\|_{2}.
    \end{split}
	\end{equation}
 
    Using the notation $R_k \overset{\text{def}}{=}\|x^k-x^*\|_2,$    and summing up ($\ref{999}$) for $k=0,1,\ldots,N-1$,
	we derive that 
	$$\sum_{k=0}^{N-1}\frac{\gamma(1-\gamma L)}{R_k}(f(x^{k})-f(x^{*}) \leq R_0 - R_N + \gamma\sum_{k=0}^{N-1}\|\widetilde{\nabla}f(x^{k},\bm{\xi}^{k})-\nabla f(x^k)\|_{2}.$$

    Since $R_k \geq 0,$ it holds that
	$$\gamma(1-\gamma L)\sum_{k=0}^{N-1}\frac{(f(x^{k})-f(x^{*}))}{R_k}\leq R_0 + \gamma\sum_{k=0}^{N-1}\|\widetilde{\nabla}f(x^{k},\bm{\xi}^{k})-\nabla f(x^k)\|_{2}.$$

    By Assumption \ref{assump2}, we have $R_k \leq R_0$ for all $k=0,\ldots,N-1.$ Let us define $A=\frac{\gamma(1-\gamma L)}{R_0}>0,$ then we obtain
	\begin{align*}
	A\sum_{k=0}^{N-1}(f({x}^{k})-f(x^{*}))
	\leq{}& \gamma(1-\gamma L)\sum_{k=0}^{N-1}\frac{(f(x^{k})-f(x^{*}))}{R_k}\\
	\leq{}& R_0 + \gamma\sum_{k=0}^{N-1}\|\widetilde{\nabla}f(x^{k},\bm{\xi}^{k})-\nabla f(x^k)\|_{2}.
	\end{align*}

    Noting that $\bar{x}^{N}=\frac{1}{N}\sum_{k=0}^{N-1} x^{k}$, then with the Jensen's inequality $f(\bar{x}^{N}) = f\left(\frac{1}{N}\sum_{k = 0}^{N-1}x^k\right) \leq \frac{1}{N}\sum_{k = 0}^{N-1}f(x^k),$ we have
	\begin{equation} \label{T21}
	AN(f(\bar{x}^{N})-f(x^{*}))\leq R_{0}+ \gamma\sum_{k=0}^{N-1}\|\widetilde{\nabla}f(x^{k},\bm{\xi}^{k})-\nabla f(x^k)\|_{2}.
	\end{equation}
 
	Since $f$ is $L$-smooth, this implies
	\begin{equation*}
	\|\nabla f(x^{k})\|_{2} \leq L\|x^k-x^{*}\|_2 \leq LR_{0} \leq LR =\frac{\lambda}{2},
	\end{equation*}
	for $k=0,1,\ldots,N-1$ and the known constant $R \propto R_0,$ we implicitly let $\propto$ is a directly proportional operator with the constant $\hat{c} \geq 1$ (to enable a large range when tuning).  Then the clipping level $\lambda$ can be chosen as
	\begin{equation*}
	\lambda= 2LR,
	\end{equation*}
	which implies that Lemma $\ref{bias}$ holds for all $k$.

    Thus, for $k=N$ and $\widehat{\sigma} = \frac{\lambda m \sqrt{N\ln(1/\delta)}}{n\epsilon},$ using Theorem \ref{cccc} we have that with probability at least $1-\beta$,
	\begin{equation} \label{9999}
	\begin{split}
	&{}AN(f(\bar{x}^{N})-f(x^{*}))\\
	&{}\leq R_{0}+ 4\gamma\lambda\ln\frac{4}{\beta}+\gamma\lambda\frac{\ln \frac{4}{\beta}}{3}
	+\gamma\lambda\frac{2\ln^2\frac{4}{\beta}}{81N}+\gamma N\widehat{\sigma}\sqrt{16d \ln\frac{4N}{\beta}} \\
	&{}\leq R_{0}+ 4\gamma\lambda\ln\frac{4}{\beta}+\gamma\lambda\frac{\ln \frac{4}{\beta}}{3}
	+\gamma\lambda\frac{2\ln^2\frac{4}{\beta}}{81N}+\frac{\gamma\lambda Nm \sqrt{16dN \ln \frac{1}{\delta} \ln\frac{4N}{\beta}}}{n\epsilon}\\
	&{}\leq R_{0}+12\gamma LR\ln\frac{4}{\beta}+\frac{2\gamma LR Nm \sqrt{16dN \ln \frac{1}{\delta}\ln\frac{4N}{\beta}}}{n\epsilon}.	
	\end{split}
	\end{equation}
    where the first inequality follows by Theorem \ref{cccc} and inequality (\ref{T21}).

	Since $A=\frac{\gamma(1-\gamma L)}{R_0}$ and $1-\gamma L \geq \frac{1}{2}$, we get that with $\gamma = \frac{1}{2L\ln \frac{4}{\beta}}$ and probability at least $1-\beta$,
	\begin{equation*}
	\begin{split}
	f(\bar{x}^{N})-f(x^{*})
	\overset{(\ref{9999})}{\leq}{}& \frac{28LR^2 \ln\frac{4}{\beta}}{N}+\frac{4 LR^2 m \sqrt{16dN \ln \frac{1}{\delta} \ln\frac{4N}{\beta}}}{n\epsilon}\\
	\leq{}& O\left(\frac{LR^2\ln\frac{4}{\beta}}{N}+\frac{ N^{\frac{5}{2}} \sigma^2\sqrt{d\ln\frac{4N}{\beta}\ln
			\frac{1}{\delta}}}{ n\epsilon L\ln^2\frac{4}{\beta}}\right )\\
	\leq{}& O\left(\frac{LR^2\ln\frac{4}{\beta}}{N}+\frac{ N^{\frac{5}{2}} \sigma^2\sqrt{d\ln\frac{4N}{\beta}\ln
			\frac{1}{\delta}}}{ n\epsilon L}\right )		
	\end{split}
	\end{equation*}
 
	Thus if we take $N$ such that 
	$O\left(\frac{LR^2\ln\frac{4}{\beta}}{N}\right) \leq \left(\frac{ N^{\frac{5}{2}} \sigma^2\sqrt{d\ln
			\frac{1}{\delta}}}{ n\epsilon L}  \right) $ 
	we have with probability at least $1-\beta$, the following holds  for $n \geq \Omega(\frac{\sqrt{d}}{\epsilon}),$ 
	$$f(\bar{x}^{N})-f(x^{*}) \leq  \Tilde{O}\left(\frac{ (d\ln\frac{1}{\delta})^{\frac{1}{7}}\sqrt{\ln\frac{(n\epsilon)^{2}}{\beta d}}}{(n\epsilon)^{\frac{2}{7}}(\ln\frac{4}{\beta})^{\frac{1}{2}}}\right)$$
	where $N=\Tilde{O}\left(\frac{R^2n\epsilon}{\sqrt{d}}\right)^{\frac{2}{7}}.$ The total gradient complexity is computed as $\Tilde{O}(mdN),$ i.e., $\Tilde{O}\left(\max \left\{n^{\frac{2}{7}}d^{\frac{6}{7}},n^{\frac{6}{7}}d^{\frac{4}{7}}\right\}\right).$
\end{proof}
\subsection{Proof of Theorem \ref{ee}}  \label{appendix for thm4}
\begin{proof}
    With the parameter domain $\mathcal{X}$ being $\mathbb{R}^d,$ the convexity and smoothness of $f,$ we have the following inequality directly:
    \begin{align*}
	\|x^{k+1}-x^{*}\|_{2}={}&\|x^{k}-\gamma \widetilde{\nabla} f(x^{k},\bm{\xi}^{k}) -x^{*}\|_{2}\\
	\leq{}&\|x^{k}-\gamma \left(\widetilde{\nabla}f(x^{k},\bm{\xi}^{k})-\nabla f(x^k)+\nabla f(x^{k})\right) -x^{*}\|_{2}\\
	\leq{}&\|x^{k}-\gamma \nabla f(x^{k}) -x^{*}\|_{2}+\gamma \|\widetilde{\nabla}f(x^{k},\bm{\xi}^{k})-\nabla f(x^k)\|_{2}.	
	\end{align*}

	For the  first term on the right-hand side, we can expand it as
	\begin{align*}
	\|x^{k}-\gamma{\nabla} f(x^{k}) -x^{*}\|_{2}^2={}&\|x^{k}-x^{*}\|_{2}^{2}+\gamma ^{2}\|\nabla f(x^{k})\|_{2}^{2}-2\gamma\langle x^{k}-x^{*},{\nabla} f(x^{k})\rangle\\
	\leq{}&\|x^{k}-x^{*}\|_{2}^{2}+\gamma ^{2}\|\nabla f(x^{k})\|_{2}^{2}-2\gamma\left(f(x^{k})-f(x^{*})\right)\\
	\leq{}&\|x^{k}-x^{*}\|_{2}^{2}+2\gamma ^{2}L(f(x^{k})-f(x^{*}))-2\gamma(f(x^{k})-f(x^{*}))\\
	\leq{}&\|x^{k}-x^{*}\|_{2}^{2}+2\gamma(\gamma L-1) (f(x^{k})-f(x^{*}))\\
	\leq{}&\|x^{k}-x^{*}\|_{2}^{2}\left(1-\frac{-2\gamma(\gamma L-1)}{\|x^{k}-x^{*}\|_{2}^{2}}(f(x^{k})-f(x^{*}))\right).
	\end{align*}

	Choosing $\gamma$ such that $1-\gamma L>0$ and using the inequality $\sqrt{1-x}\leq 1-\frac{x}{2}$, we obtain
	\begin{align*}
	\|x^{k}-\gamma{\nabla} f(x^{k}) -x^{*}\|_{2}={}&\sqrt{\|x^{k}-\gamma{\nabla} f(x^{k}) -x^{*}\|_{2}^2}\\
	\leq{}&\|x^{k}-x^{*}\|_{2}\left(1-\frac{-\gamma(\gamma L-1)}{\|x^{k}-x^{*}\|_{2}^{2}}(f(x^{k})-f(x^{*}))\right)\\
	\leq{}&\|x^{k}-x^{*}\|_{2}+\frac{\gamma(\gamma L-1)}{\|x^{k}-x^{*}\|_{2}}(f(x^{k})-f(x^{*})).
	\end{align*}

	Thus, the following holds
	\begin{equation} \label{555}
    \begin{split}
        \|x^{k+1}-x^{*}\|_{2}\leq&{} \|x^{k}-x^{*}\|_{2}+\frac{\gamma(\gamma L-1)}{\|x^{k}-x^{*}\|_{2}}(f(x^{k})-f(x^{*}))\\
        &{}+\gamma \|\widetilde{\nabla}f(x^{k},\bm{\xi}^{k})-\nabla f(x^k)\|_{2}.
    \end{split}	
	\end{equation}

    Using the notation $R_k \overset{\text{def}}{=}\|x^k-x^*\|_2$, and summing up ($\ref{555}$) for $k=0,1,\ldots,N-1$,
	we derive that 
	$$\sum_{k=0}^{N-1}\frac{\gamma(1-\gamma L)}{R_k}(f(x^{k})-f(x^{*}) \leq R_0 - R_N + \gamma\sum_{k=0}^{N-1}\|\widetilde{\nabla}f(x^{k},\bm{\xi}^{k})-\nabla f(x^k)\|_{2}.$$

    Let us define $A=\gamma(1-\gamma L)>0,$ then
	\begin{equation} \label{cx888}
	A\sum_{k=0}^{N-1}\frac{(f(x^{k})-f(x^{*}))}{R_k}\leq R_0 -R_N+ \gamma\sum_{k=0}^{N-1}\|\widetilde{\nabla}f(x^{k},\bm{\xi}^{k})-\nabla f(x^k)\|_{2}.
	\end{equation}

    Taking into account that $$A\sum_{k=0}^{N-1}\frac{(f(x^{k})-f(x^{*}))}{R_k} \geq 0,$$ and changing the indices in (\ref{cx888}), we get that for all $T \geq 0$,
	\begin{equation} \label{RT}
	R_T \leq R_0 + \gamma\sum_{k=0}^{T-1}\|\widetilde{\nabla}f(x^{k},\bm{\xi}^{k})-\nabla f(x^k)\|_{2}.
	\end{equation}

	The remaining part of the proof is based on the analysis of inequality ($\ref{RT}$). In particular, via induction we prove that for all $T=0,\ldots,N$ with probability at least $1-T\beta,$  the following statement holds
	\begin{equation} \label{RTT}
	\begin{split}
	R_t \leq{}& R_0 + \gamma\sum_{k=0}^{t-1}\|\widetilde{\nabla}f(x^{k},\bm{\xi}^{k})-\nabla f(x^k)\|_{2}\\
	\leq{}& R_0+ \gamma\lambda 6\ln\frac{4}{\beta}+\frac{648\gamma N^3 \sigma^2\sqrt{dN\ln\frac{4N}{\beta}\ln
			\frac{1}{\delta}}}{\lambda n\epsilon\ln^2\frac{4}{\beta}},
	\end{split}	
	\end{equation}
	for all $t=0,\ldots,T$ simultaneously where $\gamma,\ \lambda$ will be defined further. Let us define $D:=\frac{648\gamma L N^3 \sigma^2\sqrt{dN\ln\frac{4N}{\beta}\ln
			\frac{1}{\delta}}}{ n\epsilon\ln^2\frac{4}{\beta}},$ and the probability event when  statement ($\ref{RTT}$) holds as $E_T.$ Then, our goal is to show that $\mathbb{P}(E_{T})\geq 1-T\beta$ for all $T=0,\ldots,N.$ Clearly, when $T=0,$ inequality ($\ref{RTT}$) holds with probability 1. Next, assuming
	that for $T\leq N-1,$ we have $\mathbb{P}(E_{T})\geq 1-T\beta.$ Let us prove that $\mathbb{P}(E_{T+1})\geq 1-(T+1)\beta.$ 

    First of all, probability event $E_T$ implies that
	\begin{align*}
	R_t \overset{(\ref{RT})}{\leq}{}& R_0 + \gamma\sum_{k=0}^{t-1}\|\widetilde{\nabla}f(x^{k},\bm{\xi}^{k})-\nabla f(x^k)\|_{2}\\
	\leq{}\ & R_0+ \gamma\lambda 6\ln\frac{4}{\beta}+\frac{D}{L\lambda},
	\end{align*}
	holds for $t=0,\ldots,T.$ Since $f(\cdot)$ is $L$-smooth, this implies
	\begin{equation*}
	\|\nabla f(x^{t})\|_{2} \leq L\|x^t-x^{*}\|_2 \leq L\left(R_0+ \gamma\lambda 6\ln\frac{4}{\beta}+\frac{D}{L\lambda}\right)\leq\frac{\lambda}{2},
	\end{equation*}
	for $t=0,\ldots,T.$ With $\gamma = \frac{1}{24L\ln\frac{4}{\beta}},$ we have 
	$$\frac{\lambda^2}{4}-LR_0\lambda-D \geq 0.$$

    Hence,
	$$\lambda \geq 2\left(LR_0+\sqrt{L^2R_0^2+D}\right).$$
	
	Then the clipping level $\lambda$ can be chosen as
	\begin{equation} \label{T31}
	\lambda= 4LR+2\sqrt{D}
	\end{equation}
	where a constant $R \propto \|x^0 - x^*\|_2$ with the proportional constant $\hat{c} \geq 1$ implicitly.

    Secondly, since event $E_T$ implies $\|\nabla f(x^{t})\|_{2} \leq \frac{\lambda}{2}$ holds for $t=0,\cdots,T,$ using Theorem \ref{cccc} with $\widehat{\sigma} = \frac{\lambda m \sqrt{N\ln(1/\delta)}}{n\epsilon},$ we have
	the following probability event, defined as  $E_{\circled{1}}$, holds with probability at least $1-\beta$,

	\begin{align*}
	{}&\sum_{k=0}^{T}\|\widetilde{\nabla} f(x^{k},\bm{\xi}^{k})-\nabla f(x^{k})\|_2\\ 
	&{}\leq \lambda\left(4\ln\frac{4}{\beta}+\frac{\ln \frac{4}{\beta}}{3}+\frac{2\ln^2\frac{4}{\beta}}{81N
	}+\frac{(T+1)\widehat{\sigma}\sqrt{16d \ln\frac{4(T+1)}{\beta}}}{\lambda}\right)\\
	&{}\leq \lambda\left(4\ln\frac{4}{\beta}+\frac{\ln \frac{4}{\beta}}{3}+\frac{2\ln^2\frac{4}{\beta}}{81N
	}+\frac{N\widehat{\sigma}\sqrt{16d \ln\frac{4N}{\beta}}}{\lambda}\right)\\
	&{}\leq \lambda 6\ln\frac{4}{\beta}+\frac{D}{L\lambda\gamma}.
	\end{align*}

	Finally, event $E_{T+1}$ :
	\begin{align*}
	R_{T+1} \leq R_0 + \gamma\sum_{k=0}^{T}\|\widetilde{\nabla}f(x^{k},\bm{\xi}^{k})-\nabla f(x^k)\|_{2}
	\leq R_0+ \gamma\lambda 6\ln\frac{4}{\beta}+\frac{D}{L\lambda},
	\end{align*}
	holds with probability $\mathbb{P}\{E_{T+1}\}=\mathbb{P}\{E_T \bigcap E_{\circled{1}} \}=1-\mathbb{P}\{\bar{E}_T\bigcup \bar{E}_{\circled{1}}\}\geq 1-(T+1)\beta.$ 

	Thus, we have proved that for all $k=0,\ldots,N,$ we have $\mathbb{P}\{E_k\}\geq 1-k\beta,$ which implies $R_k \leq R_0+ \gamma\lambda 6\ln\frac{4N}{\beta}+\frac{D}{L\lambda} \leq \frac{\lambda}{2L}$ with probability at least $1
	-k\beta.$

	Then, using (\ref{cx888}), for $T = N$ we have that with probability at least $1-N\beta,$
	\begin{align*}
	&2LA\sum_{k=0}^{N-1}\frac{(f(x^{k})-f(x^{*}))}{\lambda}\leq A\sum_{k=0}^{N-1}\frac{(f(x^{k})-f(x^{*}))}{R_k}\\
	&\overset{(\ref{cx888})}{\leq} R_0+ \gamma\sum_{k=0}^{N-1}\|\widetilde{\nabla}f(x^{k},\bm{\xi}^{k})-\nabla f(x^k)\|_{2}\\
	&\ \leq \ R_0+ \gamma\lambda 6\ln\frac{4}{\beta}+\frac{D}{L\lambda} \leq \frac{\lambda}{2L}.
	\end{align*}

	Noting that $\bar{x}^{N}=\frac{1}{N}\sum_{k=0}^{N-1} x^{k}$, then with the Jensen’s inequality, we have
	\begin{equation} \label{T32}
	\frac{2ANL}{\lambda}(f(\bar{x}^{N})-f(x^{*}))\leq \frac{\lambda}{2L}.	
	\end{equation}
	
	Since $A=\gamma(1-\gamma L)$ and $1-\gamma L \geq \frac{1}{2}$, we get that with probability at least $1-\beta$,
	\begin{align*}
	f(\bar{x}^{N})-f(x^{*}) \overset{(\ref{T32})}{\leq}&{} \frac{\lambda^2}{4AL^2N} \leq \frac{\lambda^2}{2\gamma L^2N} \overset{(\ref{T31})}{\leq} \frac{16L^2R^2+4D}{2\gamma L^2N}\\
	\leq{}& O\left(\frac{LR^2\ln\frac{4N}{\beta}}{N}+\frac{ N^{\frac{5}{2}} \sigma^2\sqrt{d
			\ln\frac{4N^2}{\beta}\ln
			\frac{1}{\delta}}}{ n\epsilon L\ln^2\frac{4N}{\beta}}\right)\\
	\leq{}& O\left(\frac{LR^2\ln\frac{4N}{\beta}}{N}+\frac{ N^{\frac{5}{2}} \sigma^2\ln\frac{4N}{\beta}\sqrt{d
			\ln
			\frac{1}{\delta}}}{ n\epsilon L}\right)
	\end{align*}

	Thus if we take $N$ such that
	$O\left(\frac{LR^2\ln\frac{4N}{\beta}}{N}\right) \leq \left(\frac{ N^{\frac{5}{2}} \sigma^2\ln\frac{4N}{\beta}\sqrt{d\ln
			\frac{1}{\delta}}}{ n\epsilon L}  \right),$
	we have with probability at least $1-\beta$, the following holds for $n \geq \Omega(\frac{\sqrt{d}}{\epsilon}),$ 
	$$f(\bar{x}^{N})-f(x^{*}) \leq  \Tilde{O}\left(\frac{ (d\ln\frac{1}{\delta})^{\frac{1}{7}}\ln\frac{(n\epsilon)^2}{\beta d}}{(n\epsilon)^{\frac{2}{7}}}\right),$$
	where $N=\Tilde{O}\left(\frac{R^2n\epsilon}{\sqrt{d\ln\frac{1}{\delta}}}\right)^{\frac{2}{7}}.$ The total gradient complexity is $\Tilde{O}\left(\max \left\{n^{\frac{2}{7}}d^{\frac{6}{7}},n^{\frac{6}{7}}d^{\frac{4}{7}}\right\}\right).$
\end{proof}

\subsection{Proof of Theorem \ref{h}} \label{appendix for thm6}
\begin{proof}
	Consider the first run of AClipped-dpSGD (Algorithm \ref{algorithm}). Observed
	that the proof of Theorem 3 still holds if we substitute $R_0$ everywhere by its upper bound, using $\mu$-strongly convexity of $f(\cdot)$ and the fact that $R \propto \|x^k-x^*\|_2$ with the proportional constant $\hat{c} \geq 1,$ we have
	$$R^2 = \hat{c}^2 R_0^2=\hat{c}^2\|x^0-x^*\|_2^2\leq \frac{2\hat{c}^2}{\mu}(f(x^0)-f(x^*)).$$ 

	It implies that after $N_0$ iterations of AClipped-dpSGD, we have
	$$f(\bar{x}^{N_0})-f(x^*) \leq \frac{384\hat{c}^2L\ln\frac{4N_0}{\beta}}{N_0\mu}(f(x^0)-f(x^*))+O\left(\frac{ N_0^{\frac{5}{2}} \sigma^2\sqrt{d\ln\frac{4N_0^2}{\beta}\ln
			\frac{1}{\widehat{\delta}}}}{ n\widehat{\epsilon} L\ln^2\frac{4N_0}{\beta}}\right),$$  with probability at least $1-\beta.$

	Thus, taking $\frac{N_0}{\ln\frac{4N_0}{\beta}}\geq \frac{768\hat{c}^2L}{\mu}$, we have 
	$$f(\bar{x}^{N_0})-f(x^*) \leq \frac{1}{2}(f(x^0)-f(x^*))+O\left(\frac{ N_0^{\frac{5}{2}} \sigma^2\sqrt{d\ln\frac{4N_0^2}{\beta}\ln
			\frac{1}{\widehat{\delta}}}}{ n\widehat{\epsilon} L\ln^2\frac{4N_0}{\beta}}\right).$$

	Then, by induction, we can show that for arbitrary $k=0,1,\ldots,\tau-1$, the inequality 
	$$f(\widehat{x}^{k+1})-f(x^*) \leq \frac{1}{2}(f(\widehat{x}^{k})-f(x^*))+O\left(\frac{ N_0^{\frac{5}{2}} \sigma^2\sqrt{d\ln\frac{4N_0^2}{\beta}\ln
			\frac{1}{\widehat{\delta}}}}{ n\widehat{\epsilon} L\ln^2\frac{4N_0}{\beta}}\right),$$
	holds with probability at least $1-\beta.$
	Therefore, we have
	\begin{align*}
	f(\widehat{x}^{\tau})-f(x^*) \leq{}& \frac{1}{2^\tau}(f(x^{0})-f(x^*))+O\left(\frac{ N_0^{\frac{5}{2}} \sigma^2\sqrt{d\ln\frac{4N_0^2}{\beta}\ln
			\frac{1}{\widehat{\delta}}}}{ n\widehat{\epsilon} L\ln^2\frac{4N_0}{\beta}}\right)
	\end{align*} 
	holds with probability at least $ 1-\tau\beta.$

	Hence, taking $\tau = O(\frac{L}{\mu}\log n)$ and $N_0=O(\frac{L}{\mu}\ln \frac{L^2}{(\mu\beta)^2}),$ we have
	\begin{align*}
	f(\widehat{x}^{\tau})-f(x^*)
	\leq{}& \Tilde{O}\left(\frac{d^{\frac{1}{2}}L^2 \sqrt{\log n}(\ln\frac{L^2}{\mu^2\beta^2 })^3 \ln\frac{L\log n}{\delta \mu}}{ \mu^3n\epsilon (\ln\frac{L}{\mu\beta})^2}\right)+\frac{f(x^0)-f(x^*)}{n^{\frac{L}{\mu}}}\\
	\leq{}&  \Tilde{O}\left(\frac{d^{\frac{1}{2}}L^2 \sqrt{\log n}\ln\frac{L}{\mu\beta} \ln\frac{L}{\delta \mu}}{ \mu^3n\epsilon}\right)	,
	\end{align*}
	holds with probability at least $1-\tau\beta.$ 
 
    The total gradient complexity is $\Tilde{O}\left(\max\left\{d(\frac{L}{\mu})^2,nd^{\frac{1}{2}}\right\}\right).$
\end{proof}

\subsection{Proof of Theorem \ref{NONS}} \label{appendix for thm7}
\begin{proof}
    The proof is basically different from the above analysis since H$\ddot{\text{o}}$lder continuity introduces other constant terms that need to be bounded. We start it from scratch. Since $f(x)$ is convex and its gradient satisfies inequality (\ref{nonsmooth}), we have the following inequality with assuming $x^k \in B_{7R}$:
	\begin{align*}
	&\|x^{k+1}-x^{*}\|_{2}^2=\|x^{k}-\gamma \widetilde{\nabla} f(x^{k},\bm{\xi}^{k}) -x^{*}\|_{2}^2\\
	\leq{}&\|x^{k}-x^*\|_2^2+\gamma^2\|\widetilde{\nabla}f(x^{k},\bm{\xi}^{k})\|_2^2 - 2\gamma\left\langle x^k-x^*,\widetilde{\nabla}f(x^{k},\bm{\xi}^{k})\right\rangle\\
	\leq{}&\|x^{k}-x^*\|_2^2+2\gamma^2\|\widehat{\nabla}f(x^{k},\bm{\xi}^{k})\|_2^2+2\gamma^2\|z^k\|_2^2 - 2\gamma\left\langle x^k-x^*,\widehat{\nabla}f(x^{k},\bm{\xi}^{k})+z^k\right\rangle\\
	={}&\|x^{k}-x^*\|_2^2+2\gamma^2\|\theta_k+\nabla f(x^k)\|_2^2+2\gamma^2\|z^k\|_2^2 - 2\gamma\left\langle x^k-x^*,\theta_k+\nabla f(x^k)+z^k\right\rangle\\
	\leq{}&\|x^{k}-x^*\|_2^2+4\gamma^2\|\theta_k\|_2^2+4\gamma^2\|\nabla f(x^{k})\|_2^2+2\gamma^2\|z^k\|_2^2\\
	&{}-2\gamma\left\langle x^k-x^*, \theta_k\right\rangle-2\gamma\left\langle x^k-x^*,\nabla f(x^{k})\right\rangle-2\gamma\left\langle x^k-x^*,z^k\right\rangle\\
	\leq{}&\|x^{k}-x^{*}\|_{2}^{2}+2\gamma\left(4\gamma \left(\frac{1}{\alpha}\right)^{\frac{1-\nu}{1+\nu}}M_\nu^{\frac{2}{1+\nu}}-1\right) (f(x^{k})-f(x^{*}))+4\gamma^2\|\theta_k\|_2^2\\
	&{}+2\gamma^2\|z^k\|_2^2-2\gamma\left\langle x^k-x^*,\theta_k\right\rangle-2\gamma\left\langle x^k-x^*,z^k\right\rangle+
	4\gamma^2\alpha^{\frac{2\nu}{1+\nu}}M_\nu^{\frac{2}{1+\nu}},	
	\end{align*}
    where $\theta_k = \widehat{\nabla}f(x^{k},\bm{\xi}^{k})-\nabla f(x^{k})$ and the last inequality follows from the convexity of $f$ and Lemma A.5 from \cite{Gorbunov2021near}.

	Letting the constant $\mathcal{M}:= \left(\frac{1}{\alpha}\right)^{\frac{1-\nu}{1+\nu}}M_\nu^{\frac{2}{1+\nu}}$ and  choosing $\gamma$ such that $1-4\gamma \mathcal{M}>0,$ together with the notation $R_k \overset{\text{def}}{=}\|x^k-x^*\|_2,k>0$, we obtain that for all $k\geq0$
	\begin{equation}\label{n555}	\begin{split}
	R_{k+1}^2\leq{}& R_k^{2}+2\gamma\left(4\gamma\mathcal{M}-1\right) (f(x^{k})-f(x^{*}))+4\gamma^2\|\theta_k\|_2^2+2\gamma^2\|z^k\|_2^2\\
	&{}-2\gamma\left\langle x^k-x^*,\theta_k\right\rangle-2\gamma\left\langle x^k-x^*,z^k\right\rangle+
	4\gamma^2\alpha^{\frac{2\nu}{1+\nu}}M_\nu^{\frac{2}{1+\nu}}
	\end{split}
	\end{equation}
    under the assumption $x^k \in B_{7R}$. 	Let us define $A=2\gamma(1-4\gamma \mathcal{M})>0,$ and summing up ($\ref{n555}$) for $k=0,1,\ldots,N-1$,
	we derive that under the assumption $x^k \in B_{7R}$ for $k=0,\cdots,N-1,$

	\begin{equation} \label{888}
	\begin{split}
	    &A\sum_{k=0}^{N-1}(f(x^{k})-f(x^{*}))\\
     \leq{}& R_0^2 -R_N^2+ 4\gamma^2\sum_{k=0}^{N-1}\|\theta_k\|_{2}^2+2\gamma^2\sum_{k=0}^{N-1}\|z^k\|_{2}^2+4\gamma^2N\alpha^{\frac{2\nu}{1+\nu}}M_\nu^{\frac{2}{1+\nu}}\\
	&{}-2\gamma\sum_{k=0}^{N-1}\left\langle x^k-x^*,\theta_k\right\rangle-2\gamma\sum_{k=0}^{N-1}\left\langle x^k-x^*,z^k\right\rangle.
	\end{split}
	\end{equation}

	Taking into account that $A\sum_{k=0}^{N-1}(f(x^{k})-f(x^{*})) \geq 0,$ and changing the indices in (\ref{888}), we get that for all $T=0,\cdots,N$, 

	\begin{equation} \label{RT1}
	\begin{split}
	    R_T^2\leq{}& R_0^2 + 4\gamma^2\sum_{t=0}^{T-1}\|\theta_t\|_{2}^2+2\gamma^2\sum_{t=0}^{T-1}\|z^t\|_{2}^2+4\gamma^2T\alpha^{\frac{2\nu}{1+\nu}}M_\nu^{\frac{2}{1+\nu}}\\
	   &{}-2\gamma\sum_{t=0}^{T-1}\left\langle x^t-x^*,\theta_t\right\rangle-2\gamma\sum_{t=0}^{T-1}\left\langle x^t-x^*,z^t\right\rangle
	\end{split}
	\end{equation}
	under assumption that $x^t \in B_{7R}$ for $t=0,\cdots,T-1.$
	The remaining part of the proof is based on the analysis of inequality ($\ref{RT1}$). In particular, via induction we prove that for all $T=0,\ldots,N$ with probability at least $1-T\beta,$  the following statement holds
	\begin{equation} \label{RT2}
	\begin{split}
	 R_k^2\leq{}& R_0^2 + 4\gamma^2\sum_{t=0}^{k-1}\|\theta_t\|_{2}^2+2\gamma^2\sum_{t=0}^{k-1}\|z^t\|_{2}^2+4\gamma^2k\alpha^{\frac{2\nu}{1+\nu}}M_\nu^{\frac{2}{1+\nu}}\\
	&{}-2\gamma\sum_{t=0}^{k-1}\left\langle x^t-x^*,\theta_t\right\rangle-2\gamma\sum_{t=0}^{k-1}\left\langle x^t-x^*,z^t\right\rangle\\
	\leq{}& C^2R^2 
	\end{split}	
	\end{equation}
	for all $k=0,\ldots,T$ simultaneously where $C$ is defined as $7$. Let us define 
	the probability event when  statement ($\ref{RT2}$) holds as $E_T.$ Then, our goal is to show that $\mathbb{P}(E_{T})\geq 1-T\beta$ for all $T=0,\ldots,N.$ Clearly, when $T=0,$ inequality ($\ref{RT2}$) holds with probability 1. Next, assuming
	that for $T\leq N-1,$ we have $\mathbb{P}(E_{T})\geq 1-T\beta.$ Let us prove that $\mathbb{P}(E_{T+1})\geq 1-(T+1)\beta.$

	First of all, the probability event
	$E_T$ implies that $x^t \in B_{7R}$ for $t=0,\cdots,T.$
	Since $f(\cdot)$ is $(\nu,M_\nu)$-H$\ddot{\text{o}}$lder continuous on $B_{7R}(x^*)$, probability event $E_T$ implies that
	\begin{equation*}
	\|\nabla f(x^{t})\|_{2} \leq M_\nu\|x^t-x^{*}\|_2^\nu  \leq M_\nu C^\nu R^\nu \leq \frac{\lambda}{2},
	\end{equation*}
	holds for $t=0,\ldots,T.$

	Next, we introduce new random variables:
	\begin{equation}
	    \eta_t = \begin{cases}
	    x^*-x^t, &\text{if}\ \|x^*-x^t\|_2 \leq CR\\
	    0, & \text{otherwise}
	    \end{cases}
	\end{equation}
	for $t=0,\ldots,T.$ Note that these random variables are bounded with probability 1, i.e. with probability 1 we have
    \begin{equation} \label{inner_nons}
        \|\eta_t\|_2 \leq CR.
    \end{equation}

	Using the introduced notation, let $\gamma \leq \frac{R}{2\sqrt{N}\alpha^{\frac{\nu}{1+\nu}}M_\nu^{\frac{1}{1+\nu}}},$ we obtain that event $E_{T+1}$ implies
	\begin{equation} 
	 R_{T+1}^2\leq{} 2R_0^2 + 4\gamma^2\sum_{t=0}^{T}\|\theta_t\|_{2}^2+2\gamma^2\sum_{t=0}^{T}\|z^t\|_{2}^2+
	2\gamma\sum_{t=0}^{T}\left\langle \eta_t,\theta_t\right\rangle+2\gamma\sum_{t=0}^{T-1}\left\langle \eta_t,z^t\right\rangle.
	\end{equation}

	Finally, we do some preliminaries in order to apply Bernstein's inequality and rewrite:
	\begin{equation}
	\begin{split}
	 &R_{T+1}^2\\
    \leq{}& 2R_0^2 + 8\gamma^2\underbrace{\sum_{t=0}^{T}\left(\|\theta_t^a\|_{2}^2-\mathbb{E}_{\bm{\xi}^{t}}[\|\theta_{t}^a\|_{2}^2]\right)}_{\circled{1}}+8\gamma^2\underbrace{\sum_{t=0}^{T}\left(\mathbb{E}_{\bm{\xi}^{t}}[\|\theta_{t}^a\|_{2}^2]\right)}_{\circled{2}}+8\gamma^2\underbrace{\sum_{t=0}^{T}\|\theta_{t}^b\|_{2}^2}_{\circled{3}}\\
	&{}+2\gamma\underbrace{\sum_{t=0}^{T}\left\langle \eta_t,\theta_t^a\right\rangle}_{\circled{4}}+2\gamma\underbrace{\sum_{t=0}^{T}\left\langle \eta_t,\theta_t^b\right\rangle}_{\circled{5}}+2\gamma\underbrace{\sum_{t=0}^{T}\left\langle \eta_t,z^t\right\rangle}_{\circled{6}}+2\gamma^2\underbrace{\sum_{t=0}^{T}\|z^t\|_{2}^2}_{\circled{7}}\\
	\end{split}	
	\end{equation}
    where we introduce new notations:
	$$\theta_{t}^a \overset{\text{def}}{=}\widehat{\nabla}f(x^{t},\bm{\xi}^{t})-\mathbb{E}_{\bm{\xi}^{t}}\left[\widehat{\nabla}f(x^{t},\bm{\xi}^{t})\right],\       	
	\theta_{t}^b \overset{\text{def}}{=} \mathbb{E}_{\bm{\xi}^{t}}\left[\widehat{\nabla}f(x^{t},\bm{\xi}^{t})\right]-\nabla f(x^{t}).$$

    Upper bound for $\circled{1}$. First of all,  we notice that the terms in $\circled{1}$ are conditionally unbiased:
	$$\mathbb{E}_{\bm{\xi}^{t}}\Bigl[\|\theta_{t}^a\|_{2}^2-\mathbb{E}_{\bm{\xi}^{t}}[\|\theta_{t}^a\|_{2}^2]\Bigl]=0.$$
	
	Secondly, the terms are bounded with probability 1:
	\begin{equation*}
	\bigl|\|\theta_{t}^a\|_{2}^2-\mathbb{E}_{\bm{\xi}^{t}}[\|\theta_{t}^a\|_{2}^2]\bigl| \leq \|\theta_{t}^a\|_{2}^2+\mathbb{E}_{\bm{\xi}^{t}}[\|\theta_{t}^a\|_{2}^2]\leq8\lambda^2\overset{\text{def}}{=} c.
	\end{equation*}

	Finally, we can bound the conditional variances $\bar{\sigma}_{t}^2= \mathbb{E}_{\bm{\xi}^{t}}\left[\bigl|\|\theta_{t}^a\|_{2}^2-\mathbb{E}_{\bm{\xi}^{t}}[\|\theta_{t}^a\|_{2}^2]\bigl|^2\right]$ as follows:
	\begin{equation*}
	\bar{\sigma}_t^2 \leq   c\mathbb{E}_{\bm{\xi}^{t}}\left[\bigl|\|\theta_{t}^a\|_{2}^2-\mathbb{E}_{\bm{\xi}^{t}}[\|\theta_{t}^a\|_{2}^2]\bigl|\right]
	\leq c\mathbb{E}_{\bm{\xi}^{t}}\left[\|\theta_{t}^a\|_{2}^2+\mathbb{E}_{\bm{\xi}^{t}}[\|\theta_{t}^a\|_{2}^2\right]
	= 2c\mathbb{E}_{\bm{\xi}^{t}}[\|\theta_{t}^a\|_{2}^2].
	\end{equation*}

	Thus, the sequence $\left\{ \|\theta_{t}^a\|_{2}^2-\mathbb{E}_{\bm{\xi}^{t}}[\|\theta_{t}^a\|_{2}^2]  \right\}_{t\geq 0}$ is a bounded martingale differences sequence with bounded conditional variances $\{\bar{\sigma}_t^2\}_{t\geq0}$. Therefore, we apply Bernstein's inequality \citep{freedman1975tail,dzhaparidze2001bernstein} with $X_t = \|\theta_{t}^a\|_{2}^2-\mathbb{E}_{\bm{\xi}^{t}}[\|\theta_{t}^a\|_{2}^2]$, $c=8\lambda^2$ \text{and} $F=\frac{c^2\ln\frac{4}{\beta}}{6}$ and get for all $b>0,$ it holds that
	$$\mathbb{P}\Bigl\{\sum_{t=0}^{T}\bigl|X_t\bigl| >b \  \text{and} \ \sum_{t=0}^{T}\bar{\sigma}_t^2 \leq F\Bigl\} \leq 2\text{exp}\left(-\frac{b^2}{2F+2cb/3}\right).$$

	Thus, with probability at least $1-2\text{exp}\left(-\frac{b^2}{2F+2cb/3}\right),$ we have
	\begin{center}
		either \ $\sum_{t=0}^{T}\bigl|X_t\bigl| \leq b$ \ or \  $\sum_{t=0}^{T}\bar{\sigma}_t^2 > F.$
	\end{center}

	Here we choose $b$ in a way such that $2\text{exp}\left(-\frac{b^2}{2F+2cb/3}\right)\leq\frac{\beta}{4}.$ This implies that $$b^2-\frac{2c\ln\frac{8}{\beta}}{3}b-2F\ln\frac{8}{\beta}\geq 0.$$
	
	Hence, $$b \geq \frac{c\ln\frac{8}{\beta}}{3}+\sqrt{\frac{c^2\ln^2\frac{8}{\beta}}{9}+2F\ln\frac{8}{\beta}}.$$

    Next, in order to bound $\sum_{t=0}^{T}\bar{\sigma}_t^2$ with probability 1, we have the following inequality for $F$
	\begin{align*}
	    \sum_{t=0}^{T}\bar{\sigma}_t^2 \leq{}&  2c\sum_{t=0}^{T}\mathbb{E}_{\bm{\xi}^{t}}[\|\theta_{t}^a\|_{2}^2]\overset{(\ref{variance bound})}{\leq}{}36c\sigma^2\sum_{t=0}^{T}\frac{1}{m}\\
	    ={}& 36c\sigma^2\frac{T+1}{m}
	     \overset{T \leq N-1}{\leq} \frac{c^2\ln\frac{8}{\beta}}{6}\\ ={}& F,
	\end{align*}
    where the quoted inequality (\ref{variance bound}) still works, i.e., $\mathbb{E}_{\bm{\xi}}\left[\left\|\widehat{\nabla}f(x,\bm{\xi})-\mathbb{E}_{\bm{\xi}}\left[\widehat{\nabla}f(x,\bm{\xi})\right]\right\|_{2}^2\right] \leq \frac{18\sigma^2}{m}.$
	Therefore, we have shown that with probability at least $1-\frac{\beta}{4}$, $\sum_{t=0}^{T}\bigl|X_t\bigl| \leq b,$ i.e.,
	$$\sum_{t=0}^{T}\bigl|\|\theta_{t}^a\|_{2}^2-\mathbb{E}_{\bm{\xi}^{t}}\left[\|\theta_{t}^a\|_2^2\right]\bigl| \leq b,$$ where	
	$b=c\ln\frac{8}{\beta}=8\lambda^2\ln\frac{8}{\beta}$ as desired.

	Upper bound for $\circled{2}$.
	\begin{equation*}
	\begin{split}
	\circled{2} =&{} \sum_{t=0}^{T}\mathbb{E}_{\bm{\xi}^{t}}\left[\|\theta_{t}^a\|_{2}^2\right]
	\leq{}18\sigma^2\sum_{t=0}^{T}\frac{1}{m}\\
	=&{}18\sigma^2\frac{T+1}{m} \overset{T\leq N-1}{\leq} 18\sigma^2\frac{N}{m}\\
	=&{}\frac{2\lambda^2\ln \frac{8}{\beta}}{3}.
	\end{split} 	
	\end{equation*}

	Upper bound for $\circled{3}$.
	\begin{equation*}
	\begin{split}
	\circled{3} =&{}\sum_{t=0}^{T}\|\theta_{t}^b\|_{2}^2
	\leq{} 16\sigma^4\sum_{t=0}^{T}\frac{1}{m^2\lambda^2}\\
	=&{} 16\sigma^4\frac{T+1}{m^2\lambda^2} \overset{T\leq N-1}{\leq} 16\sigma^4\frac{N}{m^2\lambda^2}\\
	=&{} \frac{16\lambda^2 \ln^2\frac{8}{\beta}}{729N}.
	\end{split} 	
	\end{equation*}

	Upper bound for $\circled{4}$. First of all,  we notice that the $\mathbb{E}_{\bm{\xi}^{t}}[\theta_t^a]=0$ summands in $\circled{4}$ are conditionally unbiased:
	$$\mathbb{E}_{\bm{\xi}^{t}}\Bigl[\left\langle \eta_t,\theta_t^a\right\rangle\Bigl]=0.$$
	
	Secondly, the terms are bounded with probability 1:
	\begin{equation*}
	\bigl|\left\langle \eta_t,\theta_t^a\right\rangle\bigl| \leq \|\eta_t\|_2\|\theta_{t}^a\|_{2}\overset{(\ref{inner_nons})}{\leq}2\lambda CR.
	\end{equation*}
	
	Finally, we can bound the conditional variances $\tilde{\sigma}_{t}^2= \mathbb{E}_{\bm{\xi}^{t}}\left[\left\langle \eta_t,\theta_t^a\right\rangle^2\right]$ as follows:
	\begin{equation*}
	\tilde{\sigma}_t^2 \leq   \mathbb{E}_{\bm{\xi}^{t}}\left[\|\eta_t\|_2^2\|\theta_{t}^a\|_{2}^2\right]
	\leq  (CR)^2\mathbb{E}_{\bm{\xi}^{t}}\left[\|\theta_{t}^a\|_{2}^2\right].
	\end{equation*}

	Thus, the sequence $\left\{ \left\langle \eta_t,\theta_t^a\right\rangle  \right\}_{t\geq 0}$ is a bounded martingale differences sequence with bounded conditional variances $\{\tilde{\sigma}_t^2\}_{t\geq0}$. Therefore, we can apply Bernstein's inequality with $X_t = \left\langle \eta_t,\theta_t^a\right\rangle$, $c_1=2\lambda CR$ \text{and} $F=\frac{c_1^2\ln\frac{4}{\beta}}{6}$ and get for all $b>0,$ it holds that
	$$\mathbb{P}\Bigl\{\sum_{t=0}^{T}\bigl|X_t\bigl| >b \  \text{and} \ \sum_{t=0}^{T}\tilde{\sigma}_t^2 \leq F\Bigl\} \leq 2\text{exp}\left(-\frac{b^2}{2F+2c_1b/3}\right).$$

	Thus, with probability at least $1-2\text{exp}\left(-\frac{b^2}{2F+2c_1b/3}\right),$ we have
	\begin{center}
		either \ $\sum_{t=0}^{T}\bigl|X_t\bigl| \leq b$ \ or \  $\sum_{t=0}^{T}\tilde{\sigma}_t^2 > F.$
	\end{center}

	Here we choose $b$ in a way such that $2\text{exp}\left(-\frac{b^2}{2F+2c_1b/3}\right)\leq\frac{\beta}{4}.$ This implies that $$b^2-\frac{2c_1\ln\frac{8}{\beta}}{3}b-2F\ln\frac{8}{\beta}\geq 0.$$ 

	Hence, 
	$$b \geq \frac{c_1\ln\frac{8}{\beta}}{3}+\sqrt{\frac{c_1^2\ln^2\frac{8}{\beta}}{9}+2F\ln\frac{8}{\beta}}.$$

    Next, in order to bound $\sum_{t=0}^{T}\tilde{\sigma}_t^2$ with probability 1, we have the following inequality for $F$
	\begin{align*}	    \sum_{t=0}^{T}\tilde{\sigma}_t^2 \leq{}&  (CR)^2\sum_{t=0}^{T}\mathbb{E}_{\bm{\xi}^{t}}[\|\theta_{t}^a\|_{2}^2]
	    \leq{}18(CR)^2\sigma^2\sum_{t=0}^{T}\frac{1}{m}\\
	    ={}& 18(CR)^2\sigma^2\frac{T+1}{m}
	     \overset{T \leq N-1}{\leq} \frac{c_1^2\ln\frac{8}{\beta}}{6}\\ ={}& F,
	\end{align*}
	where $c_1 = 2\lambda CR.$

	Therefore, we have shown that with probability at least $1-\frac{\beta}{4}$, $\sum_{t=0}^{T}\bigl|X_t\bigl| \leq b,$ i.e.,
	$$\sum_{t=0}^{T}\bigl|\left\langle \eta_t,\theta_t^a\right\rangle\bigl| \leq b,$$ where	
	$b=c_1\ln\frac{8}{\beta}=2\lambda CR\ln\frac{8}{\beta}$ as desired.

	Upper bound for $\circled{5}.$ 
	\begin{equation*}
	\bigl|\left\langle \eta_t,\theta_t^b\right\rangle\bigl| \leq \|\eta_t\|_2\|\theta_{t}^b\|_{2}\overset{(\ref{bias bound})}{\leq} \frac{4\sigma^2}{m\lambda} CR_0.
	\end{equation*}

	Thus 
	\begin{equation*}
	   \circled{5} = \sum_{t=0}^{T}\left\langle \eta_t,\theta_t^b\right\rangle \leq \sum_{t=0}^{T}\bigl|\left\langle \eta_t,\theta_t^b\right\rangle\bigl| \overset{T\leq N-1}{\leq }\frac{4\sigma^2 CRN}{m\lambda} \leq \frac{4\lambda CR \ln\frac{8}{\beta}}{27}.
	\end{equation*}

	Upper bound for $\circled{6}.$ 
	\begin{equation*}
	\bigl|\left\langle \eta_t,z^t\right\rangle\bigl| \leq \|\eta_t\|_2\|z^t\|_{2} \leq  CR\|z^t\|_{2}.
	\end{equation*}
	
	Thus 
	\begin{equation*}
	   \circled{6} = \sum_{t=0}^{T}\left\langle \eta_t,z^t\right\rangle \leq \sum_{t=0}^{T}\bigl|\left\langle \eta_t,z^t\right\rangle\bigl| \leq CR\sum_{t=0}^{T}\|z^t\|_2
	\end{equation*}

	Next we bound the $\sum_{t=0}^{T}\|z^t\|_2.$
	It is easy to verify that the injected Gaussian random noise $z \in \mathbb{R}^d, z \sim \mathcal{N}(0,\widehat{\sigma}^2 I^d)$ is subG($\widehat{\sigma}^2$). Thus the tail bound of the vector can be achieved by the Lemma 1 in \cite{jin2019short}:
	\begin{equation*}
	\mathbb{P}\left(\|z\|_2 \geq \widehat{t}\right) \leq 2e^{-\frac{\widehat{t}^2}{2\widehat{c}^2 d\widehat{\sigma}^2}},
	\end{equation*}
	where constant $\widehat{c}$  is selected as $2\sqrt{2}$ as the derivations in \cite{jin2019short}.

    To fit in our case, we choose $\widehat{t}$ such that $2\text{exp}\left[-\frac{\widehat{t}^2}{2\widehat{c}^2 d\widehat{\sigma}^2}\right]=\frac{\beta}{4(T+1)}$.
	That is, we choose $\widehat{t}= \widehat{\sigma}\sqrt{16d \ln\frac{8(T+1)}{\beta}}$ so that with the probability at least $1-\frac{\beta}{4}$,
	$$\sum_{t=0}^{T}\|z^t\|_2 \leq  (T+1)\widehat{\sigma}\sqrt{16d \ln\frac{8(T+1)}{\beta}}.$$

	Therefore, we bound the \circled{6} with the probability at least $1-\frac{\beta}{4},$
	\begin{equation*}
	\begin{split}
	   \circled{6} =& \sum_{t=0}^{T}\left\langle \eta_t,z^t\right\rangle \leq \sum_{t=0}^{T}\bigl|\left\langle \eta_t,z^t\right\rangle\bigl| \leq CR\sum_{t=0}^{T}\|z^t\|_2\\ 
	   \leq{}& CR (T+1)\widehat{\sigma}\sqrt{16d \ln\frac{8T}{\beta}} \overset{T\leq N-1}{\leq} \frac{108CRN^{2.5}\sigma^2\sqrt{d\ln\frac{8N}{\beta}\ln\frac{1}{\delta}}}{n\epsilon\lambda\ln\frac{8}{\beta}}\\
	   ={}& \frac{CRD N}{\lambda}
	\end{split}
	\end{equation*}
	where $D \overset{\text{def}}{=}\frac{108N^{1.5}\sigma^2\sqrt{d\ln\frac{8N}{\beta}\ln\frac{1}{\delta}}}{n\epsilon\ln\frac{8}{\beta}}.$

	Upper bound for \circled{7}.
	Recall we have shown that with probability at least $1-\frac{\beta}{4(T+1)},$
	$$\|z^t\|_2 \leq  \widehat{\sigma}\sqrt{16d \ln\frac{8(T+1)}{\beta}}.$$
 
	Therefore, we have that with probability at least $1-\frac{\beta}{4},$
	$$\sum_{t=0}^{T}\|z^t\|_2^2 \leq (T+1)\widehat{\sigma}^216d \ln\frac{8(T+1)}{\beta}\overset{T\leq N-1}{\leq} \frac{27^2*16N^4\sigma^4d\ln\frac{8N}{\beta}\ln\frac{1}{\beta}}{n^2\epsilon^2\lambda^2\ln^2\frac{8}{\beta}}=\frac{D^2N}{\lambda^2}.$$

	Finally, letting $\gamma = \min\left\{\frac{\alpha^{\frac{1-\nu}{1+\nu}}}{8M_\nu^{\frac{2}{1+\nu}}},\frac{R}{\sqrt{2N}\alpha^{\frac{\nu}{1+\nu}}M_\nu^{\frac{1}{1+\nu}}},\frac{R}{2\lambda\ln\frac{8}{\beta}},\frac{\lambda R}{2DN}\right\}$ and summarizing all the above bounds, we have derived

	\begin{equation*}
	\begin{split}
	 R_{T+1}^2\leq{}& 2R_0^2 + \underbrace{8\gamma^2\sum_{t=0}^{T}\left(\|\theta_t^a\|_{2}^2-\mathbb{E}_{\bm{\xi}^{t}}[\|\theta_{t}^a\|_{2}^2]\right)}_{\circled{1}}+\underbrace{8\gamma^2\sum_{t=0}^{T}\left(\mathbb{E}_{\bm{\xi}^{t}}[\|\theta_{t}^a\|_{2}^2]\right)}_{\circled{2}}+\underbrace{8\gamma^2\sum_{t=0}^{T}\|\theta_{t}^b\|_{2}^2}_{\circled{3}}\\
	&{}+\underbrace{2\gamma\sum_{t=0}^{T}\left\langle \eta_t,\theta_t^a\right\rangle}_{\circled{4}}+\underbrace{2\gamma\sum_{t=0}^{T}\left\langle \eta_t,\theta_t^b\right\rangle}_{\circled{5}}+\underbrace{2\gamma\sum_{t=0}^{T}\left\langle \eta_t,z^t\right\rangle}_{\circled{6}}+\underbrace{2\gamma^2\sum_{t=0}^{T}\|z^t\|_{2}^2}_{\circled{7}},\\
	&{} E_1:\mathbb{P}\left\{\circled{1} \leq 64\gamma^2\lambda^2\ln\frac{8}{\beta}\right\} \geq 1-\frac{\beta}{4},\\
	&{} E_2:\circled{2} \leq\frac{16\gamma^2\lambda^2\ln \frac{8}{\beta}}{3},\ E_3:\circled{3} \leq \frac{128\gamma^2\lambda^2 \ln^2\frac{8}{\beta}}{729N},\\
	&{}E_4:\mathbb{P}\left\{\circled{4} \leq 4\gamma\lambda CR_0\ln\frac{8}{\beta}\right\} \geq 1-\frac{\beta}{4},\ E_5: \circled{5}\leq \frac{8\gamma\lambda CR_0 \ln\frac{8}{\beta}}{27},\\
	&{}E_6:\mathbb{P}\left\{\circled{6} \leq \frac{2\gamma CR_0D N}{\lambda}\right\} \geq 1-\frac{\beta}{4},\  E_7:\mathbb{P}\left\{\circled{7} \leq \frac{2\gamma^2D^2N}{\lambda^2}\right\} \geq 1-\frac{\beta}{4}.
	\end{split}	
	\end{equation*}

	Taken into account these inequalities and our assumptions on $\lambda$ and $\gamma,$ we have for each event $E_i,\   \bigcap_{i\in \{1,4,6,7\}}E_{i}\bigcap E_{T}$ implies

	\begin{equation}
	\begin{split}
	    R_{T+1}^2 \leq{}& R_0^2 + 4\gamma^2\sum_{k=0}^{T}\|\theta_k\|_{2}^2+2\gamma^2\sum_{k=0}^{T}\|z^k\|_{2}^2+4\gamma^2T\alpha^{\frac{2\nu}{1+\nu}}M_\nu^{\frac{2}{1+\nu}}\\
	&{}-2\gamma\sum_{k=0}^{T}\left\langle x^k-x^*,\theta_k\right\rangle-2\gamma\sum_{k=0}^{T}\left\langle x^k-x^*,z^k\right\rangle\\
	\leq{}& 2R_0^2+ \left(\frac{8}{49}+\frac{2}{147}+\frac{32}{35721}+\frac{2}{7}+\frac{4}{189}+\frac{1}{7}+\frac{1}{98}\right)C^2R^2\\
	\leq{}& C^2R^2.
	\end{split}
	\end{equation}

	Moreover, using the union bound we derive
	\begin{equation*}
	\mathbb{P}\left(\bigcap_{i\in  \{1,4,6,7\}}E_{i}\bigcap E_{T}\right)=1-\mathbb{P}\left(\bigcap_{i\in \{1,4,6,7\}}\bar{E_{i}}\bigcap \bar{E_{t}}\right)\geq 1-(T+1)\beta.   
	\end{equation*}

	That is, by definition of $E_{T+1}$ and $E_{T}$ we have proved that 
	\begin{equation*}
	\mathbb{P}(\mathbb{E}_{T+1})\geq \mathbb{P}\left(\bigcap_{i\in  \{1,4,6,7\}}E_{i}\bigcap E_{T}\right)\geq 1-(T+1)\beta,	    
	\end{equation*}	
	which implies that for all $T=0,\cdots,N$ we have $\mathbb{E}_{T} \geq 1-T\beta.$ Then for $T=N,$ we have that with probability at least $1-N\beta,$
    \begin{equation} 
	\begin{split}
	    AN(f(\bar{x}^{N})-f(x^{*}))\leq{}& A\sum_{k=0}^{N-1}(f(x^{k})-f(x^{*}))\\
	    \overset{(\ref{888})}{\leq}{}& 2R_0^2 + 4\gamma^2\sum_{k=0}^{N-1}\|\theta_k\|_{2}^2+2\gamma^2\sum_{k=0}^{N-1}\|z^k\|_{2}^2\\
	&{}-2\gamma\sum_{k=0}^{N-1}\left\langle x^k-x^*,\theta_k\right\rangle-2\gamma\sum_{k=0}^{N-1}\left\langle x^k-x^*,z^k\right\rangle\\
    \overset{(\ref{RT2})}{\leq}{}& C^2R^2,
	\end{split}
	\end{equation}
	where the first inequality follows from Jensen's inequality that $f(\bar{x}^{N})=f(\frac{1}{N}\sum_{k=0}^{N-1}x^k)\leq \frac{1}{N}\sum_{k=0}^{N-1}f(x^k).$ 
    Since $A=2\gamma(1-4\gamma \mathcal{M})>
    \gamma,$ we get that with probability at least $1-N\beta,$ 
    \begin{equation*}
        f(\bar{x}^{N})-f(x^{*})\leq{}\frac{C^2R^2}{AN} \leq{} \frac{C^2R^2}{\gamma N}.
    \end{equation*}

    When $\gamma = \min\left\{\frac{\alpha^{\frac{1-\nu}{1+\nu}}}{8M_\nu^{\frac{2}{1+\nu}}},\frac{R}{\sqrt{2N}\alpha^{\frac{\nu}{1+\nu}}M_\nu^{\frac{1}{1+\nu}}},\frac{R}{2\lambda\ln\frac{8}{\beta}},\frac{\lambda R}{2DN}\right\},\ \lambda=2M_\nu C^\nu R^\nu,\ m=\max\left\{1,\frac{27N\sigma^2}{\lambda^2\ln\frac{8}{\beta}}\right\},$
    we have that with probability at least $1-N\beta$,
    \begin{equation*}
    \begin{split}
        f(\bar{x}^{N})-f(x^{*})\leq{}\max\Biggl\{&\frac{8C^2M_\nu^{\frac{2}{1+\nu}}R^2}{\alpha^{\frac{1-\nu}{1+\nu}}N},\frac{\sqrt{2}C^2M_\nu^{\frac{1}{1+\nu}}R\alpha^{\frac{\nu}{1+\nu}}}{\sqrt{N}},\\
        &\frac{4C^{2+\nu}M_\nu R^{1+\nu}\ln\frac{8}{\beta}}{N},\frac{108C^{2-\nu}R^{1-\nu}N^{1.5}\sigma^2\sqrt{d\ln\frac{8N}{\beta}\ln\frac{1}{\delta}}}{n M_\nu\epsilon\ln\frac{8}{\beta}}\Biggl\}.
    \end{split}
    \end{equation*}

    If taking $N=\Tilde{O}\left(\frac{n\epsilon}{\sqrt{d\ln\frac{1}{\delta}}}\right)^{\frac{2}{7}},$ we have that with probability at least $1-N\beta,$
    \begin{equation*}
    \begin{split}
        f(\bar{x}^{N})-f(x^{*})\leq&{}\Tilde{O}\Biggl(\max\Biggl\{\frac{M_\nu^{\frac{2}{1+\nu}}(d\ln\frac{1}{\delta})^{\frac{1}{7}}}{\alpha^{\frac{1-\nu}{1+\nu}}(n\epsilon)^{\frac{2}{7}}},\frac{M_\nu (d\ln\frac{1}{\delta})^{\frac{1}{7}}}{(n\epsilon)^{\frac{2}{7}}}\\
        &\frac{M_\nu^{\frac{1}{1+\nu}}\alpha^{\frac{\nu}{1+\nu}}(d\ln\frac{1}{\delta})^{\frac{1}{14}}}{(n\epsilon)^{\frac{1}{7}}},\frac{(d\ln\frac{1}{\delta})^{\frac{2}{7}}\sqrt{\ln\frac{(n\epsilon)^2}{\beta d}}}{(n\epsilon)^{\frac{4}{7}} M_\nu}\Biggl\}\Biggl).
    \end{split}
    \end{equation*}
    
    The total gradient complexity is $\Tilde{O}\left(\max \left\{n^{\frac{2}{7}}d^{\frac{6}{7}},n^{\frac{4}{7}}d^{\frac{5}{7}}\right\}\right),$ and the Big-$\Tilde{O}$ notation here omits other logarithmic factors and the terms $\sigma,\alpha, R.$
\end{proof}

\section{Detailed description of experiments}\label{secB1}
We validate our proposed AClipped-dpSGD using real data (\emph{Adult} and \emph{Diabetes} datasets), and synthetic data (drawn from \textit{Student's t-distribution} (degree = 2), \textit{Laplace distribution} ($\mu =1, \sigma = 1$), and \textit{Chi-squared distribution} (degree=1)) for ridge and logistic regression tasks. Among them, \emph{Adult} data contains 32,561 samples with 123 dimensions. We select 21,000 samples for training and the rest for testing.  \emph{Diabetes} data contains 768 samples with 8 dimensions, where we select 500 samples for training.
We evaluate the output errors (divided by initial error) and running time after 30 epochs (real data) and 400 epochs (synthetic data) of AClipped-dpSGD, DP-SGD from \cite{das2023beyond} and DP-GD from \cite{wang2020differentially}. We present the additional experiments that measure the deviations across multiple repetitions. The privacy levels are $\epsilon = \{0.5,0.75,1.0,2.0\}$ and $\delta = n^{-1}.$

\begin{figure*}[t]
	\centering
	\subfigure{
		\includegraphics[width=0.23\textwidth]{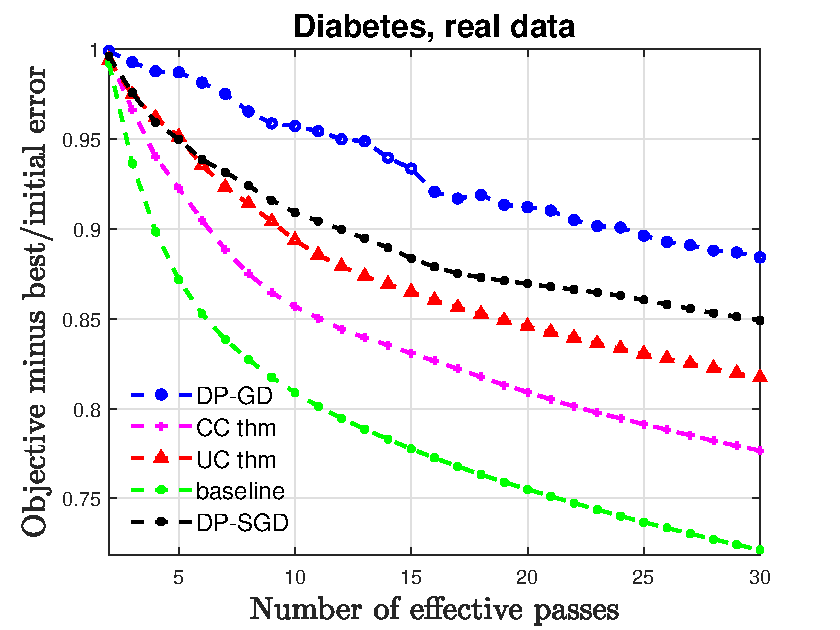}}
	\subfigure{
		\includegraphics[width=0.23\textwidth]{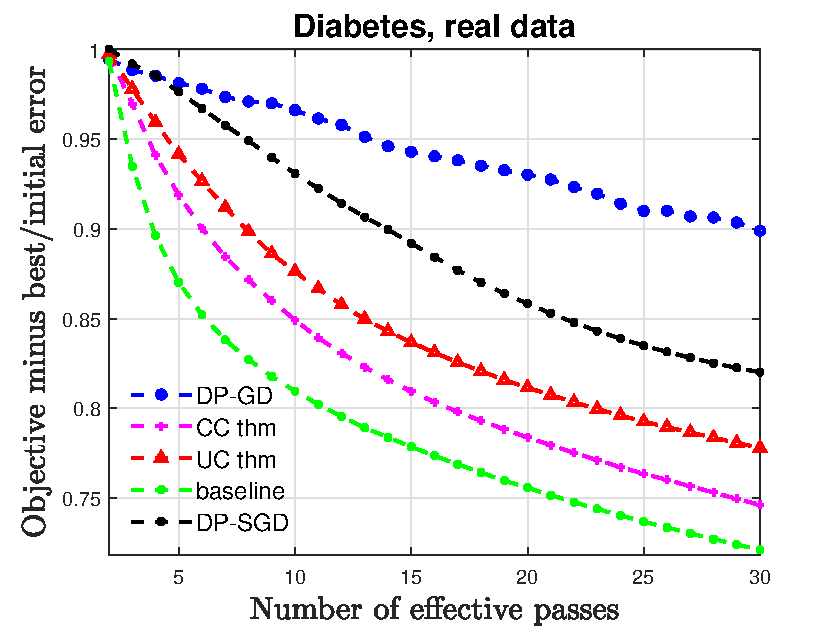}}
	\subfigure{
		\includegraphics[width=0.23\textwidth]{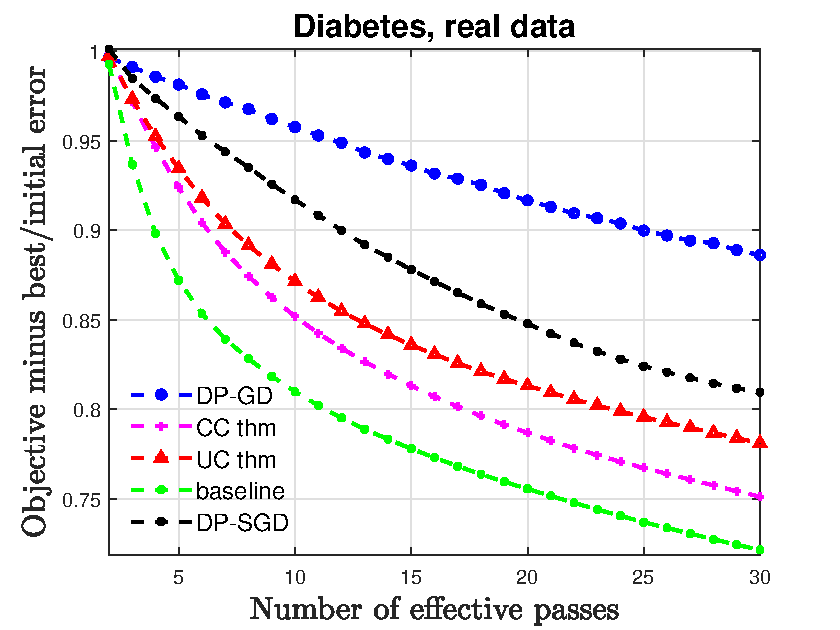}}
	\subfigure{
		\includegraphics[width=0.23\textwidth]{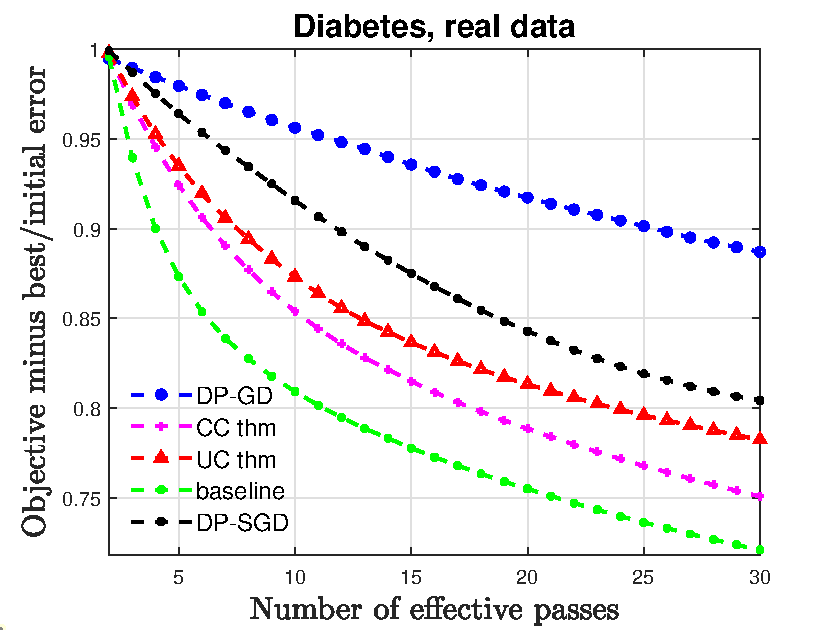}}
  \newline
   \setcounter{subfigure}{0}
  	\subfigure[$\epsilon = 0.5$]{
		\includegraphics[width=0.23\textwidth]{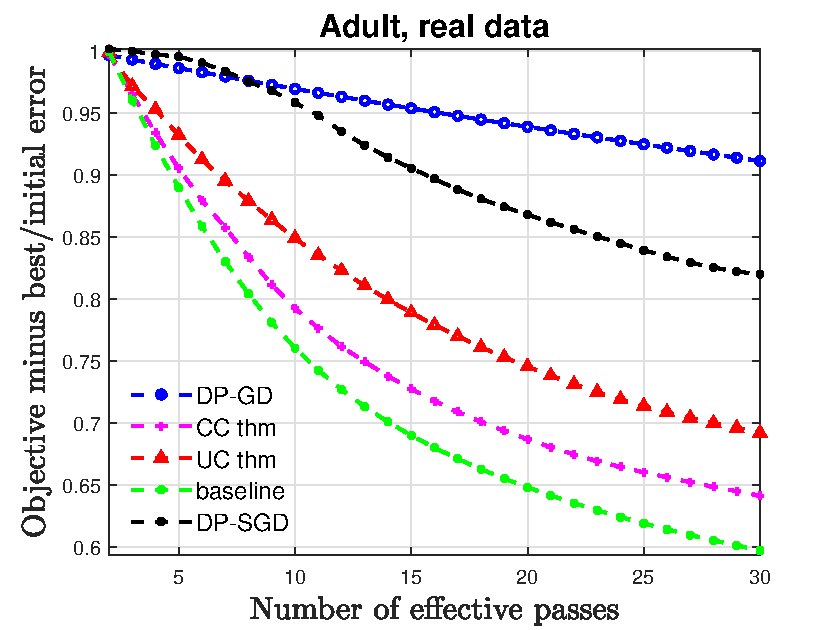}}
	\subfigure[$\epsilon = 0.75$]{
		\includegraphics[width=0.23\textwidth]{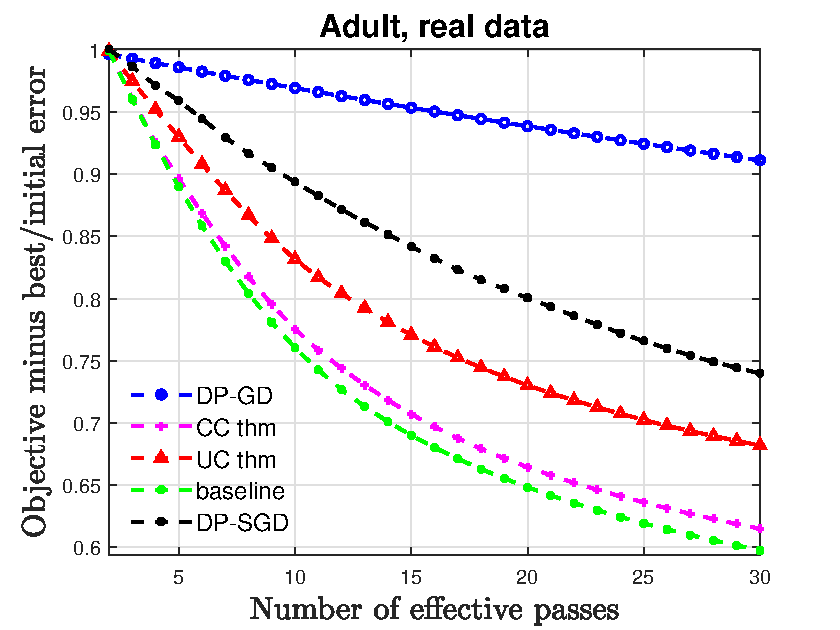}}
	\subfigure[$\epsilon = 1.0$]{
		\includegraphics[width=0.23\textwidth]{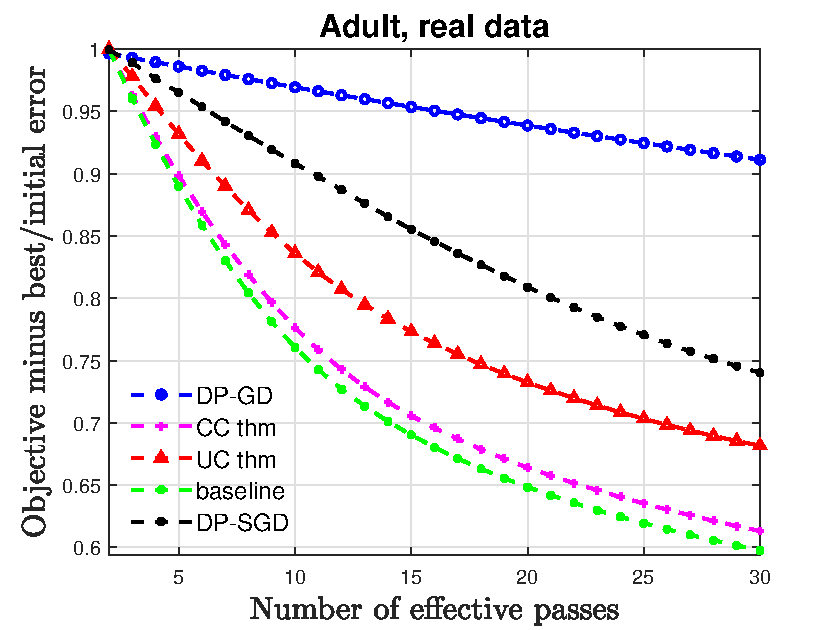}}
	\subfigure[$\epsilon = 2.0$]{
		\includegraphics[width=0.23\textwidth]{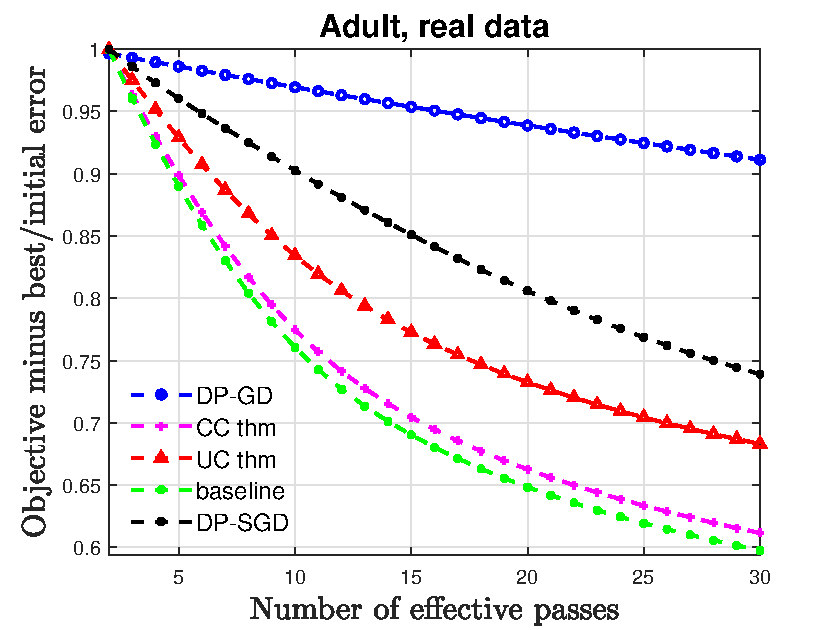}}
	\caption{Trajectories of the ridge regression model for the real-world data. The top and bottom rows correspond to the \textit{Diabetes} and \textit{Adult} datasets, respectively.}
    \label{fig3}
\end{figure*}

\begin{table*}[t]
    \caption{Summary of experimental results of private methods on the regression model. Bold values mark the results of our methods.}
	\centering	
    \resizebox{\textwidth}{!}{
	\begin{tabular}{ccccccc|cccc}
		\toprule
		\multirow{2}{*}{Model} &\multirow{2}{*}{Data} &\multirow{2}{*}{$\epsilon$} & \multicolumn{4}{c}{Estimation Error}&\multicolumn{4}{|c}{Runtime (wall clock time): $s$}\\
		\cmidrule{4-7} \cmidrule{8-11}
		
		& & & CC & UC &DP-SGD & DP-GD & CC & UC &DP-SGD& DP-GD \\
		\hline
		
		
		\multirow{20}{*}{RR} &\multirow{4}{*}{\emph{Diabetes}}& 0.5  &
		\textbf{0.7550} & \textbf{0.7913} &	0.8211 &0.8987	 &\textbf{3.67E-4} & \textbf{3.58E-4} &4.62E-4&5.51E-3 \\

        \multirow{20}{*}{} &\multirow{4}{*}{} &0.75	&\textbf{0.7481}	&\textbf{0.7782}	&0.8179 &0.8951 &\textbf{4.34E-4} &\textbf{3.76E-4}	&4.56E-4	&5.63E-3\\

		\multirow{20}{*}{} &\multirow{4}{*}{} &1.0	&\textbf{0.7479}	&\textbf{0.7752}	&0.8074	&0.8844	&\textbf{3.97E-4}	&\textbf{3.56E-4} &4.44E-4 &5.60E-3\\

        \multirow{20}{*}{} &\multirow{4}{*}{} &2.0	&\textbf{0.7459}	&\textbf{0.7824}	&0.8059 &0.8834 &\textbf{3.66E-4} &\textbf{3.97E-4}	&4.79E-4	&5.33E-3\\
		\cmidrule{2-4} \cmidrule{5-6} \cmidrule{7-11}
		
		\multirow{20}{*}{}&\multirow{4}{*}{\emph{Adult}}&0.5	&\textbf{0.6381}	&\textbf{0.6884}	&0.8183	&0.9087	&\textbf{0.117}	&\textbf{0.117} &0.122 &8.382\\
		\multirow{20}{*}{}&\multirow{4}{*}{}&0.75	&\textbf{0.6113}	&\textbf{0.6794}	& 0.7359	&0.9085	&\textbf{0.123}	&\textbf{0.120} &0.129 &8.347\\
  
		\multirow{20}{*}{}&\multirow{4}{*}{}&1.0	&\textbf{0.6094}	&\textbf{0.6784}	&0.7352	&0.9082	&\textbf{0.118}	&\textbf{0.117} &0.123 &8.318\\

		\multirow{20}{*}{}&\multirow{4}{*}{}&2.0	&\textbf{0.6078}	&\textbf{0.6778}	& 0.7340 	&0.9081	&\textbf{0.117}	&\textbf{0.116} &0.121 &8.211\\
        \cmidrule{2-4} \cmidrule{5-6} \cmidrule{7-11}

		\multirow{20}{*}{}&\multirow{4}{*}{\emph{Student's t}}&0.5	&\textbf{0.7998}	&\textbf{0.8059}	& 0.8101	&0.8153	&\textbf{0.903}	&\textbf{0.909} &1.019 &30.523\\
		\multirow{20}{*}{}&\multirow{4}{*}{}&0.75	&\textbf{0.7971}	&\textbf{0.7990}	& 0.8018 	&0.8141	&\textbf{0.962}	&\textbf{0.977} &1.096 &32.314\\
  
		\multirow{20}{*}{}&\multirow{4}{*}{}&1.0	&\textbf{0.7967}	&\textbf{0.7985}	& 0.7974 	&0.8139	&\textbf{0.952}	&\textbf{0.961} &1.085 &32.720\\

		\multirow{20}{*}{}&\multirow{4}{*}{}&2.0	&\textbf{0.7963}	&\textbf{0.7991}	& 0.7968 	&0.8142	&\textbf{0.912}	&\textbf{0.961} &1.043 &32.210\\
        \cmidrule{2-4} \cmidrule{5-6} \cmidrule{7-11}
        
		\multirow{20}{*}{}&\multirow{4}{*}{\emph{Laplace}}&0.5	&\textbf{0.5141}	&\textbf{0.5286}	& 0.5371	&0.5568	&\textbf{1.002}	&\textbf{0.989} &1.164 &32.112\\
		\multirow{20}{*}{}&\multirow{4}{*}{}&0.75	&\textbf{0.5113}	&\textbf{0.5234}	& 0.5290 	&0.5565	&\textbf{0.936}	&\textbf{0.931} &1.054 &31.065\\
  
		\multirow{20}{*}{}&\multirow{4}{*}{}&1.0	&\textbf{0.5102}	&\textbf{0.5206}	& 0.5285 	&0.5560	&\textbf{0.912}	&\textbf{0.917} &1.048 &32.162\\

		\multirow{20}{*}{}&\multirow{4}{*}{}&2.0	&\textbf{0.5100}	&\textbf{0.5201}	& 0.5274 	&0.5558	&\textbf{0.987}	&\textbf{0.979} &1.135 &32.728\\
        \cmidrule{2-4} \cmidrule{5-6} \cmidrule{7-11}
        
		\multirow{20}{*}{}&\multirow{4}{*}{\emph{$\chi^2$}}&0.5	&\textbf{0.5541}	&\textbf{0.5701}	& 0.5766	&0.5937	&\textbf{0.919}	&\textbf{0.909} &1.043 &30.613\\
		\multirow{20}{*}{}&\multirow{4}{*}{}&0.75	&\textbf{0.5522}	&\textbf{0.5621}	& 0.5669 	&0.5935	&\textbf{0.961}	&\textbf{0.951} &1.160 &32.147\\
  
		\multirow{20}{*}{}&\multirow{4}{*}{}&1.0	&\textbf{0.5515}	&\textbf{0.5619}	& 0.5653 	&0.5933	&\textbf{0.918}	&\textbf{0.902} &1.025 &30.685\\

		\multirow{20}{*}{}&\multirow{4}{*}{}&2.0	&\textbf{0.5513}	&\textbf{0.5606}	& 0.5651 	&0.5930	&\textbf{0.911}	&\textbf{0.910} &1.034 &30.723\\
		\botrule
	\end{tabular}}
	\label{table3}
\end{table*}

The tuning strategy is to be consistent with theoretical recommendations, for example, the stepsize of Theorem \ref{ee} should be smaller than the one of Theorem \ref{BC}. Therefore,
for \emph{Diabetes} dataset, we fix the batch size $m$ to 24, thus the clipping level $\lambda$ could be tuned accordingly to avoid accessing $x^*$ (one may use the grid search to roughly estimate the scope of $x^*$). We set the stepsize $\gamma$ to $1E2,$ $5E-3,$ $6E-3$ and $8E-3$ for UC thm, CC thm, DP-SGD and DP-GD; for \emph{Adult} and synthetic dataset, we fix the batch size $m$ to 200, the clipping level $\lambda$ to 0.87, 0.54, 0.74 for UC thm, CC thm, DP-SGD. We set the stepsize $\gamma$ to $2E-4,$ $5E-4,$ $4E-4,$ and $1E-3$ for UC thm, CC thm, DP-SGD, and DP-GD.

\begin{figure*}[t]
	\centering
	\subfigure{
		\includegraphics[width=0.23\textwidth]{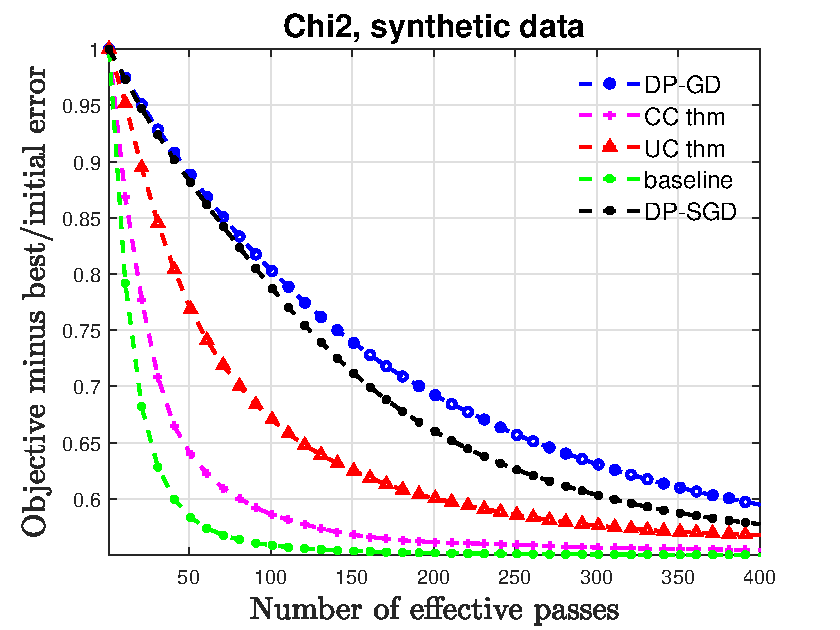}}
	\subfigure{
		\includegraphics[width=0.23\textwidth]{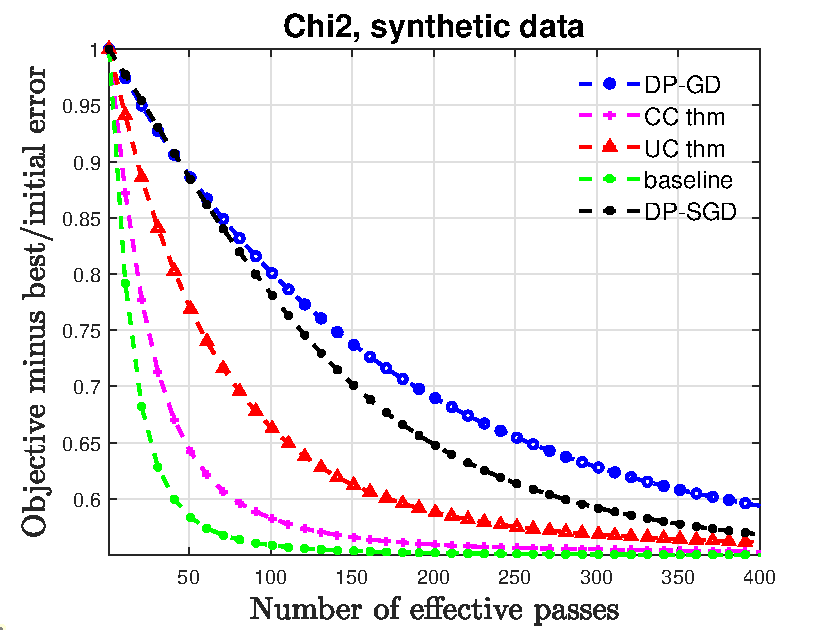}}
	\subfigure{
		\includegraphics[width=0.23\textwidth]{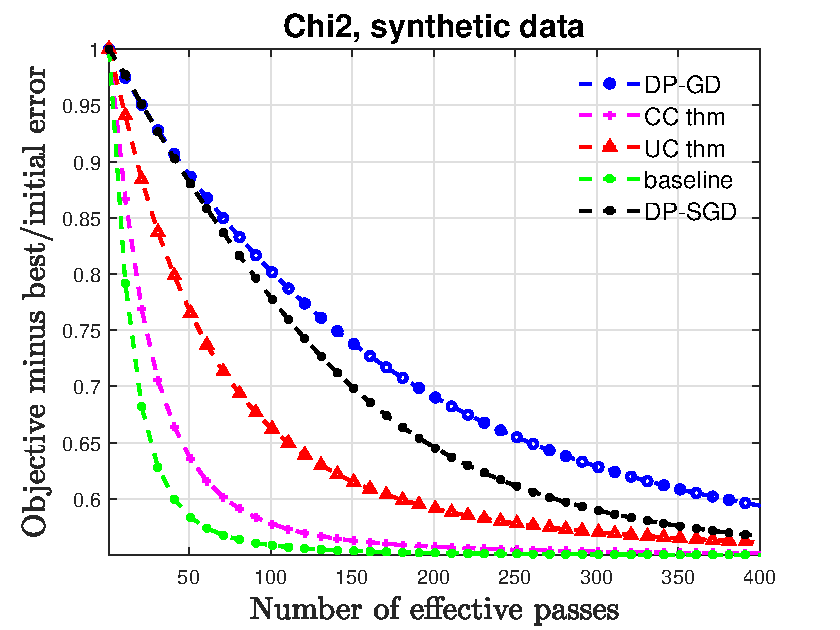}}
	\subfigure{
		\includegraphics[width=0.23\textwidth]{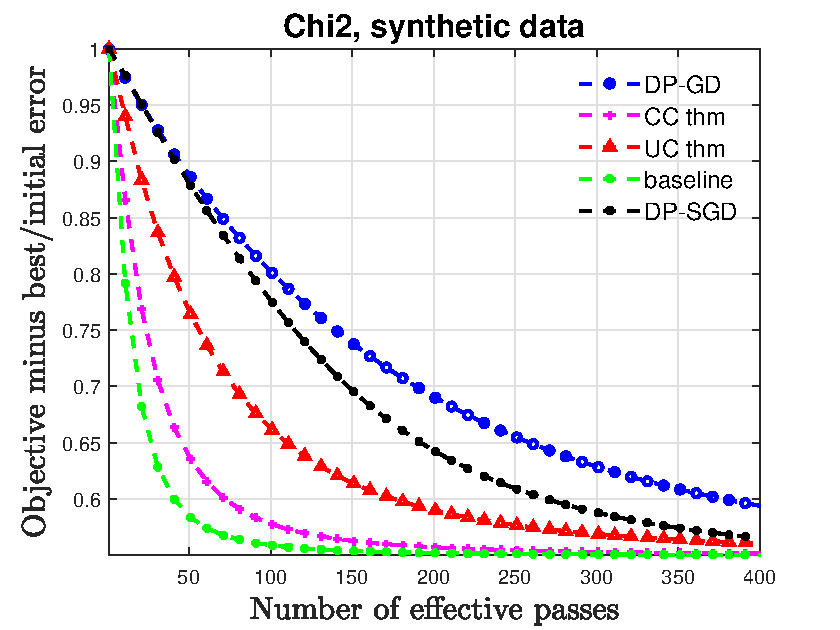}}
  \newline
  	\subfigure{
		\includegraphics[width=0.23\textwidth]{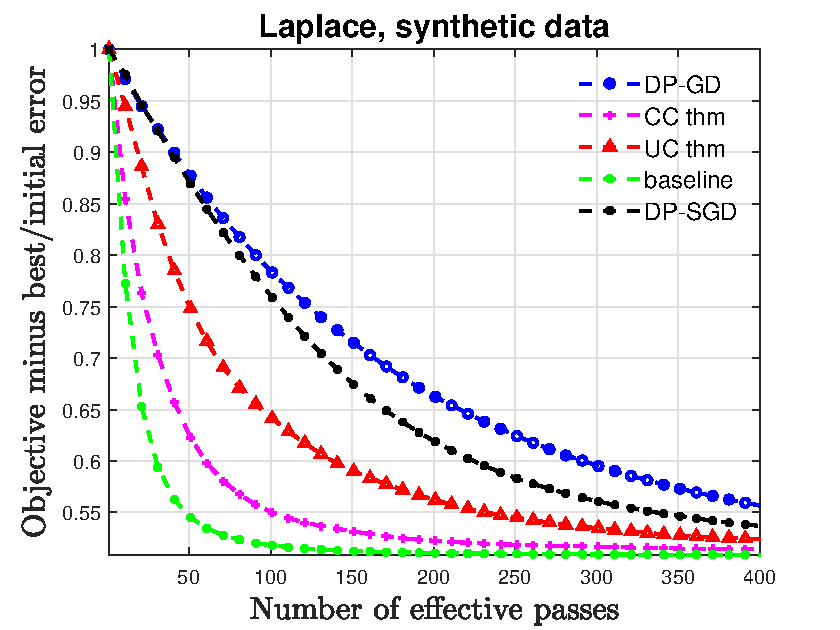}}
	\subfigure{
		\includegraphics[width=0.23\textwidth]{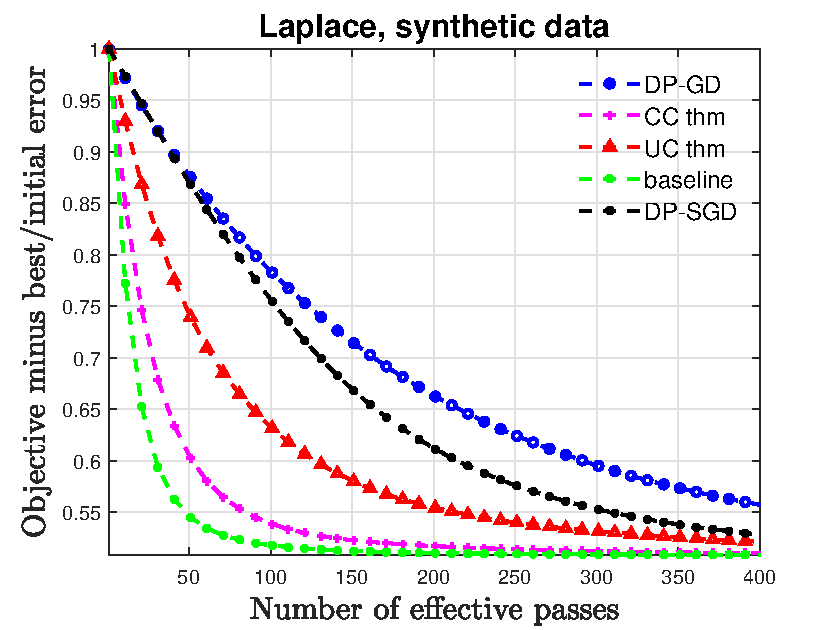}}
	\subfigure{
		\includegraphics[width=0.23\textwidth]{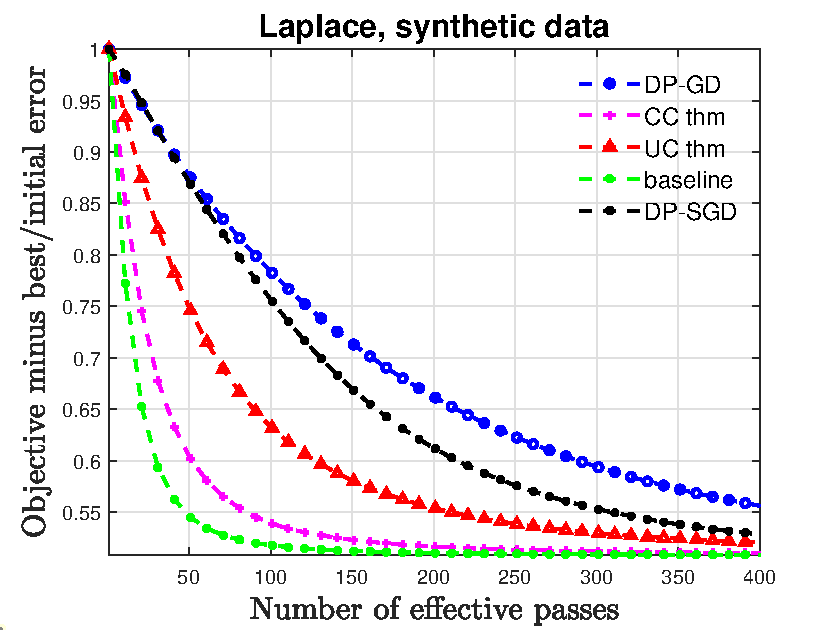}}
	\subfigure{
		\includegraphics[width=0.23\textwidth]{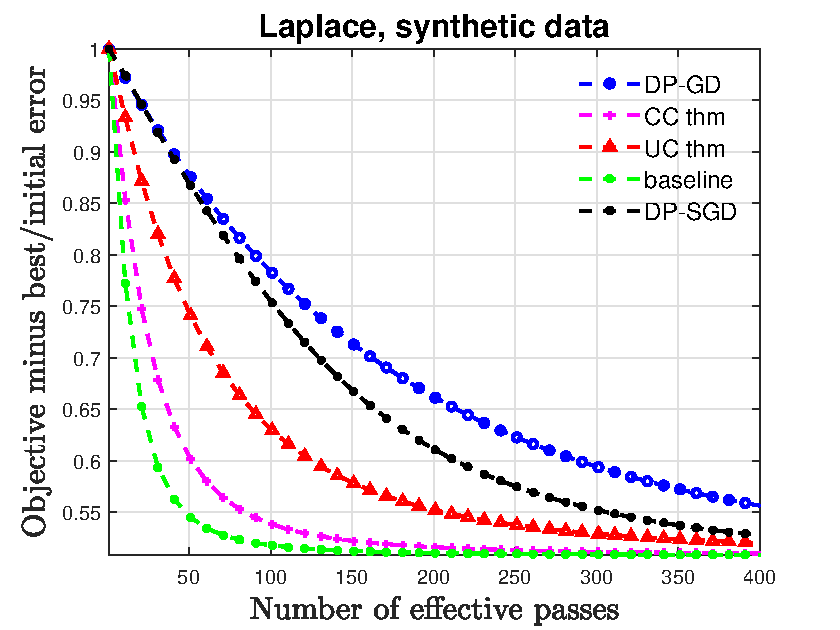}}
  \newline
   \setcounter{subfigure}{0}
  	\subfigure[$\epsilon = 0.5$]{
		\includegraphics[width=0.23\textwidth]{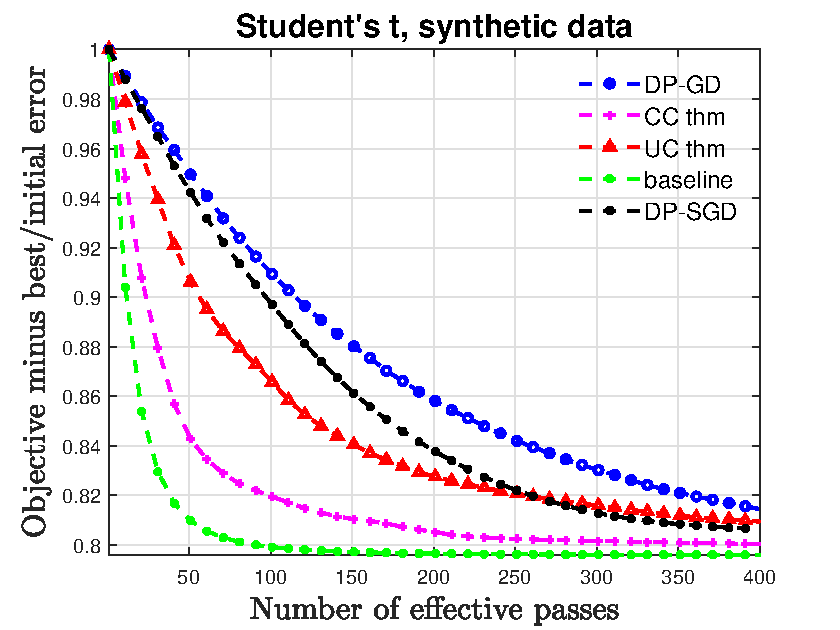}}
	\subfigure[$\epsilon = 0.75$]{
		\includegraphics[width=0.23\textwidth]{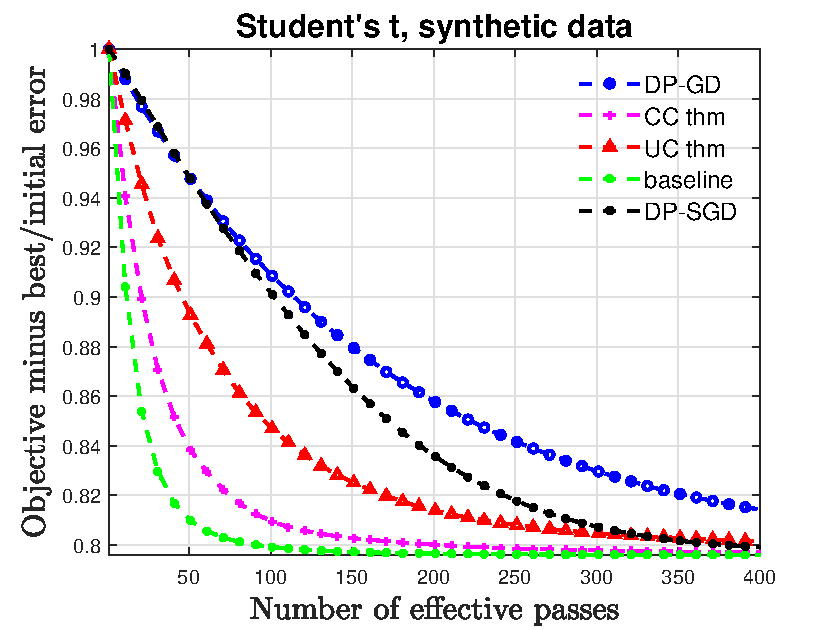}}
	\subfigure[$\epsilon = 1.0$]{
		\includegraphics[width=0.23\textwidth]{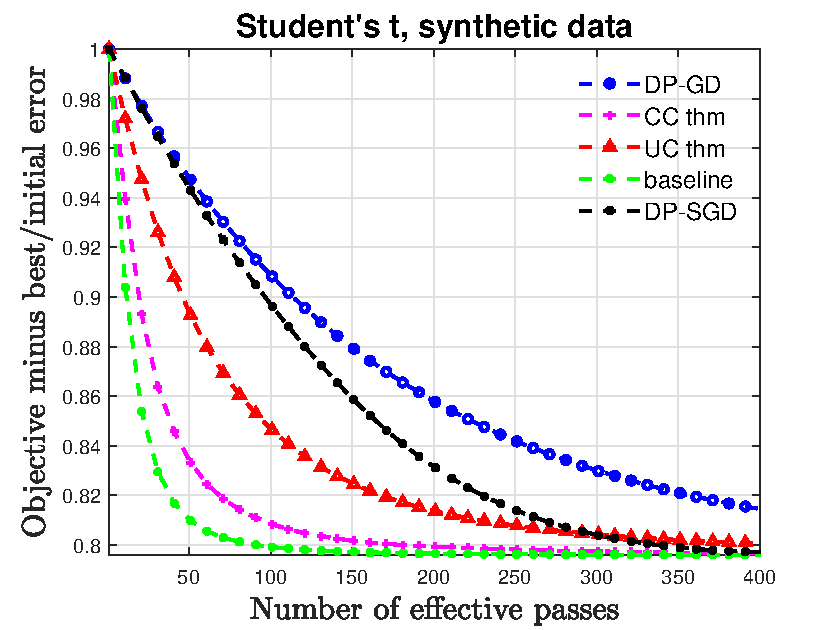}}
	\subfigure[$\epsilon = 2.0$]{
		\includegraphics[width=0.23\textwidth]{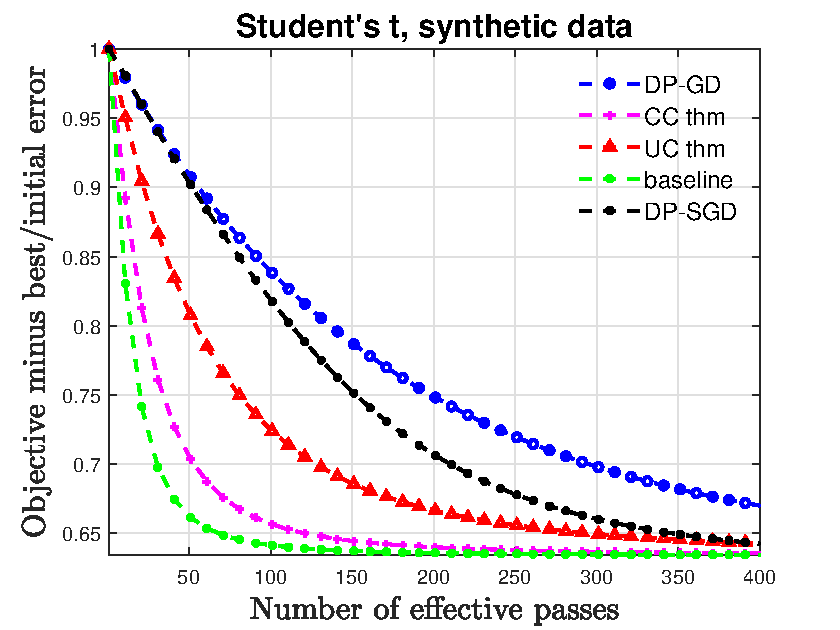}}
    \caption{Trajectories of the ridge regression model for the synthetic data. The three rows correspond to the \textit{Chi-squared distribution}, \textit{Laplace distribution}, and \textit{Student's t-distribution}, respectively.}
 \label{fig4}
\end{figure*}

\begin{figure*}[t]
	\centering
    \subfigure{
		\includegraphics[width=0.23\textwidth]{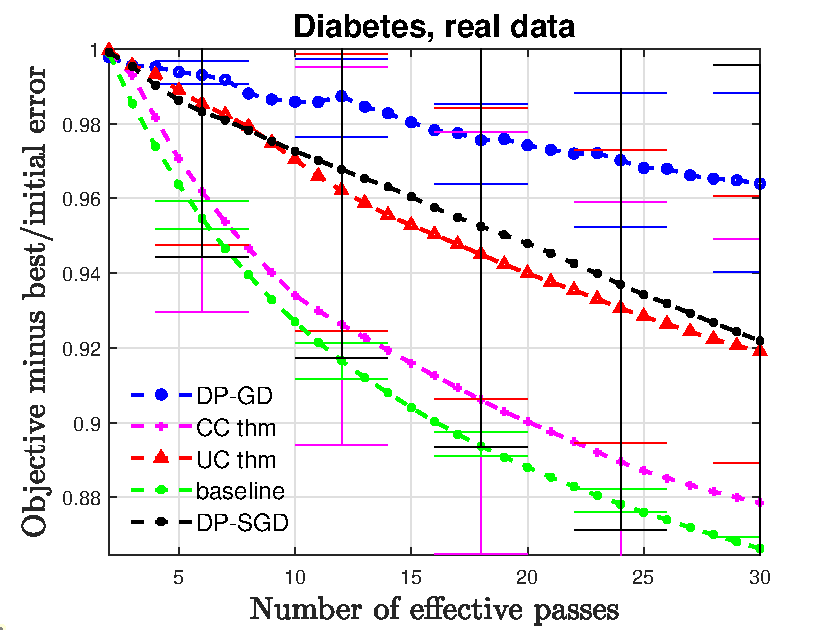}}
	\subfigure{
		\includegraphics[width=0.23\textwidth]{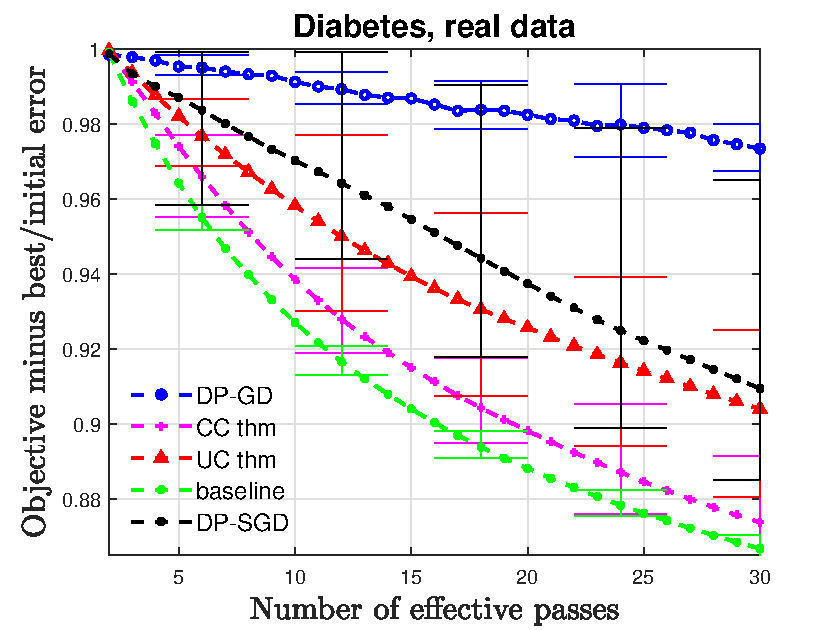}}
	\subfigure{
		\includegraphics[width=0.23\textwidth]{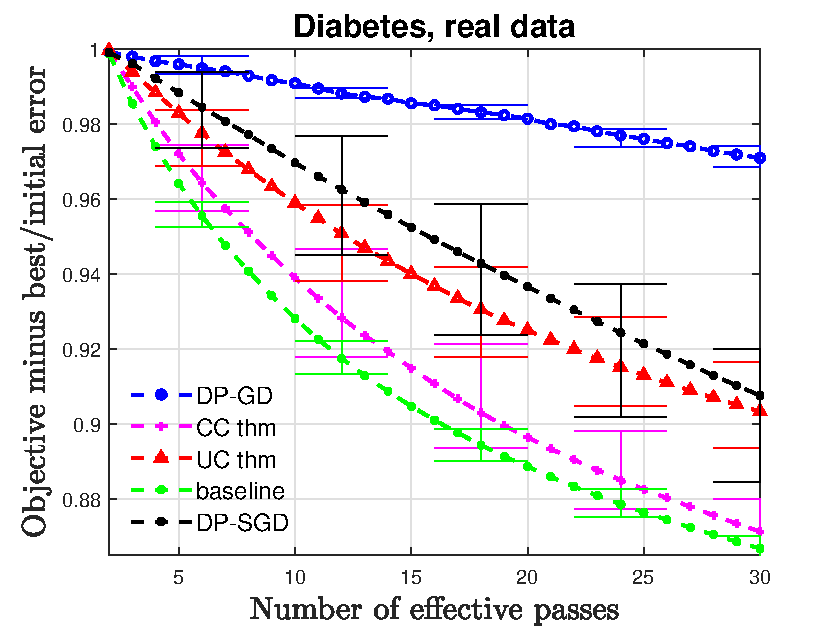}}
	\subfigure{
		\includegraphics[width=0.23\textwidth]{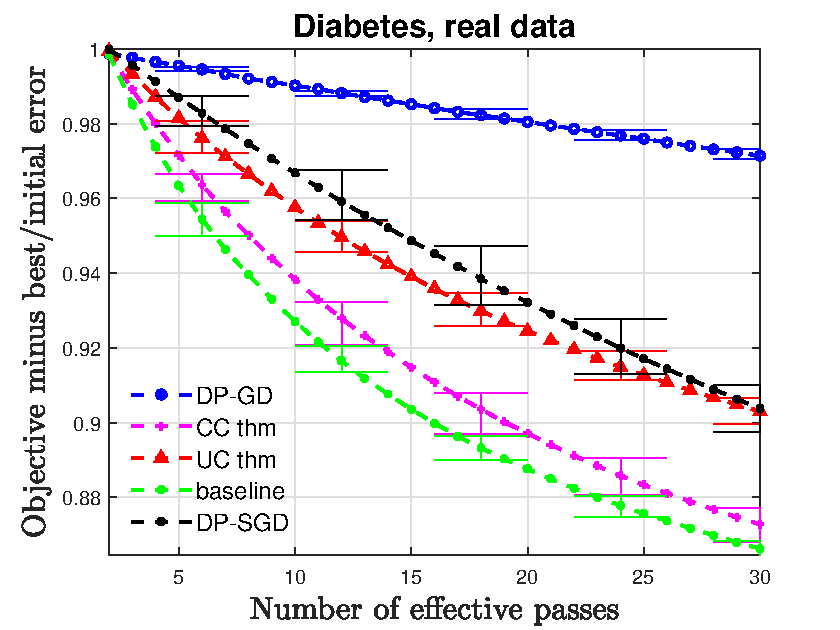}}
  \newline
	\subfigure{
		\includegraphics[width=0.23\textwidth]{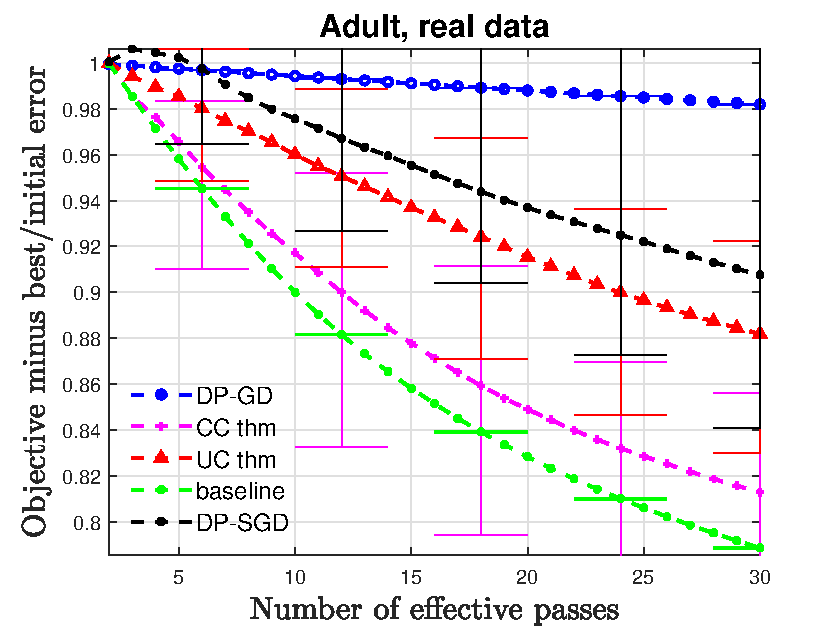}}
	\subfigure{
		\includegraphics[width=0.23\textwidth]{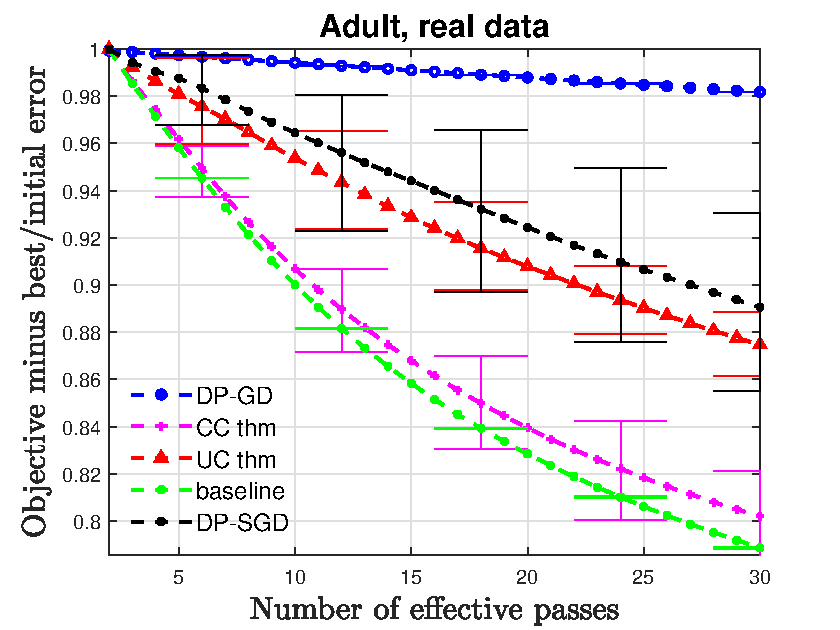}}
	\subfigure{
		\includegraphics[width=0.23\textwidth]{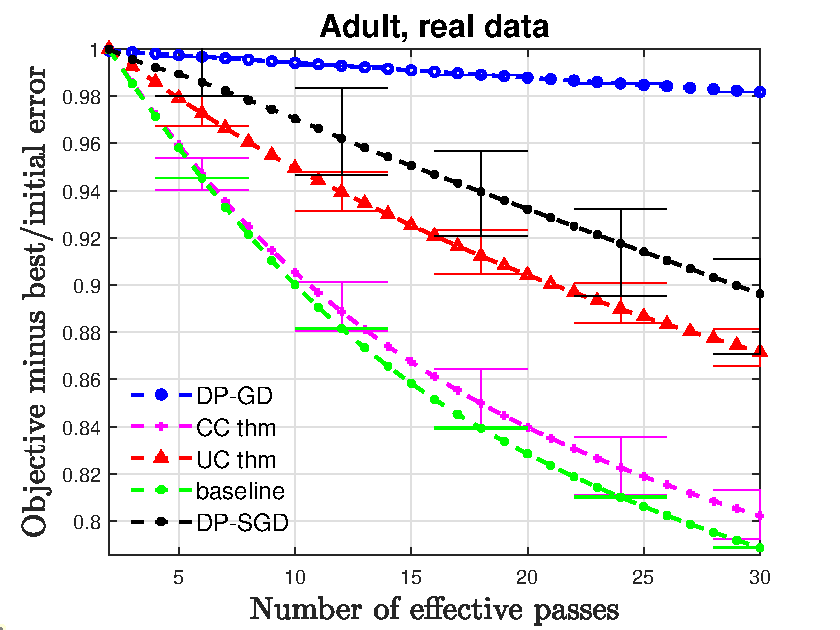}}
	\subfigure{
		\includegraphics[width=0.23\textwidth]{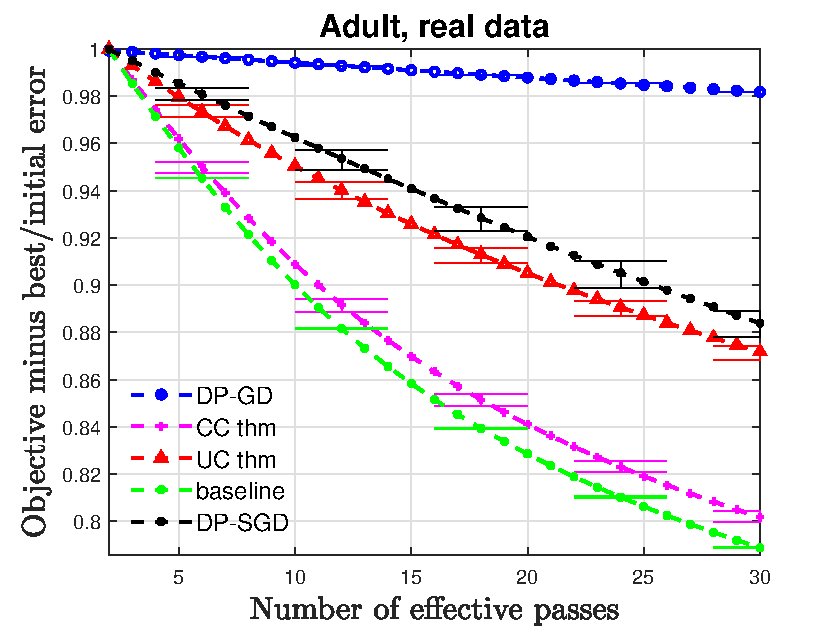}}
  \newline
    	\subfigure{
		\includegraphics[width=0.23\textwidth]{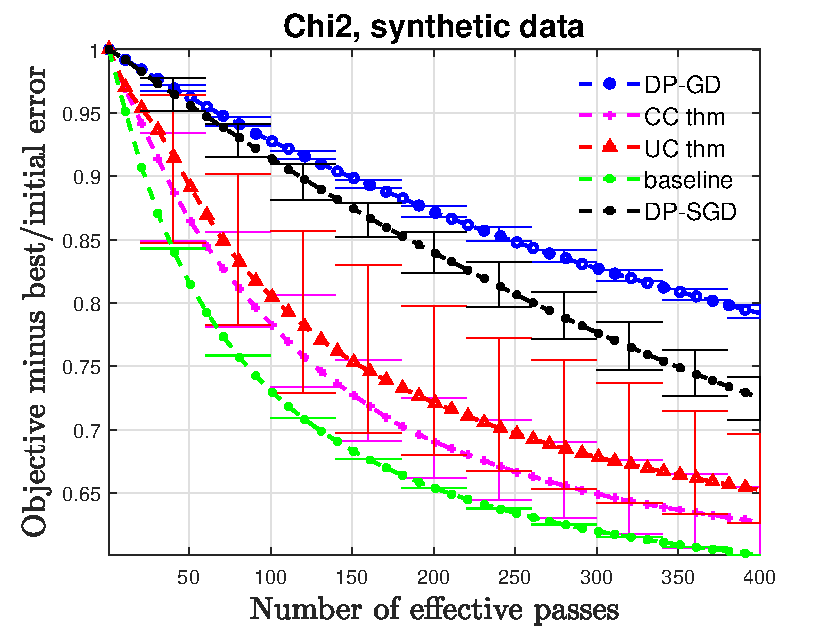}}
	\subfigure{
		\includegraphics[width=0.23\textwidth]{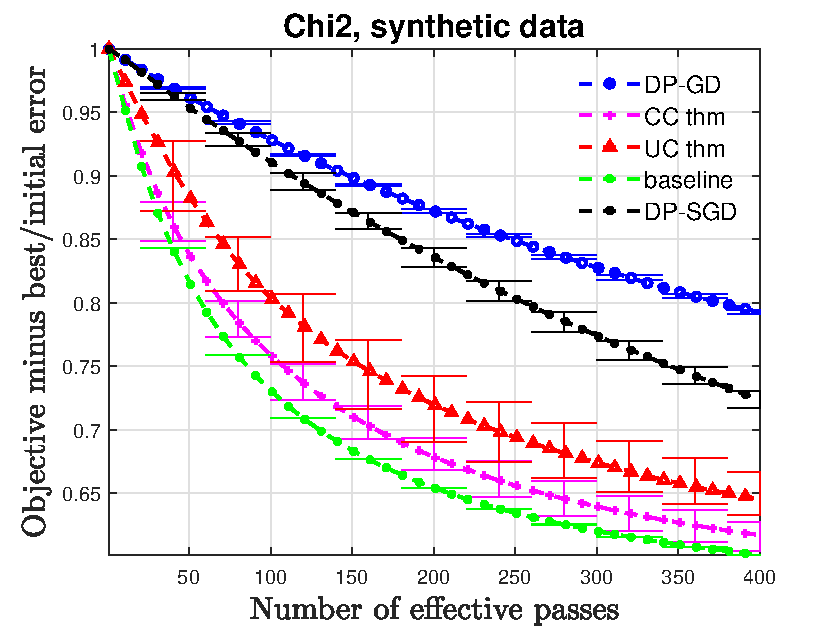}}
	\subfigure{
		\includegraphics[width=0.23\textwidth]{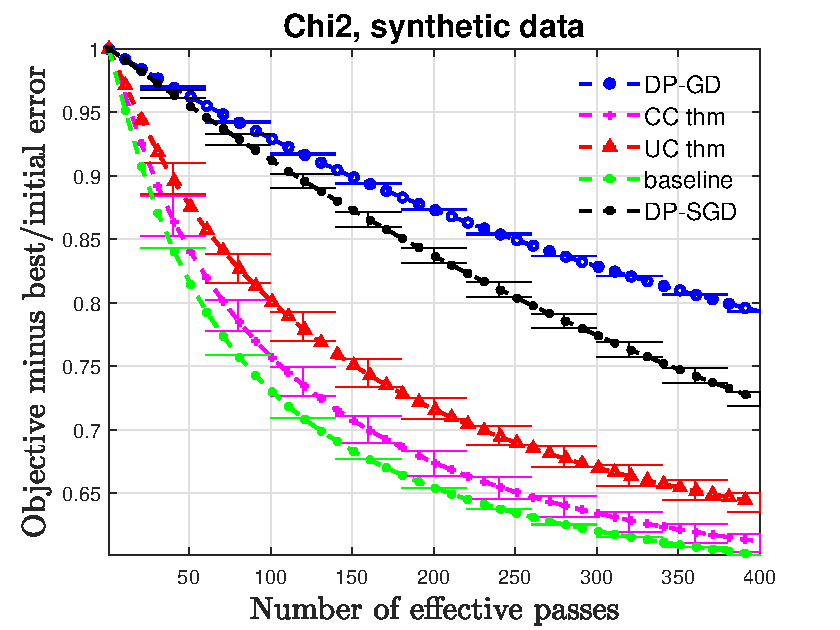}}
	\subfigure{
		\includegraphics[width=0.23\textwidth]{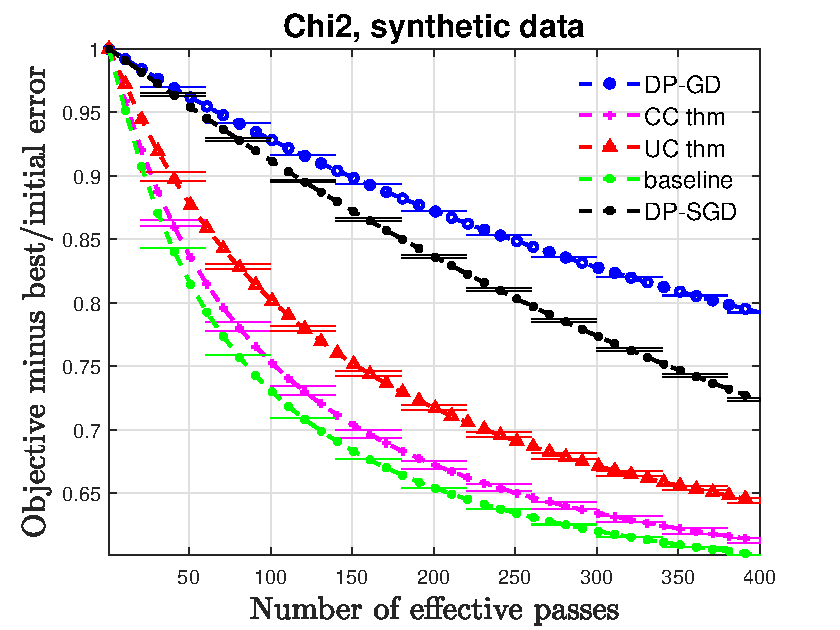}}
  \newline
    	\subfigure{
		\includegraphics[width=0.23\textwidth]{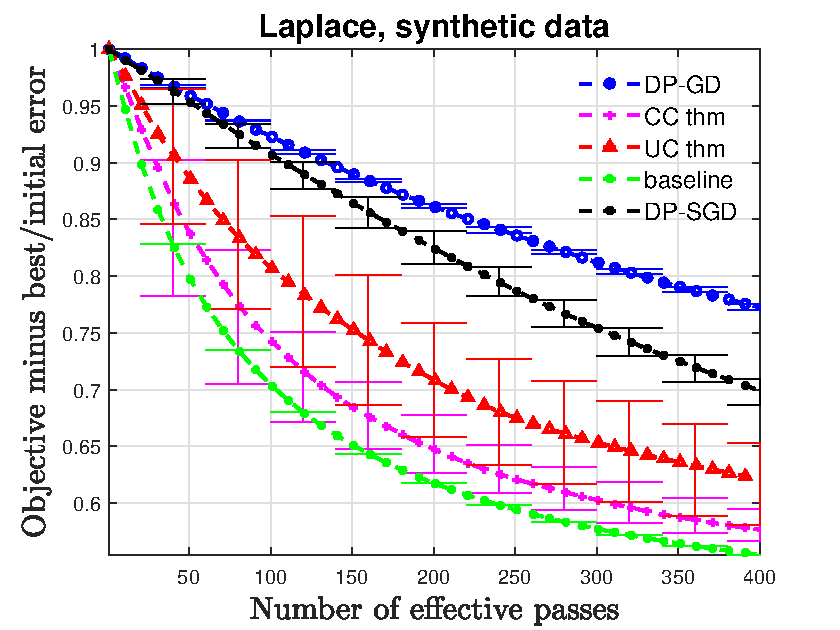}}
	\subfigure{
		\includegraphics[width=0.23\textwidth]{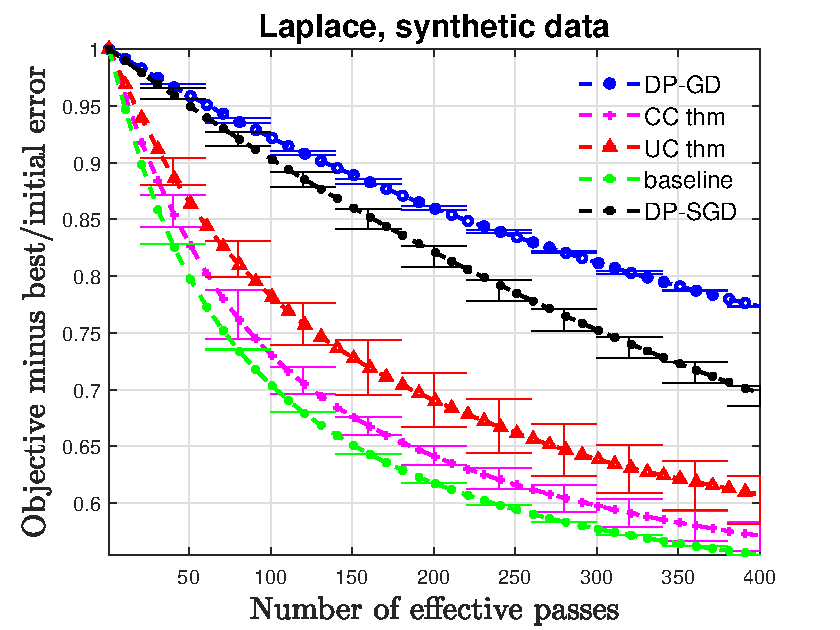}}
	\subfigure{
		\includegraphics[width=0.23\textwidth]{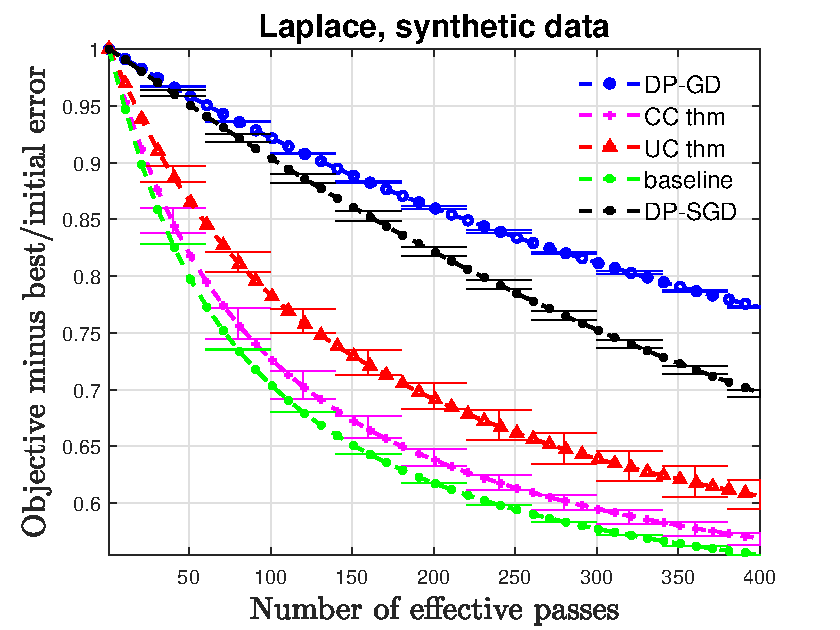}}
	\subfigure{
		\includegraphics[width=0.23\textwidth]{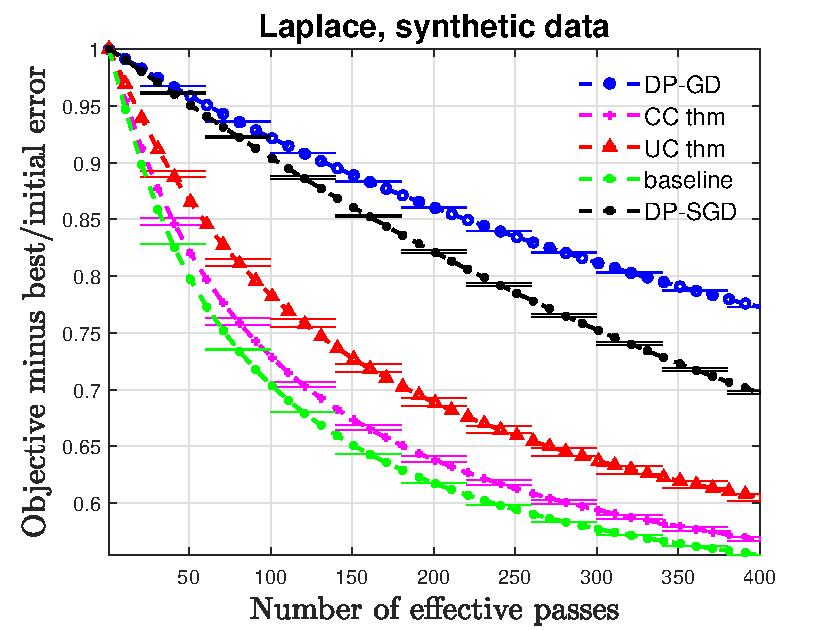}}
  \newline
   \setcounter{subfigure}{0}
  	\subfigure[$\epsilon = 0.5$]{
		\includegraphics[width=0.23\textwidth]{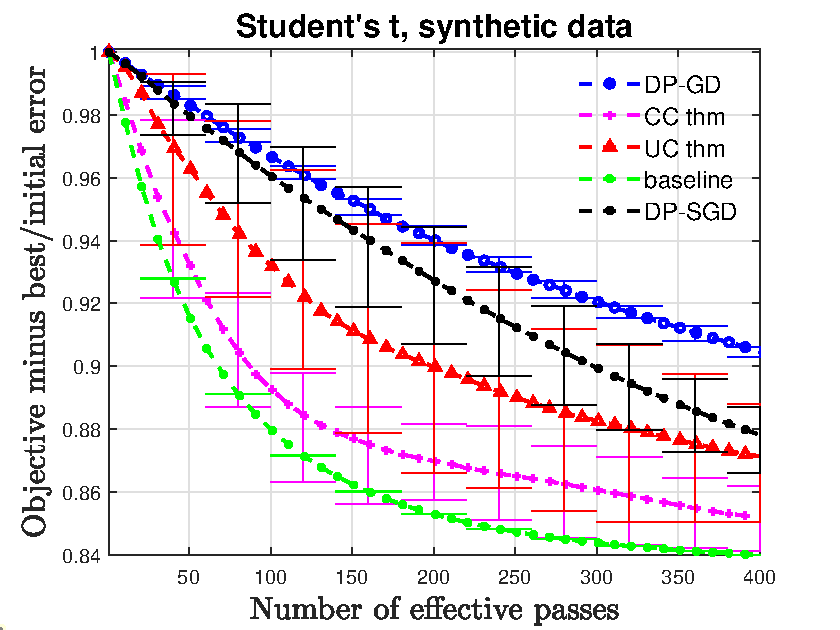}}
	\subfigure[$\epsilon = 0.75$]{
		\includegraphics[width=0.23\textwidth]{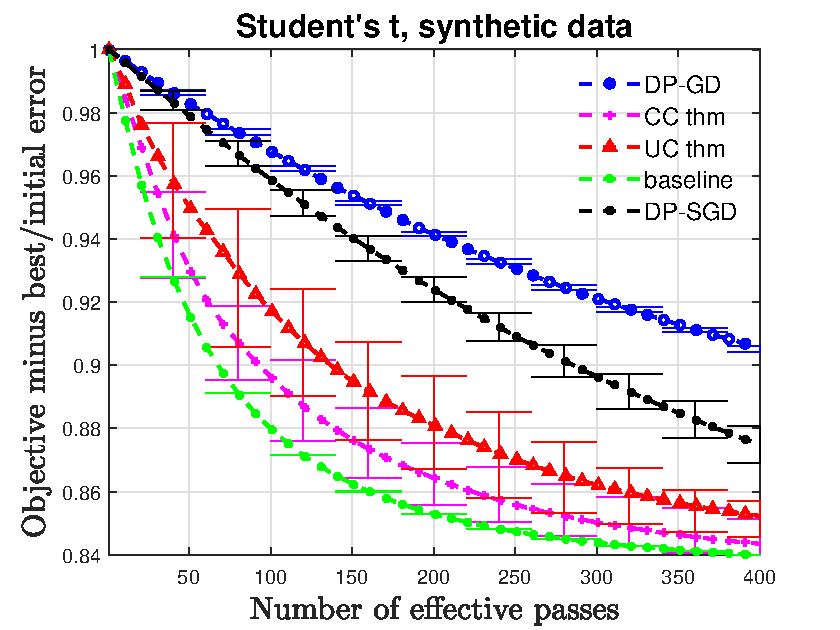}}
	\subfigure[$\epsilon = 1.0$]{
		\includegraphics[width=0.23\textwidth]{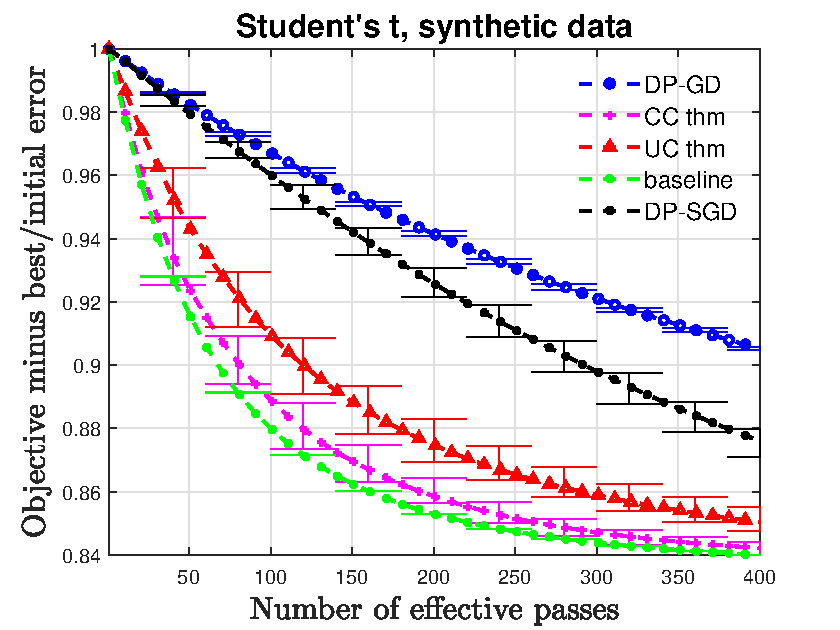}}
	\subfigure[$\epsilon = 2.0$]{
		\includegraphics[width=0.23\textwidth]{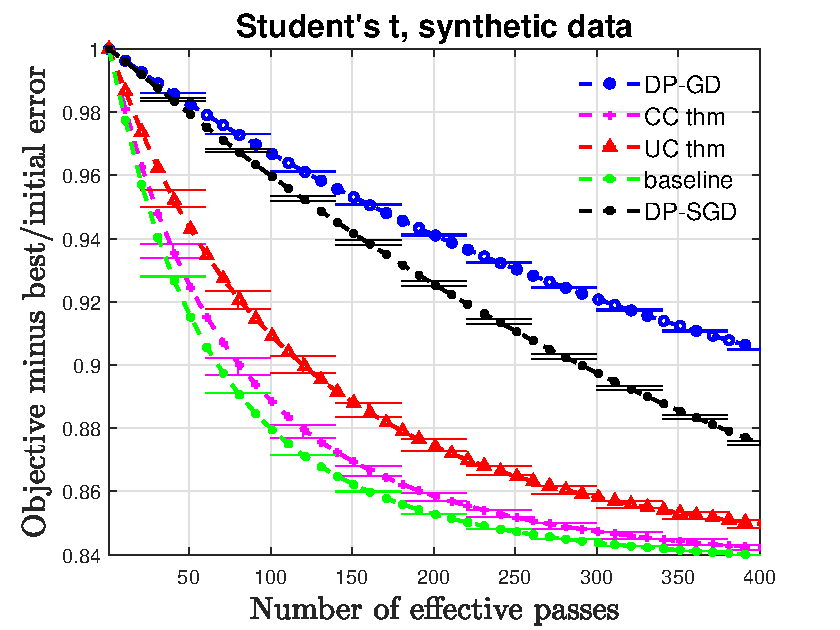}}
    \caption{Trajectories and deviations of the logistic regression model. The five rows correspond to the datasets of \textit{Diabetes}, \textit{Adult}, \textit{Chi-squared distribution}, \textit{Laplace distribution}, and \textit{Student's t-distribution}, respectively.}
 \label{fig5}
\end{figure*}

\begin{figure*}[t]
	\centering
    \subfigure{
		\includegraphics[width=0.23\textwidth]{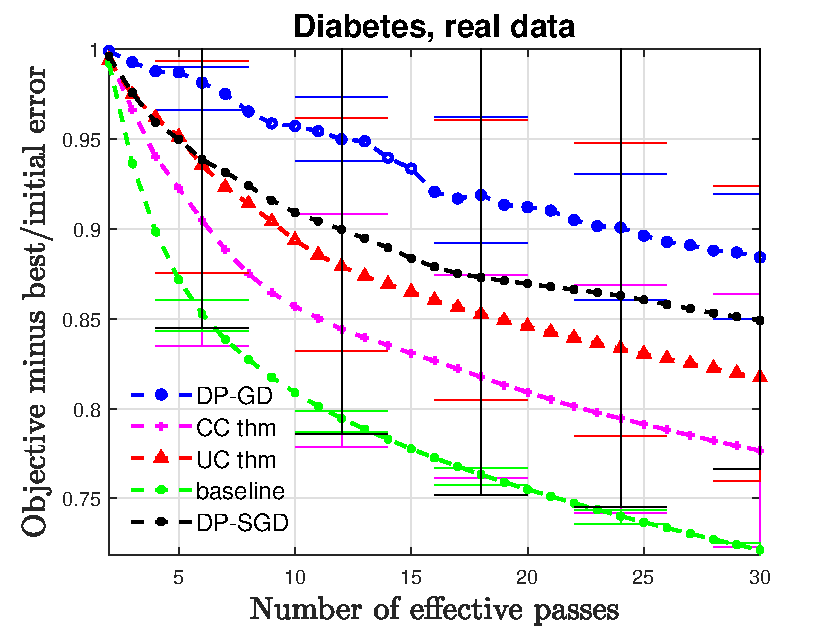}}
	\subfigure{
		\includegraphics[width=0.23\textwidth]{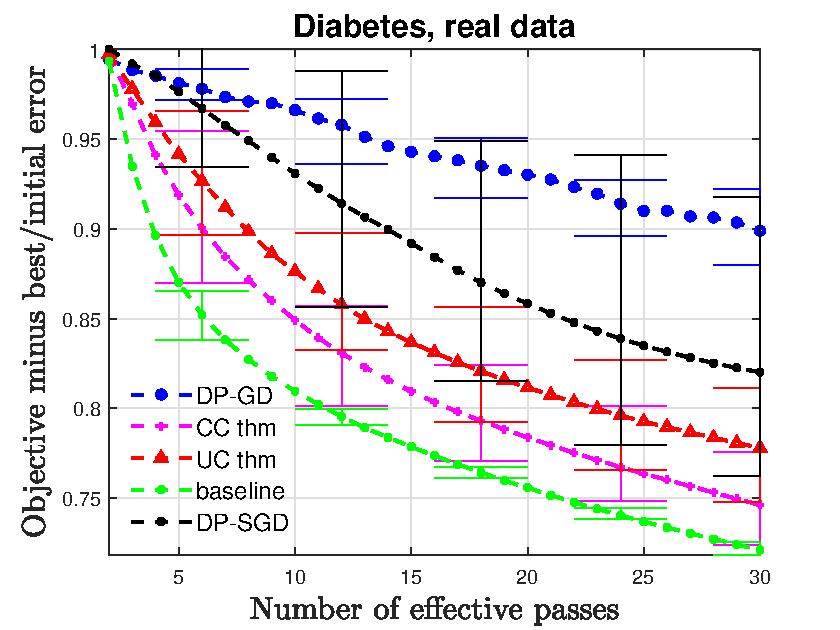}}
	\subfigure{
		\includegraphics[width=0.23\textwidth]{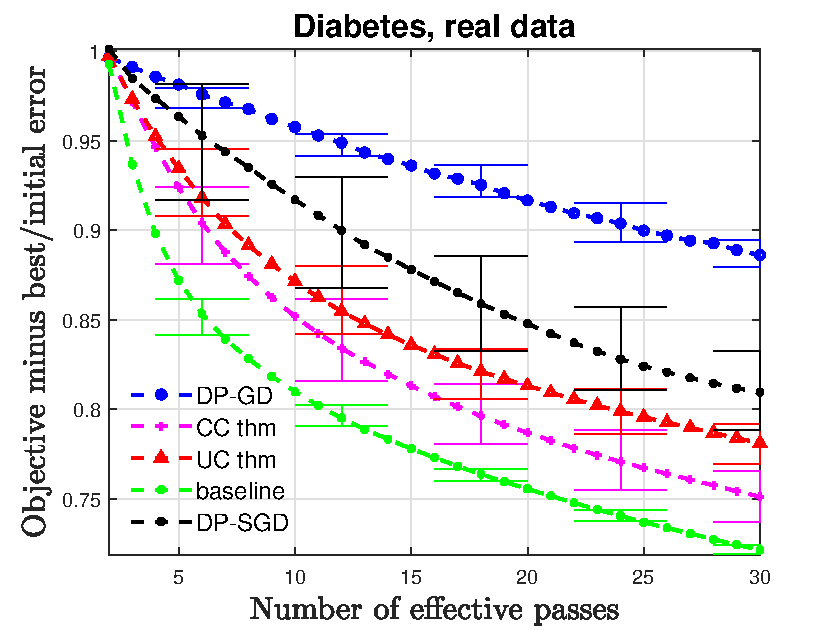}}
	\subfigure{
		\includegraphics[width=0.23\textwidth]{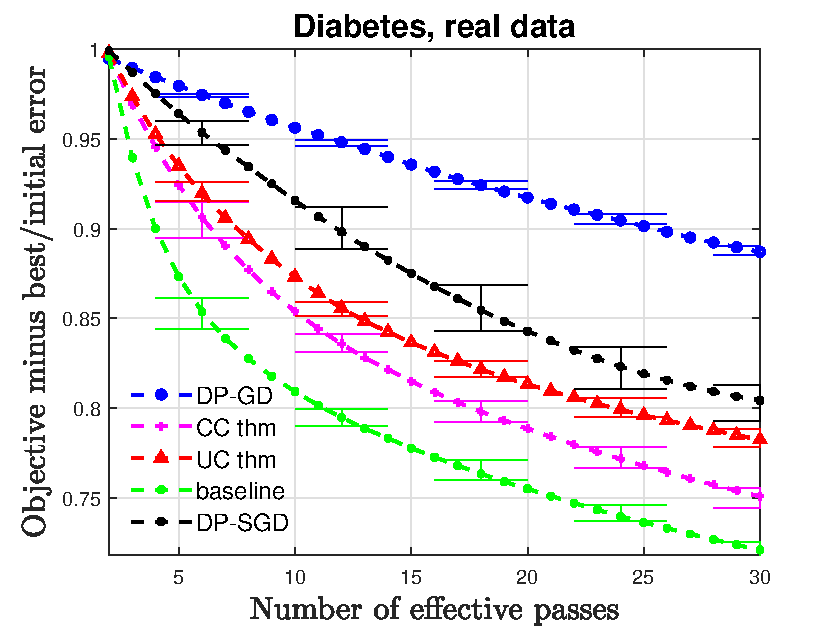}}
  \newline
	\subfigure{
		\includegraphics[width=0.23\textwidth]{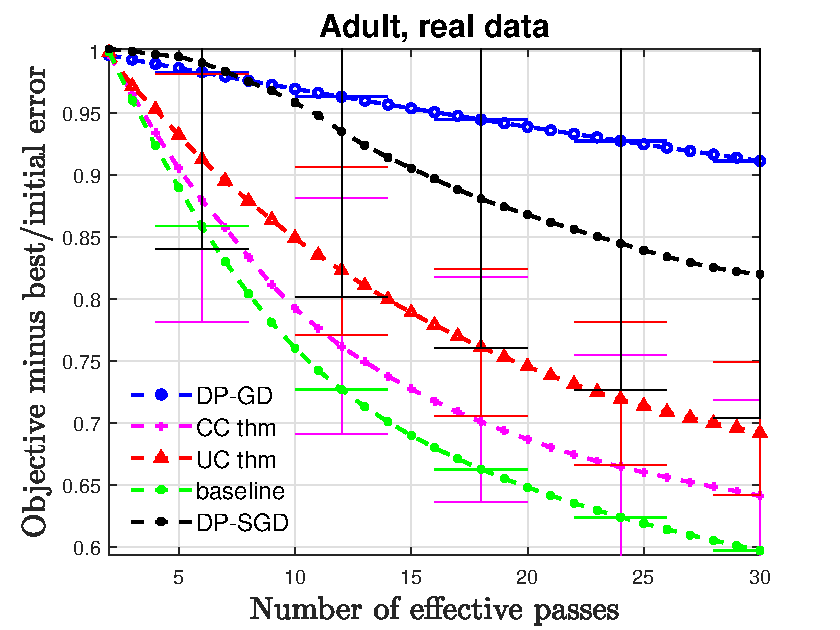}}
	\subfigure{
		\includegraphics[width=0.23\textwidth]{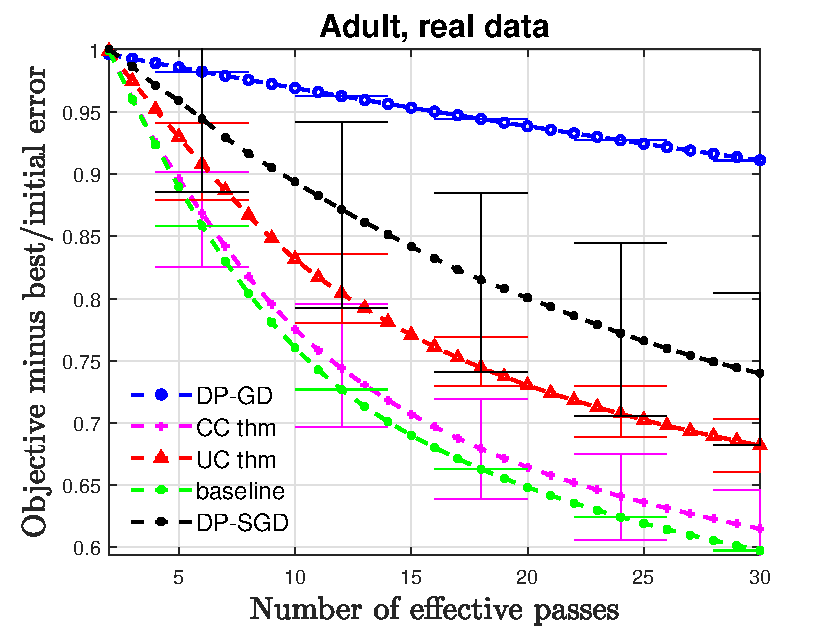}}
	\subfigure{
		\includegraphics[width=0.23\textwidth]{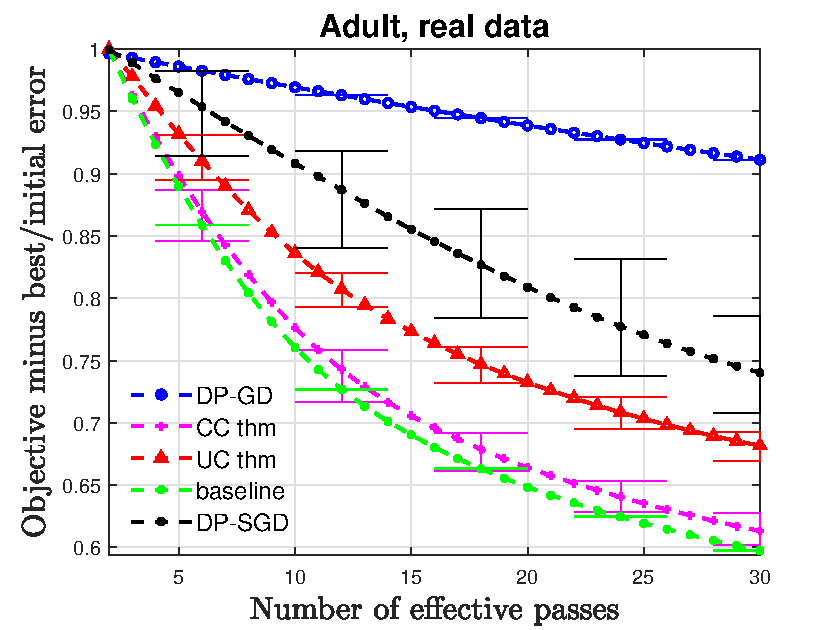}}
	\subfigure{
		\includegraphics[width=0.23\textwidth]{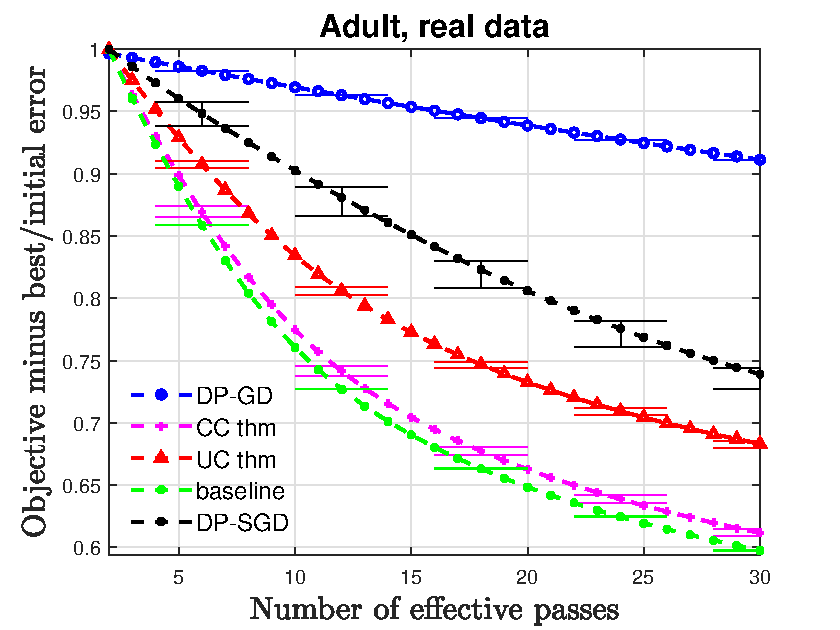}}
  \newline
    	\subfigure{
		\includegraphics[width=0.23\textwidth]{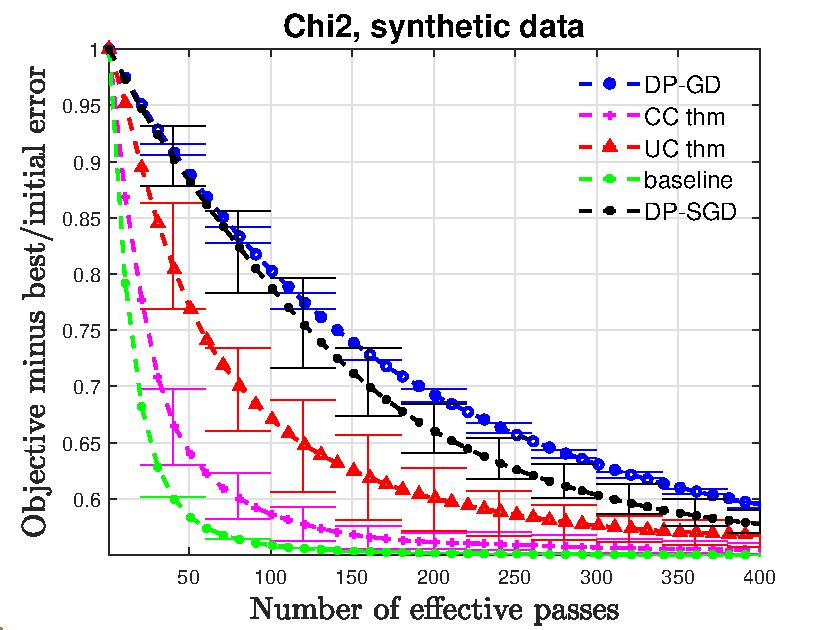}}
	\subfigure{
		\includegraphics[width=0.23\textwidth]{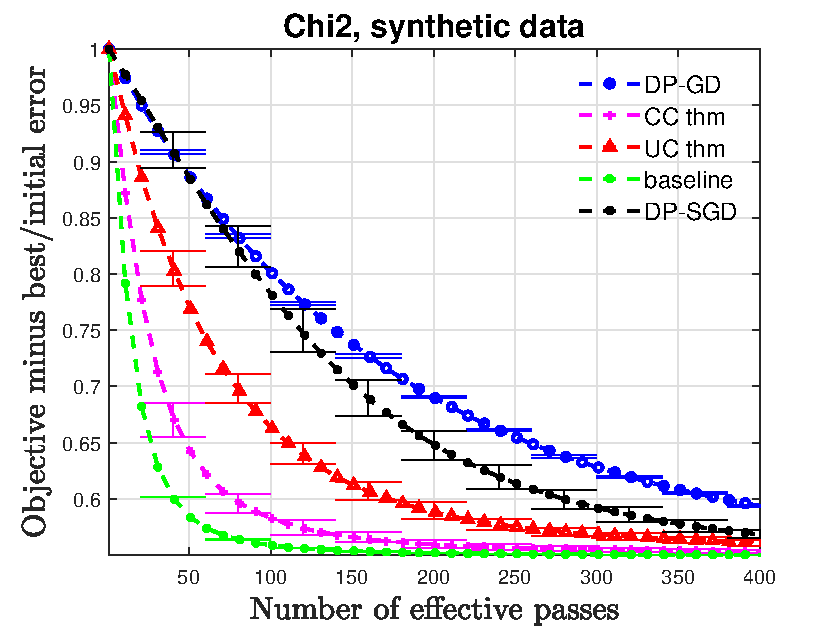}}
	\subfigure{
		\includegraphics[width=0.23\textwidth]{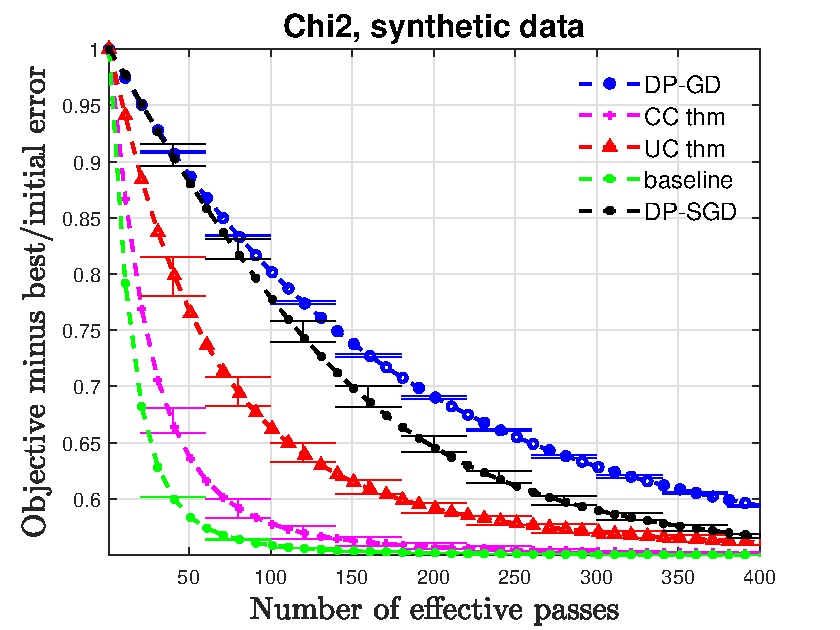}}
	\subfigure{
		\includegraphics[width=0.23\textwidth]{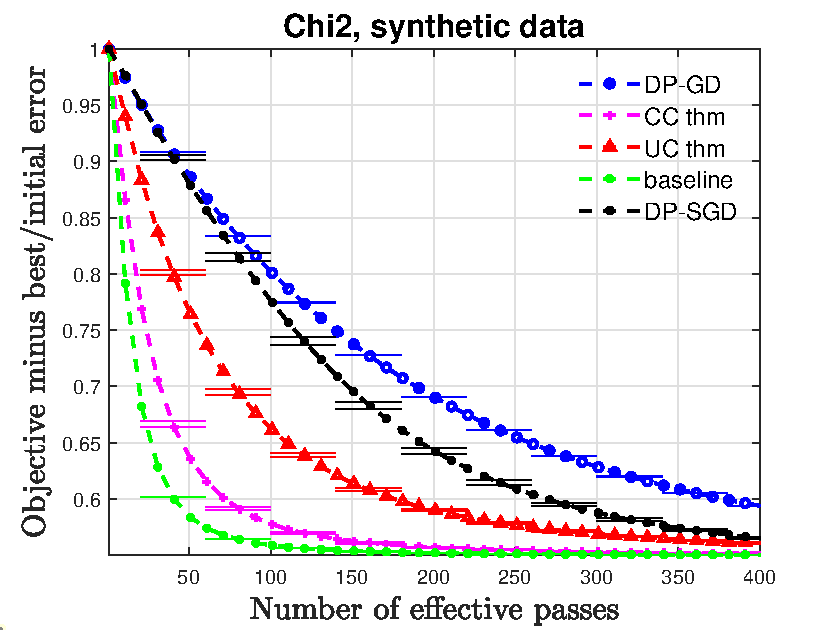}}
  \newline
    	\subfigure{
		\includegraphics[width=0.23\textwidth]{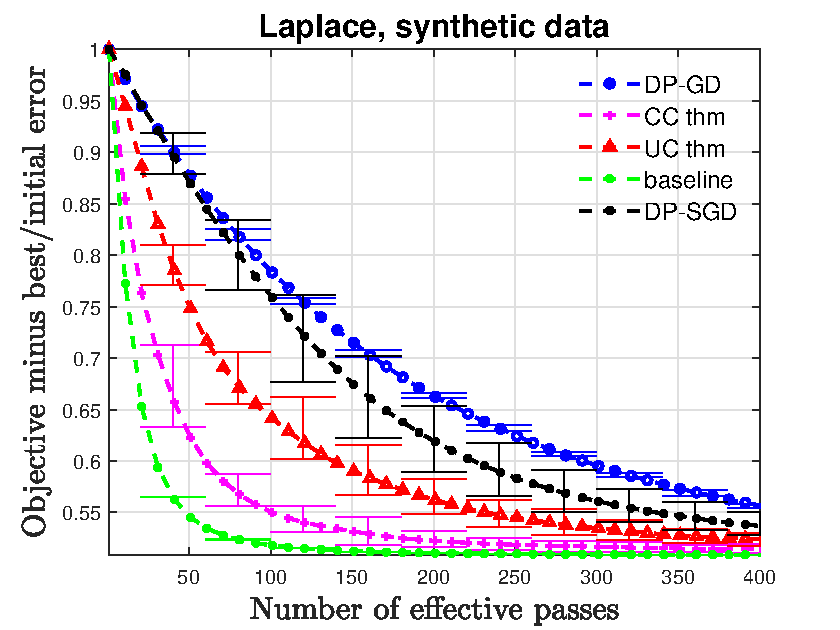}}
	\subfigure{
		\includegraphics[width=0.23\textwidth]{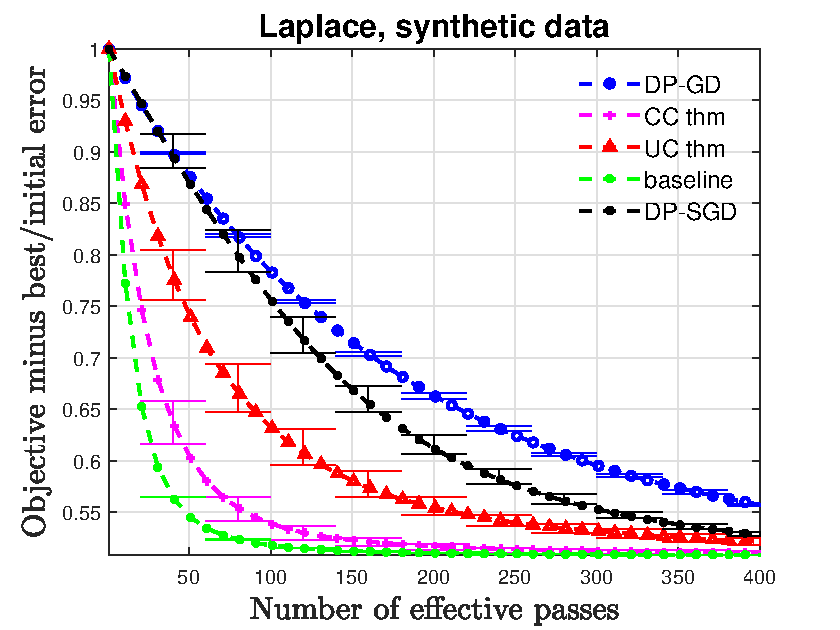}}
	\subfigure{
		\includegraphics[width=0.23\textwidth]{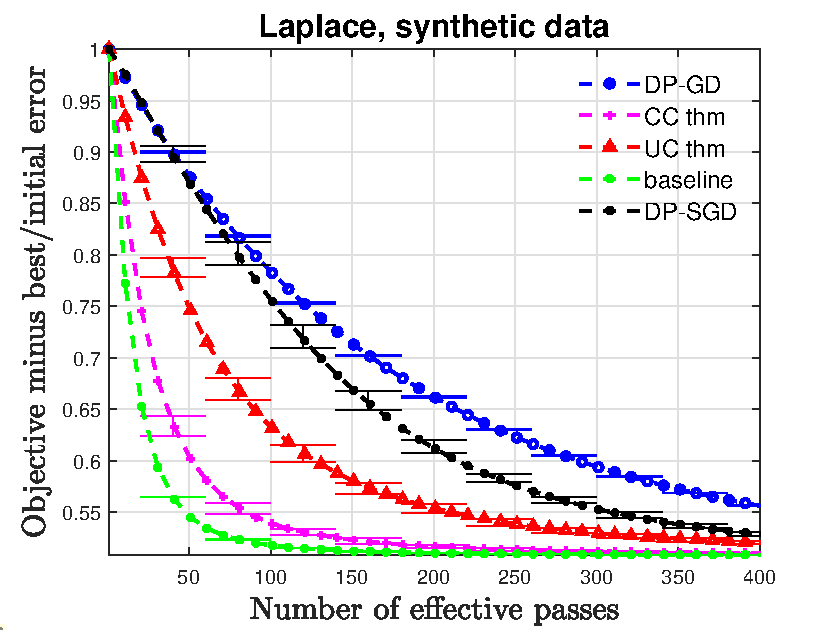}}
	\subfigure{
		\includegraphics[width=0.23\textwidth]{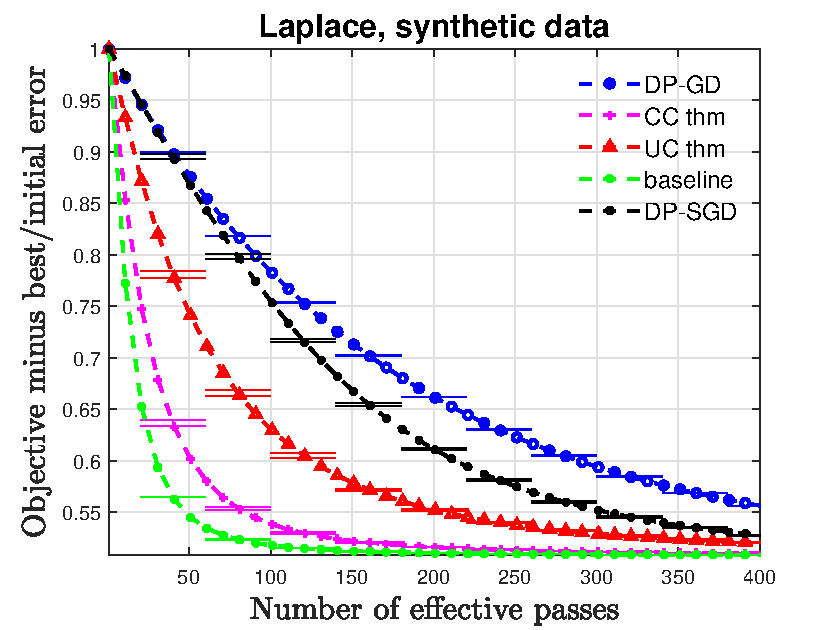}}
  \newline
   \setcounter{subfigure}{0}
  	\subfigure[$\epsilon = 0.5$]{
		\includegraphics[width=0.23\textwidth]{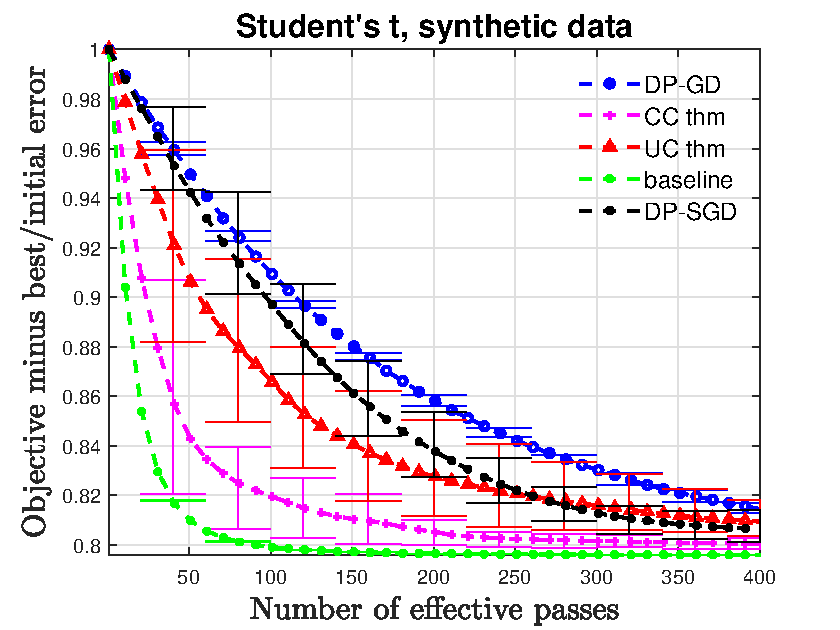}}
	\subfigure[$\epsilon = 0.75$]{
		\includegraphics[width=0.23\textwidth]{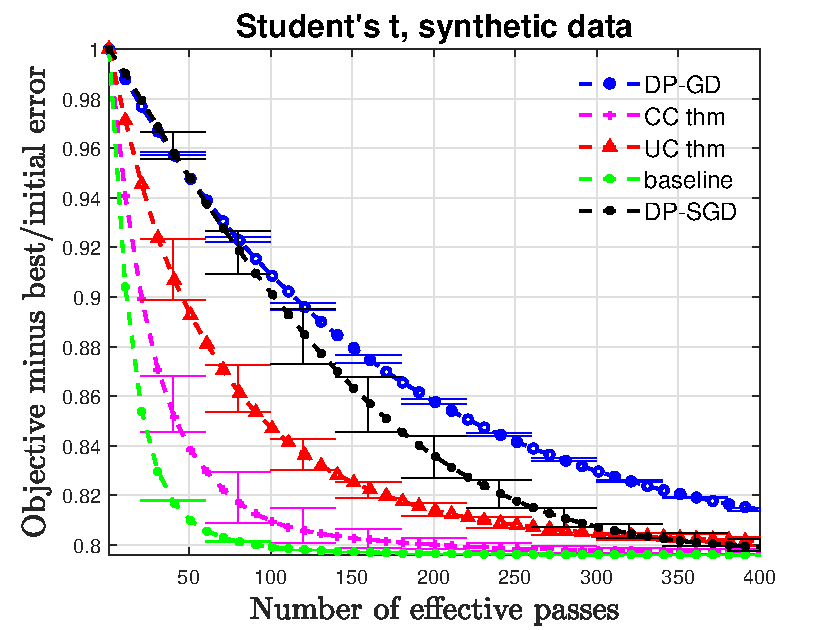}}
	\subfigure[$\epsilon = 1.0$]{
		\includegraphics[width=0.23\textwidth]{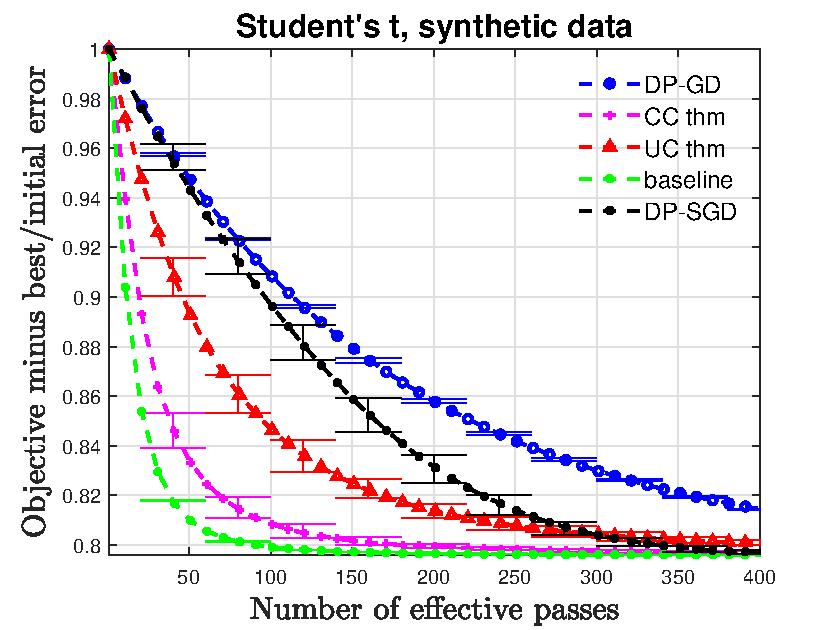}}
	\subfigure[$\epsilon = 2.0$]{
		\includegraphics[width=0.23\textwidth]{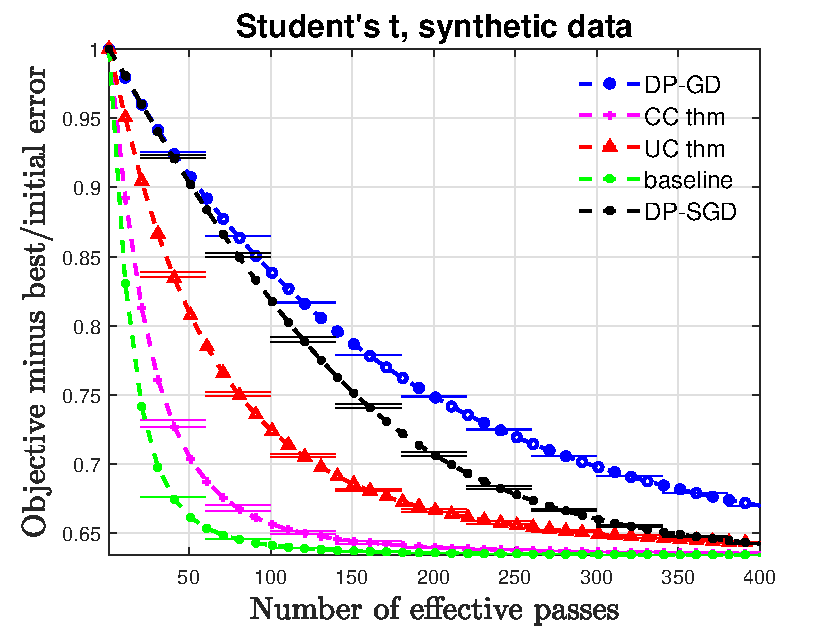}}
    \caption{Trajectories and deviations of the ridge regression model. The five rows correspond to the datasets of \textit{Diabetes}, \textit{Adult}, \textit{Chi-squared distribution}, \textit{Laplace distribution}, and \textit{Student's t-distribution}, respectively.}
 \label{fig6}
\end{figure*}

\bmhead{Acknowledgments}
We would like to acknowledge the support of Grant NSFC/RGC N\_CUHK 415/19, Grant ITF ITS/173/22FP, Grant RGC 14300219, 14302920, 14301121, and CUHK Direct Grant for Research for Prof. Tieyong Zeng; the NSFC General Program No. 62376235, Guangdong Basic and Applied Basic Research Foundation Nos. 2024A1515012399 and 2022A1515011652, HKBU Faculty Niche Research Areas No. RC-FNRA-IG/22-23/SCI/04, and HKBU CSD Departmental Incentive Grant for Prof. Bo Han. We thank all the reviewers for their thoughtful comments on this work.

\bmhead{Author contributions}
All authors contributed according to the order presented at the beginning of this work.

\bmhead{Funding}
This work was supported in part by Grant NSFC/RGC N\_CUHK 415/19, Grant ITF ITS/173/22FP, Grant RGC 14300219, 14302920, 14301121, and CUHK Direct Grant for Research.

\bmhead{Data availability}
The data supporting this study are publicly available and have been cited. 

\bmhead{Code availability}
Available at \url{https://github.com/Jchenhan/AClippingDP}.

\section*{Declarations}

\begin{itemize}
\item Conflict of interest: The authors declare that they have no conflict of interest.
\item Ethics approval: Not Applicable. 
\item Consent to participate: Not Applicable.
\item Consent for publication: Not Applicable.
\end{itemize}



\end{appendices}

\clearpage
\bibliography{Manuscript}

\begin{thebibliography}{45}
\providecommand{\natexlab}[1]{#1}
\providecommand{\url}[1]{{#1}}
\providecommand{\urlprefix}{URL }
\providecommand{\doi}[1]{\url{https://doi.org/#1}}
\providecommand{\eprint}[2][]{\url{#2}}
 \bibcommenthead

\bibitem[{Abadi et~al(2016)Abadi, Chu, Goodfellow, McMahan, Mironov, Talwar, and Zhang}]{abadi2016deep}
Abadi M, Chu A, Goodfellow I, et~al (2016) Deep learning with differential privacy. \emph{Proceedings of the 2016 ACM SIGSAC conference on computer and communications security}, pp 308--318

\bibitem[{Asi et~al(2021)Asi, Feldman, Koren, and Talwar}]{asi2021private}
Asi H, Feldman V, Koren T, et~al (2021) Private stochastic convex optimization: Optimal rates in l1 geometry. \emph{International Conference on Machine Learning}, PMLR, pp 393--403

\bibitem[{Bassily et~al(2014)Bassily, Smith, and Thakurta}]{bassily2014private}
Bassily R, Smith A, Thakurta A (2014) Private empirical risk minimization: Efficient algorithms and tight error bounds. \emph{2014 IEEE 55th Annual Symposium on Foundations of Computer Science}, IEEE, pp 464--473

\bibitem[{Bassily et~al(2019)Bassily, Feldman, Talwar, and Guha~Thakurta}]{bassily2019private}
Bassily R, Feldman V, Talwar K, et~al (2019) Private stochastic convex optimization with optimal rates. \emph{Advances in neural information processing systems}

\bibitem[{Bassily et~al(2021)Bassily, Guzm{\'a}n, and Nandi}]{bassily2021non}
Bassily R, Guzm{\'a}n C, Nandi A (2021) Non-euclidean differentially private stochastic convex optimization. \emph{arXiv preprint arXiv:210301278}

\bibitem[{Brownlees et~al(2015)Brownlees, Joly, Lugosi et~al}]{brownlees2015empirical}
Brownlees C, Joly E, Lugosi G, et~al (2015) Empirical risk minimization for heavy-tailed losses. \emph{Annals of Statistics} 43(6):2507--2536

\bibitem[{Bun and Steinke(2019)}]{bun2019average}
Bun M, Steinke T (2019) Average-case averages: Private algorithms for smooth sensitivity and mean estimation. \emph{arXiv preprint arXiv:190602830}

\bibitem[{Chang and Lin(2011)}]{chang2011libsvm}
Chang CC, Lin CJ (2011) Libsvm: a library for support vector machines. \emph{ACM transactions on intelligent systems and technology (TIST)} 2(3):1--27

\bibitem[{Chaux et~al(2007)Chaux, Combettes, Pesquet, and Wajs}]{chaux2007variational}
Chaux C, Combettes PL, Pesquet JC, et~al (2007) A variational formulation for frame-based inverse problems. \emph{Inverse Problems} 23(4):1495

\bibitem[{Cohen et~al(2020)Cohen, Davis, and Samorodnitsky}]{cohen2020heavy}
Cohen JE, Davis RA, Samorodnitsky G (2020) Heavy-tailed distributions, correlations, kurtosis and taylor’s law of fluctuation scaling. \emph{Proceedings of the Royal Society A} 476(2244):20200,610

\bibitem[{Das et~al(2023)Das, Kale, Xu, Zhang, and Sanghavi}]{das2023beyond}
Das R, Kale S, Xu Z, et~al (2023) Beyond uniform lipschitz condition in differentially private optimization. \emph{International Conference on Machine Learning}, PMLR, pp 7066--7101

\bibitem[{Ding et~al(2017)Ding, Kulkarni, and Yekhanin}]{ding2017collecting}
Ding B, Kulkarni J, Yekhanin S (2017) Collecting telemetry data privately. \emph{arXiv:171201524}

\bibitem[{Dvurechensky and Gasnikov(2016)}]{dvurechensky2016stochastic}
Dvurechensky P, Gasnikov A (2016) Stochastic intermediate gradient method for convex problems with stochastic inexact oracle. \emph{Journal of Optimization Theory and Applications} 171(1):121--145

\bibitem[{Dwork et~al(2006)Dwork, McSherry, Nissim, and Smith}]{Dwork2006CalibratingNT}
Dwork C, McSherry F, Nissim K, et~al (2006) Calibrating noise to sensitivity in private data analysis. \emph{TCC}

\bibitem[{Dwork et~al(2014)Dwork, Roth et~al}]{dwork2014algorithmic}
Dwork C, Roth A, et~al (2014) The algorithmic foundations of differential privacy. \emph{Foundations and Trends in Theoretical Computer Science} 9(3-4):211--407

\bibitem[{Dzhaparidze and Van~Zanten(2001)}]{dzhaparidze2001bernstein}
Dzhaparidze K, Van~Zanten J (2001) On bernstein-type inequalities for martingales. \emph{Stochastic processes and their applications} 93(1):109--117

\bibitem[{Feldman et~al(2020)Feldman, Koren, and Talwar}]{feldman2020private}
Feldman V, Koren T, Talwar K (2020) Private stochastic convex optimization: optimal rates in linear time. \emph{Proceedings of the 52nd Annual ACM SIGACT Symposium on Theory of Computing}, pp 439--449

\bibitem[{Freedman(1975)}]{freedman1975tail}
Freedman DA (1975) On tail probabilities for martingales. \emph{the Annals of Probability} pp 100--118

\bibitem[{Ghadimi and Lan(2012)}]{ghadimi2012optimal}
Ghadimi S, Lan G (2012) Optimal stochastic approximation algorithms for strongly convex stochastic composite optimization i: A generic algorithmic framework. \emph{SIAM Journal on Optimization} 22(4):1469--1492

\bibitem[{Ghadimi and Lan(2013)}]{ghadimi2013optimal}
Ghadimi S, Lan G (2013) Optimal stochastic approximation algorithms for strongly convex stochastic composite optimization, ii: shrinking procedures and optimal algorithms. \emph{SIAM Journal on Optimization} 23(4):2061--2089

\bibitem[{Gorbunov et~al(2020)Gorbunov, Danilova, and Gasnikov}]{gorbunov2020stochastic}
Gorbunov E, Danilova M, Gasnikov A (2020) Stochastic optimization with heavy-tailed noise via accelerated gradient clipping. \emph{Advances in Neural Information Processing Systems} 33:15,042--15,053

\bibitem[{Gorbunov et~al(2021)Gorbunov, Danilova, Shibaev, Dvurechensky, and Gasnikov}]{Gorbunov2021near}
Gorbunov E, Danilova M, Shibaev I, et~al (2021) Near-optimal high probability complexity bounds for non-smooth stochastic optimization with heavy-tailed noise. \emph{arXiv preprint arXiv:210605958}

\bibitem[{Holland(2019)}]{holland2019robust}
Holland MJ (2019) Robust descent using smoothed multiplicative noise. \emph{The 22nd International Conference on Artificial Intelligence and Statistics}, PMLR, pp 703--711

\bibitem[{Hu et~al(2022)Hu, Ni, Xiao, and Wang}]{hu2022high}
Hu L, Ni S, Xiao H, et~al (2022) High dimensional differentially private stochastic optimization with heavy-tailed data. \emph{Proceedings of the 41st ACM SIGMOD-SIGACT-SIGAI Symposium on Principles of Database Systems}, pp 227--236

\bibitem[{Ibragimov et~al(2015)Ibragimov, Ibragimov, and Walden}]{ibragimov2015heavy}
Ibragimov M, Ibragimov R, Walden J (2015) Heavy-tailed distributions and robustness in economics and finance, vol 214. Springer

\bibitem[{Jin et~al(2019)Jin, Netrapalli, Ge, Kakade, and Jordan}]{jin2019short}
Jin C, Netrapalli P, Ge R, et~al (2019) A short note on concentration inequalities for random vectors with subgaussian norm. \emph{arXiv preprint arXiv:190203736}

\bibitem[{Juditsky et~al(2011)Juditsky, Nemirovski et~al}]{juditsky2011first}
Juditsky A, Nemirovski A, et~al (2011) First order methods for nonsmooth convex large-scale optimization, i: general purpose methods. \emph{Optimization for Machine Learning} 30(9):121--148

\bibitem[{Kamath et~al(2020)Kamath, Singhal, and Ullman}]{kamath2020private}
Kamath G, Singhal V, Ullman J (2020) Private mean estimation of heavy-tailed distributions. \emph{Conference on Learning Theory}, PMLR, pp 2204--2235

\bibitem[{Kamath et~al(2022)Kamath, Liu, and Zhang}]{kamath2022improved}
Kamath G, Liu X, Zhang H (2022) Improved rates for differentially private stochastic convex optimization with heavy-tailed data. \emph{International Conference on Machine Learning}, PMLR, pp 10,633--10,660

\bibitem[{Kasiviswanathan and Jin(2016)}]{kasiviswanathan2016efficient}
Kasiviswanathan SP, Jin H (2016) Efficient private empirical risk minimization for high-dimensional learning. \emph{International Conference on Machine Learning}, PMLR, pp 488--497

\bibitem[{Lowy and Razaviyayn(2023)}]{lowy2023private}
Lowy A, Razaviyayn M (2023) Private stochastic optimization with large worst-case lipschitz parameter: Optimal rates for (non-smooth) convex losses and extension to non-convex losses. \emph{International Conference on Algorithmic Learning Theory}, PMLR, pp 986--1054

\bibitem[{Lugosi and Mendelson(2019)}]{lugosi2019mean}
Lugosi G, Mendelson S (2019) Mean estimation and regression under heavy-tailed distributions: A survey. \emph{Foundations of Computational Mathematics} 19(5):1145--1190

\bibitem[{Nesterov(2018)}]{nesterov2018lectures}
Nesterov Y (2018) Lectures on convex optimization, vol 137. Springer

\bibitem[{Papernot et~al(2021)Papernot, Thakurta, Song, Chien, and Erlingsson}]{papernot2021tempered}
Papernot N, Thakurta A, Song S, et~al (2021) Tempered sigmoid activations for deep learning with differential privacy. \emph{Proceedings of the AAAI Conference on Artificial Intelligence}, pp 9312--9321

\bibitem[{Song et~al(2021)Song, Steinke, Thakkar, and Thakurta}]{song2021evading}
Song S, Steinke T, Thakkar O, et~al (2021) Evading the curse of dimensionality in unconstrained private glms. \emph{International Conference on Artificial Intelligence and Statistics}, PMLR, pp 2638--2646

\bibitem[{Talwar et~al(2015)Talwar, Thakurta, and Zhang}]{talwar2015nearly}
Talwar K, Thakurta A, Zhang L (2015) Nearly-optimal private lasso. \emph{Proceedings of the 28th International Conference on Neural Information Processing Systems-Volume 2}, pp 3025--3033

\bibitem[{Tang et~al(2017)Tang, Korolova, Bai, Wang, and Wang}]{tang2017privacy}
Tang J, Korolova A, Bai X, et~al (2017) Privacy loss in apple's implementation of differential privacy on macos 10.12. \emph{arXiv preprint arXiv:170902753}

\bibitem[{Tao et~al(2022)Tao, Wu, Cheng, and 0015}]{tao2022private}
Tao Y, Wu Y, Cheng X, et~al (2022) Private stochastic convex optimization and sparse learning with heavy-tailed data revisited. \emph{IJCAI}, pp 3947--3953

\bibitem[{Vapnik(1995)}]{1995The}
Vapnik VN (1995) The nature of statistical learning theory. \emph{Springer}

\bibitem[{Wang et~al(2017)Wang, Ye, and Xu}]{wang2017differentially}
Wang D, Ye M, Xu J (2017) Differentially private empirical risk minimization revisited: Faster and more general. \emph{Proc. 31st Annual Conference on Advances in Neural Information Processing Systems (NIPS 2017)}

\bibitem[{Wang et~al(2018)Wang, Gaboardi, and Xu}]{wang2018empirical}
Wang D, Gaboardi M, Xu J (2018) Empirical risk minimization in non-interactive local differential privacy revisited. \emph{Proc. 32nd Annual Conference on Advances in Neural Information Processing Systems (NeurIPS 2018)}

\bibitem[{Wang et~al(2020{\natexlab{a}})Wang, Ding, Xie, Pan, and Xu}]{wang2020differentially1}
Wang D, Ding J, Xie Z, et~al (2020{\natexlab{a}}) Differentially private (gradient) expectation maximization algorithm with statistical guarantees. \emph{arXiv preprint arXiv:201013520}

\bibitem[{Wang et~al(2020{\natexlab{b}})Wang, Xiao, Devadas, and Xu}]{wang2020differentially}
Wang D, Xiao H, Devadas S, et~al (2020{\natexlab{b}}) On differentially private stochastic convex optimization with heavy-tailed data. \emph{International Conference on Machine Learning}, PMLR, pp 10,081--10,091

\bibitem[{Wang et~al(2021)Wang, Zhang, Gaboardi, and Xu}]{wang2021estimating}
Wang D, Zhang H, Gaboardi M, et~al (2021) Estimating smooth glm in non-interactive local differential privacy model with public unlabeled data. \emph{Algorithmic Learning Theory}, PMLR, pp 1207--1213

\bibitem[{Zhang et~al(2017)Zhang, Zheng, Mou, and Wang}]{zhang2017efficient}
Zhang J, Zheng K, Mou W, et~al (2017) Efficient private erm for smooth objectives. \emph{arXiv preprint arXiv:170309947}

\end{thebibliography}

\end{document}